\theoremstyle{plain}
\newtheorem{theorem}{Theorem}[section]
\theoremstyle{definition}
\theoremstyle{remark}
\icmltitlerunning{Calibrating DNN by Annealing Double-Head (Preprint)}
\begin{document}

\twocolumn[
\icmltitle{\textit{Annealing Double-Head}: An Architecture  for Online Calibration of \\ Deep Neural Networks}
%\icmltitle{\textit{Annealing Double-Head}: Calibrating Neural Networks During Training}

\begin{icmlauthorlist}
\icmlauthor{Erdong Guo}{ucsc,itp}
\icmlauthor{David Draper}{ucsc}
\icmlauthor{Maria De Iorio}{ucl,nus}
\end{icmlauthorlist}

\icmlaffiliation{ucsc}{University of California, Santa Cruz, California, USA}
\icmlaffiliation{itp}{Institute of Theoretical Physics, Chinese Academy of Sciences, Beijng, China}
\icmlaffiliation{ucl}{University College, London, London, UK}
\icmlaffiliation{nus}{National University of Singapore, Singapore}

\icmlcorrespondingauthor{Erdong Guo}{eguo1@ucsc.edu}

\icmlkeywords{Calibration, Neural Networks, Machine Learning, Bayesian Statistical Inference}

\vskip 0.3in
]

% this must go after the closing bracket ] following \twocolumn[ ...

% This command actually creates the footnote in the first column
% listing the affiliations and the copyright notice.
% The command takes one argument, which is text to display at the start of the footnote.
% The \icmlEqualContribution command is standard text for equal contribution.
% Remove it (just {}) if you do not need this facility.

\printAffiliationsAndNotice{} % otherwise use the standard text.

\begin{abstract}
Model calibration, which is concerned with how frequently the model predicts correctly, not only plays a vital part in statistical model design, but also has substantial practical applications, such as optimal decision-making in the real world.
%Model calibration, a quantity measuring how frequently the model makes correct predictions, not only plays a vital part in statistical model design, but also has substantial applications in the real world, such as optimal decision processing. 
However, it has been discovered that modern deep neural networks are generally poorly calibrated due to the overestimation (or underestimation) of predictive confidence, which is closely related to overfitting.
In this paper, we propose Annealing Double-Head, a simple-to-implement but highly effective architecture for calibrating the DNN during training. 
To be precise, we construct an additional calibration head-a shallow neural network that typically has one latent layer-on top of the last latent layer in the normal model to map the logits to the aligned confidence.
Furthermore, a simple Annealing technique that dynamically scales the logits by calibration head in training procedure is developed to improve its performance. Under both the in-distribution and distributional shift circumstances, we exhaustively evaluate our Annealing Double-Head architecture on multiple pairs of contemporary DNN architectures and vision and speech datasets. We demonstrate that our method achieves state-of-the-art model calibration performance without post-processing while simultaneously providing comparable predictive accuracy in comparison to other recently proposed calibration methods on a range of learning tasks.
\end{abstract}
% !TeX root = ../doublehead.tex

\section{Introduction}
\label{sec: intro}
Calibration Error (CE), a measure of predicted uncertainty, is a crucial metric for evaluating statistical learning models.
Despite the predictive accuracy success of Deep Neural Networks (DNNs) in various domains of learning tasks \citep{hinton2012imagenet, lecun2015deep, he2015deep}, it has been observed that DNNs are typically "overconfident" in their predictions, leading to miscalibration \citep{guo2017calibration}. 
A well-calibrated model provides accurate predictive confidence, which is the measurement of the likelihood that the prediction is correct, in addition to the prediction. 
Consequently, model calibration takes a greater role in real-world applications, particularly the deployment of decision-making systems.
%Consequently, well-calibrated models gain increasing significance \textcolor{purple}{in the deployment of decision making systems in the real world.} 
For instances, in autonomous driving, human experts must determine whether the prediction can be trusted and whether they should participate in the controlling process based on the confidence predicted by the controlling system to maintain the safety of the driving procedure \citep{bojarski2016end}.
The same principle applies to medical diagnosis and other high-risk tasks \citep{jiang2012calibrating, crowson2016assessing, caruana2015intelligible}. Furthermore, calibrated confidence is beneficial for the out-of-distribution (OOD) analysis \citep{hendrycks2017a, liang2018enhancing, lee2018training, ovadia2019can} and interpretability of the model, as valuable information can be extracted from precise confidence and utilized to comprehend the model's decision. 

As observed by \citeauthor{guo2017calibration}, model capacity and lack of regularization may be major contributors to model miscalibration \citep{niculescu2005predicting}. 
In this direction, diverse methods have been proposed to calibrate the model (during training) using the regularization-like techniques, thereby constraining the prediction's overconfidence \citep{thulasidasan2019mixup, mukhoti2020calibrating, pereyra2017regularizing, muller2019does}. 
Aside from this, the focus of the calibration methods in the post-hoc categories is to learn a parameterized transformation to scale the model's logits using the validation set so that the confidence distribution is better calibrated \citep{platt1999probabilistic, niculescu2005predicting, zadrozny2002transforming, naeini2015obtaining, kull2019beyond, allikivi2020non, wenger2020non, gupta2020calibration}. Furthermore, the idea of adding a smooth approximation term to CE into the objective and then directly optimizing CE in training has recently received considerable interest \citep{kumar2018trainable, karandikar2021soft, cheng2022calibrating, krishnan2020improving}. 

\textbf{Contribution} \hspace{5pt} In this work, we propose an architecture referred as \textit{Annealing Double-Head} (ADH), which is simple to deploy but powerful, to calibrate the DNN during training. 
As depicted in \cref{fig: double_head}, we develop a second shallow neural network, known as the calibration head, to convert the model's last hidden layer's outputs, namely the logits, into calibrated confidences.
In the context that follows, we will refer to the shallow neural network as the calibration head and the original normal network as the main head. 

Our method is based on the observation that \textit{shallow neural networks are well calibrated} but cannot achieve outstanding predictive accuracy due to their limited capacity, while deep neural networks with more hidden layers possess remarkable predictive accuracy but are easily overconfident, resulting in miscalibration \citep{niculescu2005predicting, guo2017calibration}. 
In our architecture, the main head, which has powerful representation ability so to be designed to primarily focuses on predictive accuracy, directs the deep latent layers to provide critical latents for classification, while the calibration head, which is essentially a shallow neural network, converts the latents into calibrated confidences.
Further, we suggest \textit{Annealing}: an online (dynamical) scaling scheme to encourage the calibration head to learn accurate estimate of confidence, which extends the Temperature scaling methods into the in-training setting. 
To be more precise, the confidence estimated by calibration head is amplified by a constant (inverse of Temperature) which decreases or increases smoothly and ends to be one in the training procedure to alleviate the overconfidence or underconfidence effect. 
Finally, we investigate the performance of our architecture in Distributional Shift (DS) scenarios that include classification tasks on Gaussian noise-corrupted datasets and out-of-distribution detection.

\begin{figure}[ht!]
\vskip 0.2in
\begin{center}
\centerline{\includegraphics[width=\columnwidth]{./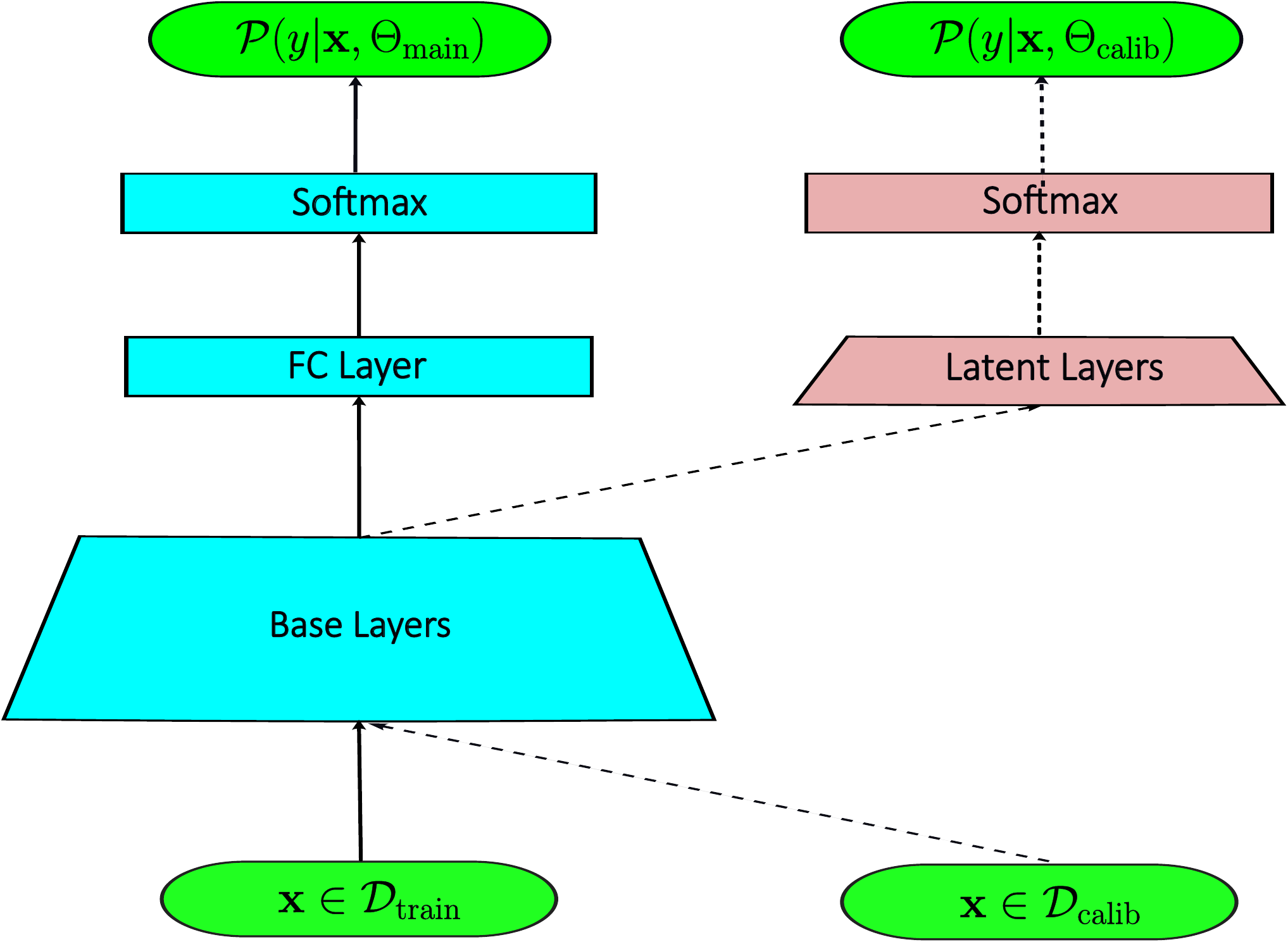}}
\caption{Illustration of Double-Head Annealing. The blue boxes indicate the main head, which is a standard neural network, and the ink boxes represent the calibration head, a shallow neural network. The pipeline of the feed-forward process is represented by dashed arrows, which indicate that the calibration head receives the logits from the main head and returns the calibrated confidences. We remark that calibration head is trained on a distinct dataset referred to as calibration set.}
\label{fig: double_head}
\end{center}
\vskip -0.26in
\end{figure}

Through a series of experiments on typical Vision and Speech datasets and network architectures (\cref{sec: results}), we analyze the property of our method and compare its performance to that of the recently proposed methods including MMCE, Brier Loss, and Focal Loss. 
On a variety of tasks and model architectures, we demonstrate that our method obtain the state-of-the-art results on model calibration. Moreover, we evaluate the performance of our method on model calibration under conditions of  distribution shift via various experiments on different datasets and model combinations. It is shown that our method provide competitive or even better results in comparison to the other methods listed in the experiments.   

%In our framework, the main head, which focuses on predictive accuracy, is trained on cross entropy loss to direct the deep latent layers in extracting critical features from the samples, while the calibration head is used to learn a suitable map to transform the main model's outputs to the aligned confidences.
%During training of the main head, a separate calibration dataset is utilised to train the calibration head dynamically. 
%In our framework, we let the main head and the calibration head to focus on the predictive accuracy and calibration, respectively, the main head is to guide the backbone of the model to extract essential features of the samples, focusing on the ac
%
%Our architecture is versatile so that the existing calibration methods can be combined into our framework by implementing the calibration tricks on the calibration head. 
%
%, however, the calibration head which is a shallow neural networks is used to transform the features provided by main model, into the aligned confidence space,
 
%The main head and calibration head are trained alternatively on training set and calibration set respectively. 

%\begin{itemize}
%    \item talk about the annealing; (\textcolor{red}{done!})
%    \item talk about the gaussian noise augmentation; (\textcolor{red}{done!})
%    \item \textcolor{red}{Talk about the usage of calibration; OOD}
%    \item \textcolor{red}{Summarize our results;}
%\end{itemize}

\subsection{Related Works}
As it is vital to develop calibrated models for decision-making tasks, numerous novel methods have been proposed for the calibration of AI models recently. All of these methods can be roughly divided into the following classes:

\textbf{Regularization:}\hspace{5pt} This category contains, to our knowledge, Label Smoothing, AvUC loss in the SVI setting, Mixup, Soft Objective, Entropy Regularization (ER). Label smoothing and Mixup tend to regularize the DNN to prevent overconfidence \citep{muller2019does,  thulasidasan2019mixup}. \citeauthor{zhang2022and} show that Mixup becomes more effective in the high-dimensional setup. ER employs the negative entropy term to penalize centralized confidence distribution in order to estimate a well-calibrated confidence distribution \citep{pereyra2017regularizing}. The same reasoning follows in the Focal loss which is proved to be an upper bound for the entropy-penalized KL loss \citep{mukhoti2020calibrating}. In \citep{wang2021rethinking}, the authors investigate the property of the regularization based methods in depth. 

\textbf{Calibration Objectives:}\hspace{5pt} By incorporating CE-quantifying terms into the loss function, the methods in this class motivate the DNN to optimize both CE and predictive accuracy simultaniously. MMCE expresses CE as an optimizable probability measure utilizing RHKS \citep{kumar2018trainable}. In \citep{krishnan2020improving}, the authors derive the so-called AvUC loss function from a differential measure they proposed in their paper. It is stated that Mean Square Error loss is superior to Cross-Entropy in the work by \citeauthor{hui2020evaluation}. In addition, In \citep{karandikar2021soft}, the authors propose a differential loss based on a continuous binning operation. \citeauthor{zhang2022and} suggest that the loss function with pairwise constraints is advantageous for calibration. 
%The methods in this class direct the DNN to focus on CE, not only predictive accuracy, by combining terms quantifying CE into loss function. MMCE formulates CE as a optimizable (tractable) probability measure using the RHKS \citep{}. In \citep{}, the authors devised the so called AvUC loss function based on a differential measure they proposed. It is argued that Mean Square Error loss works better than the Cross-Entropy in calibration in \citep{}. Several other interesting ideas belong to this categories and we encourage the readers who are interested in this topic refer the following work: \citep{}.

\textbf{Post-Processing Methods:}\hspace{5pt} 
Post-Processing techniques typically fit a separate module on a validation dataset after training to rescale the confidences. Temperature scaling, which is a special case of Platt scaling \citep{guo2017calibration, platt1999probabilistic}, is the most effective approach among them, but it fails in the DS case \citep{ovadia2019can}.
Other popular ways include histogram binning, isotonic regression, and Bayesian binning into quaniles, all of which were initially proposed for binary classification but can be easily extended to multi-class classification \citep{niculescu2005predicting, zadrozny2002transforming, naeini2015obtaining, kull2019beyond, allikivi2020non, wenger2020non, gupta2020calibration}. %This category contains the following list of additional intriguing works:, and we encourage readers 
%Post-Processing methods mostly reduce the CE by training another component on a validation dataset. The most effective approach among them, is the temperature scaling which is a special case of Platt scaling \citep{}. Some other effective methods are: histogram binning, isotonic regression, Bayesian binning into quaniles, which originally proposed in the binary classification case are easily extend to multiclass case. 

\textbf{Bayesian Model Averaging:}\hspace{5pt} The Bayesian framework is a potent instrument for evaluating predictive uncertainty due to its versatility in uncertainty analysis \citep{draper1995assessment}. In addition, it is demonstrated that coherent bayesian is inherently calibrated \citep{dawid1982well}. Despite the computational cost, Bayesian Neural Networks that take architecture uncertainty into account are promising for providing accurate uncertainty estimates. For approximated bayesian computation, a variety of techniques, such as variational approximation, are recommended for reducing computational complexity. 
In \citep{louizos2017multiplicative}, the authors examine the application of variational approximation to the calibration task and report that model averaging achieves better performance than a single model in terms of both accuracy and calibration.
Incorporating uncertainty into predictions is also possible by averaging the predictions of multiple models, also known as ensembles \citep{lakshminarayanan2017simple, wen2020batchensemble, dusenberry2020efficient}. 
Due to space constraints, we are unable to discuss all of the fascinating studies on model calibration published in recent years; thus, we will provide a list of other relevant works: \cite{gawlikowski2021survey, wang2021energy, jansen2004calibration, datta2021cal, minderer2021revisiting, ma2021improving, tomani2021towards}.

%\textbf{Metrics on Calibration:}\hspace{5pt}

%\begin{itemize}
%    \item \textcolor{red}{summarize the research state of this area.}
%\end{itemize}

% !TeX root = ../doublehead.tex

\section{Background}
\label{sec: bg}
\textbf{Setup}\hspace{5pt} This work focuses on the multi-class classification.
The training dataset is denoted by $\mathcal{D}_{\text{train}} = \{(\mathbf{x}^{(i)}, y^{(i)})\}_{i=1}^{M}$, which consists of $M$ i.i.d. samples following the ground truth distribution $\mathcal{P}(\mathbf{X}, Y)$. 
Here $\mathbf{x}^{(i)}\in\mathbb{R}^{N}$ and $y^{(i)}\in\{1, \cdots n\}$ represent the feature vector and the class label of the $i$'th sample, respectively. 
On a training set, a $n$-classes classification DNN learns a transformation $f_{\theta}(\cdot)$ mapping a feature vector $\mathbf{x}^{(i)}$ to a class probability vector $p^{(i)}$ by optimizing the loss function, which is cross-entropy in our setup. 
In the prediction stage, a predicted tuple $(\hat{y}, \hat{p})$ for a given sample $\mathbf{x}$ is provided, where $\hat{y} = \arg\!\max_j{\hat{p}_{j}(\mathbf{x})}$, and $\hat{p} = f_{\theta, \hat{y}}(\mathbf{x})$. $\hat{y}$ is the predicted label, and $\hat{p}$ is its confidence estimate. Intuitively, the confidence provided by the well-calibrated model matches up precisely to the probability that the prediction is correct, therefore we define the exact calibration criterion as 
\begin{align}
\label{eq: calib}
    \mathcal{P}(\hat{Y} = y | f_{\theta, \hat{y}}(\mathbf{X})=p) = p, \quad \forall p\in [0, 1].
\end{align}
%It is easy to show that above definition is equivalent to 
%\begin{align}
%\label{eq: calib_v2}
%\mathbb{E}_{\hat{Y}|\mathbf{X}}[\hat{Y}|f_{\theta, \hat{y}}(\mathbf{X})=p] = p, \quad \forall p\in [0, 1].
%\end{align}

Although we can use Def. \ref{eq: calib} to determine whether a model is perfectly calibrated in theory, we still need a metric to evaluate how far a model is from being perfectly calibrated in practice.

\textbf{Metrics}\hspace{5pt} Various novel metrics have been proposed to measure the distance between the left and right side of Eq. \ref{eq: calib} \citep{roelofs2022mitigating, kumar2019verified, gupta2020calibration}. One frequently used metric is the Expected Calibration Error (ECE), which is defined as the expectation of the $L^{1}$ norm of the difference between the confidence and accuracy.
\begin{align}
\label{eq: ece}
    \text{ECE}[f_{\theta}(\cdot)] = \mathbb{E}_{f_{\theta, \hat{y}}(\mathbf{X})}[|p - \mathcal{P}(\hat{Y}=y|f_{\theta, \hat{y}}(\mathbf{X})=p)|].
\end{align}
Due to the discrete nature of the practical dataset, binning approximation of \cref{eq: ece} is commonly used,
\begin{align}
\label{eq: ece_bin}
    \text{ECE}_{\mathcal{D}_{\text{test}}} = \sum_{l=1}^{L}\frac{|B_{l}|}{|\mathcal{D}_{\text{test}}|}|\text{acc}(B_{l}) - \text{conf}(B_{l})|, 
\end{align}
where $\text{acc}(\text{B}_{l})$ and $\text{conf}(\text{B}_{l})$ represent, respectively, the average accuracy and average confidence of the samples located in $\text{B}_{l}$. $|\text{B}_{l}|$ denotes the number of samples in the $\text{B}_{l}$ bin. 

\textbf{Empirical Calibration Histogrm}\hspace{5pt} The empirical calibration histogram, which depicts the gap between the model's confidence and accuracy, is a valuable depiction of model calibration for understanding the distribution of miscalibration \citep{degroot1981assessing, niculescu2005predicting}.
It plots the accuracy against confidence, with perfect calibration resulting in a diagonal line.  
%\subsection{Metrics: Expected Calibration Error}
%\label{subsec: theory}
%\subsection{Reliability Diagrams}
%\label{subsec: diagrams}
%\begin{definition}
%\label{def:inj}
%A function $f:X \to Y$ is injective if for any $x,y\in X$ different, $f(x)\ne f(y)$.
%\end{definition}
%Using \cref{def:inj} we immediate get the following result:
%\begin{proposition}
%If $f$ is injective mapping a set $X$ to another set $Y$, 
%the cardinality of $Y$ is at least as large as that of $X$
%\end{proposition}
%\begin{proof} 
%Left as an exercise to the reader. 
%\end{proof}
%\cref{lem:usefullemma} stated next will prove to be useful.
%\begin{lemma}
%\label{lem:usefullemma}
%For any $f:X \to Y$ and $g:Y\to Z$ injective functions, $f \circ g$ is injective.
%\end{lemma}
%\begin{theorem}
%\label{thm:bigtheorem}
%If $f:X\to Y$ is bijective, the cardinality of $X$ and $Y$ are the same.
%\end{theorem}
%An easy corollary of \cref{thm:bigtheorem} is the following:
%\begin{corollary}
%If $f:X\to Y$ is bijective, 
%the cardinality of $X$ is at least as large as that of $Y$.
%\end{corollary}
%\begin{assumption}
%The set $X$ is finite.
%\label{ass:xfinite}
%\end{assumption}
%\begin{remark}
%According to some, it is only the finite case (cf. \cref{ass:xfinite}) that is interesting.
%\end{remark}
%restatable

\section{Annealing Double-Head}
\label{sec: doublehead}
In this section, we provide a detailed explanation of Annealing Double-Head and discuss its properties. 
\subsection{Architecture}
\label{sub: architecture}
In the Double-Head architecture, the logits $\mathbf{z}(\mathbf{x})$ supplied by the latent layers preceding the output layer are transformed into predictive probabilities by the calibration head.
%In our design, the calibration head is implemented by a three layers fully connected neural network: one input layer whose dimension is the same as the logits, one latent layer whose dimension is one half of the dimension of input layer, and one output layer with dimension to be the number of classes. We find that this structure perform exceptionally well in our experiments(\cref{sec: results}). However, the structure of the calibration head can be further optimized using the validation set.
In our design, the calibration head is implemented by a fully connected neural network with three layers: an input layer whose dimension is the same as the logits, a latent layer whose dimension is one half of the dimension of the input layer, and an output layer whose dimension is the number of classes. In our experiments, this structure performe remarkably well (\cref{sec: results}). 
We use the training dataset to optimize the parameters of the main head, while the calibration head is trained on a separate data set, known as the calibration set, to prevent overfitting. 
Since the calibration set is never viewed by the main head during training, it can be reused as the validation set to fine-tune the main head's structure and hyper-parameters after training. In our experiments, we choose the validation set to be the calibration set.

The calibration head and main head are trained simultaneously on their respective calibration set and training set.
In particular, we interleave the optimization of parameters in the main head and the calibration head by updating the calibration head's parameters with one step for every $k$ steps of updates to the main head's parameters.
%, where $k$ has a default value of $5$ in our trials.

We observe that the learning rate, which can be fine-tuned on a separated validation set, is a crucial hyperparameter for the calibration head's performance. 
The empirical rule for the ratio between the learning rates of the main head and the calibration head is approximately $1$ to $100$, according to our numerical analysis.
%We summarize an empirical rule for setting the learning rates as the ratio of the learning rates of the main head and calibration head being approximately $5$ to $10$.

\subsection{Annealing and Augmentation} 
During the optimization of the calibration head, we also employ an evolving constant $\beta$ to dynamically rescale the confidence during the learning of the model in order to prevent overconfidence (or underconfidence). We do not want the learned map to be overscaled, as this would result in biased estimation, so that the absolute difference between the scaling factor $\beta$ and $1$ decreases and converges to $0$ as the number of training steps increases in each epoch.

In our scheme, the scaling factor $\beta_{t}$ at step $t$ is defined by the following \cref{eq: dt}, 
\begin{align}
\label{eq: dt}
    \beta_{t} = \beta_{0} - (\beta_{0} - 1) \times \frac{t}{s},
\end{align}
where $\beta_{0}$ is the initial scaling factor and $s$ is the number of training steps in one epoch. 
We summarize our Anneal Double-Head method in \cref{alg: adh}.

\begin{algorithm}[tb]
\caption{Annealing Double-Head}
\label{alg: adh}
\begin{algorithmic}[1]
   \STATE{\bfseries Input:}\\
   Training and Validation Dataset: $\mathcal{D}_{\text{training}}$, $\mathcal{D}_{\text{valid}}$;\\ 
   Learning Rates: $\lambda_{\text{main}}$, $\lambda_{\text{calib}}$;\\
   Initial Scaling Factor $\beta_{0}$;\\
   Clibration Period: $k$;\\
   %Turn on Augmentation: $Aug$ (Boolean type);
   \STATE{\bfseries Initialization:} \\
   Parameters $\Theta_{\text{main}}$ and $\Theta_{\text{calib}}$ in main and calibration head
   \FOR{$epoch=1$ {\bfseries to} $\text{EPOCHS}$}
   \FOR{$t = 1$ {\bfseries to} $\text{STEPS}$}
   \STATE Sample a batch $\mathbf{X}_{\text{train}}$ from $\mathcal{D}_{\text{train}}$
   \STATE update $\Theta_{\text{main}}^{t+1} \xleftarrow{} \Theta_{\text{main}}^{t} - \lambda_{\text{main}}\nabla\mathcal{L}_{\text{main}}$
   \IF{$t \text{ mod } k = 0$}
   \STATE update $\beta_{t}$ by \cref{eq: dt}
   \STATE Sample a batch $\mathbf{X}_{\text{calib}}$ from $\mathcal{D}_{\text{calib}}$
   \IF{Turn on Augmentation}
   \STATE $\mathbf{X}_{\text{calib}} \xleftarrow{} g(\mathbf{X}_{\text{calib}})$ \\
   where $g(\cdot)$ is the augmentation map
   \ENDIF
   \STATE $\mathbf{Z}_{\text{calib}} \xleftarrow{} \beta_{t} \mathbf{Z}_{\text{calib}}$
   \STATE update $\Theta_{\text{calib}}^{t+1} \xleftarrow{} \Theta_{\text{calib}}^{t} - \lambda_{\text{calib}}\nabla\mathcal{L}_{\text{calib}}$
   \ENDIF
   \ENDFOR
   \ENDFOR
\end{algorithmic}
\end{algorithm}

For the Distribution Shift case, random perturbations are added to the inputs of the calibration head to reduce the confidence of the OOD samples. As a result of separating the calibration component committed for calibration from the model's main architecture, our architecture has the advantage that the training calibration head with the augmented samples does not influence the learning progress of the main head.

%For the Distribution Shift and OOD scenarios, we augment the inputs of the calibration head by adding random perturbations into the inputs to decrease the confidence of the out-of-distribution samples. The advantageous of our architecture is that the training calibration head with the augmented samples does not affect the learning progress of the main head since we separate the calibration component responsible for the calibration with the main head of the model.   

%We note that the most important hyper-parameter in the calibration head is the learning rate. We summarize an empirical rule on the learning rate setting is the ratio of the learning rate of main head to the learning rate of calibration head to be about $10$.
%In the Double-Head architecture, we introduce a second component referred to as the calibration head, which consists of several fully connected layers, to convert the logits $z i(mathbfx)$ provided by the latent layers preceding the output layer into predictive probability. Typically, we implement two layers of fully connected neural networks in the calibration head, which performs exceptionally well. Despite the fact that the specific structure of the calibration head is a hyper-parameter that can be optimized by the validation set, it is not particularly sensitive to the specific implementation.
%\textbf{Why Annealing Double-Head works?}\hspace{5pt} 

\subsection{Why Annealing Double-Head works?}
\label{subsec: why_work}
As is known, shallow neural networks are properly calibrated but have limited predictive power due to their restricted capacity, whereas modern deep neural networks offer superior predictive accuracy but poor calibration. How therefore can the two benefits of shallow and deep neural networks, namely excellent model calibration and strong predictive power, be combined into a single model? 
To overcome this challenge, we develop the Annealing Double-Head architecture, in which the confidence provided by the deep layer component, also known as the main head, is calibrated by a shallow layer component, also known as the calibration head. 
Due to the powerful deep neural layers in the main head, state-of-the-art classification accuracy can be reached, whilst the calibration head, made of shallow layers, enable estimation of precise predictive uncertainty based on the logits from the main head. 

For the DNNs using Negative Log Likelihood (NLL) as the objective, such as cross-entropy, it tries to boost the logits during the later period of training, when the training accuracy is high, by amplifying the weights in order to reduce the training NLL even further \citep{mukhoti2020calibrating}. For this reason, regularization-related methods that tend to constrain the weight norm to be small could be advantageous for the calibration task. 

Our method for addressing overconfidence (or underconfidence) entails dynamically scaling up (or down) the logits produced by calibration head by a varying factor $\beta$ to align the predictive confidence distribution with the ground truth confidence distribution.
The scale factor $\beta$ is set to asymptotically converge to $1$ at the completion of training in order to prevent a global scaling of confidence magnitude. In addition, dynamic rescaling of logits results in adaptive gradients during training, as evidenced by the following statement:
\begin{theorem}    
\label{the: annealing_gradient}
Let $\mathcal{L}(\mathbf{z}, y)$, denote the cross-entropy loss of a sample pair, 
$(\mathbf{x}, y)$ denote a sample and its label, and $\mathbf{z} =f_{\theta}(\mathbf{x})$ represent the logit of the sample $\mathbf{x}$.  
Given the assumption that $y = \arg\max_{j}(\mathbf{z}_{j})$, we have
\begin{align*}
&\partial_{\theta}{\mathcal{L}(\beta \mathbf{z}, y)} = \sum_{j}\gamma_{j}\beta\frac{\partial{\mathcal{L}(\mathbf{z}, y)}}{\partial{z_{j}}}\partial_{\theta}{z_{j}}, \\
&\begin{cases}
c_{0, i} \leq \gamma_{i} \leq c_{1, i} \quad \text{ if } i = y \\
c_{2, i} \leq \gamma_{i} \leq c_{3, i} \quad \text{ if } i \neq y
\end{cases}
\end{align*}
where
\begin{align*}
c_{0, i} = &\frac{1}{n}\exp{[(1-\beta)(z_{i} - \mathbf{z}_{(1)}) - (\mathbf{z}_{(n-1)} - \mathbf{z}_{(1)})]} \\
c_{1, i} = &n\exp{[(1-\beta)(z_{i} - \mathbf{z}_{(n-1)}) + (\mathbf{z}_{(n-1)} - \mathbf{z}_{(1)})]} \\
c_{2, i} = &(\frac{1}{n} + \frac{n-1}{n}\exp{(\mathbf{z}_{(1)} - \mathbf{z}_{(n)})}) \\
&\times\exp{[(1 - \beta)(\mathbf{z}_{(n)} - z_{i})]} \\
c_{3, i} = &(1 + (n - 1)\exp{(\mathbf{z}_{(n-1)} - \mathbf{z}_{(n)})})\\
&\times\exp{[(1 - \beta)(\mathbf{z}_{(n)} - z_{i})]}.
\end{align*}
\end{theorem}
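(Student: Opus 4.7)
The plan is to apply the chain rule, identify $\gamma_j$ with a ratio of softmax gradients at $\beta\mathbf{z}$ versus at $\mathbf{z}$, and then bound that ratio using the extreme order statistics of the logits. Using the standard identity $\partial\mathcal{L}/\partial z_j = p_j(\mathbf{z}) - \delta_{jy}$ for cross-entropy, with $p_j(\mathbf{z}) = e^{z_j}/\sum_k e^{z_k}$, the chain rule applied to $\mathcal{L}(\beta\mathbf{z}, y)$ gives
\begin{align*}
\partial_\theta \mathcal{L}(\beta\mathbf{z}, y) = \sum_j \bigl(p_j(\beta\mathbf{z}) - \delta_{jy}\bigr)\,\beta\,\partial_\theta z_j,
\end{align*}
so matching against the claimed form forces $\gamma_j = (p_j(\beta\mathbf{z}) - \delta_{jy})/(p_j(\mathbf{z}) - \delta_{jy})$.

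I would then split into two cases, rewriting each ratio in terms of $S(\mathbf{z}) := \sum_k e^{z_k}$ and $T(\mathbf{z}) := \sum_{k \neq y} e^{z_k}$. For $i \neq y$ the expression collapses to $\gamma_i = e^{(\beta-1)z_i}\,S(\mathbf{z})/S(\beta\mathbf{z})$; for $i = y$, using $1 - p_y = T/S$ gives $\gamma_y = \bigl[T(\beta\mathbf{z})/S(\beta\mathbf{z})\bigr]\bigl[S(\mathbf{z})/T(\mathbf{z})\bigr]$. The crux is to bound $S$ and $T$ (at both $\mathbf{z}$ and $\beta\mathbf{z}$) by their extreme values. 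Using $z_y = \mathbf{z}_{(n)}$ from the $\arg\max$ hypothesis, and assuming implicitly $\beta > 0$ so that the same index maximises $\beta\mathbf{z}$, the elementary estimates
\begin{align*}
e^{\mathbf{z}_{(n)}} \le S(\mathbf{z}) &\le e^{\mathbf{z}_{(n)}} + (n-1)e^{\mathbf{z}_{(n-1)}} \le n e^{\mathbf{z}_{(n)}}, \\
(n-1)e^{\mathbf{z}_{(1)}} \le T(\mathbf{z}) &\le (n-1)e^{\mathbf{z}_{(n-1)}},
\end{align*}
together with the analogous bounds at $\beta\mathbf{z}$, supply every ingredient needed. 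Plugging the right pairing of a numerator bound with a denominator bound into the expressions from the previous step and telescoping the exponentials should produce exactly $c_{2,i}$ and $c_{3,i}$ for $i \neq y$, and $c_{0,i}$, $c_{1,i}$ for $i = y$ once $z_i = \mathbf{z}_{(n)}$ is substituted.

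The main obstacle is essentially bookkeeping: four constants arise from four specific pairings (for instance, $c_{3,i}$ uses the sharp upper bound on $S(\mathbf{z})$ paired with $S(\beta\mathbf{z}) \ge e^{\beta\mathbf{z}_{(n)}}$, whereas $c_{1,y}$ uses the looser $S(\mathbf{z}) \le n e^{\mathbf{z}_{(n)}}$ paired with $T(\beta\mathbf{z}) \le (n-1)e^{\beta\mathbf{z}_{(n-1)}}$), and one must track the direction of each inequality carefully so that the resulting exponents assemble into precisely the stated forms $(1-\beta)(\mathbf{z}_{(n)} - z_i)$ and the like. A minor point worth flagging is that the theorem implicitly requires $\beta > 0$, since otherwise $\arg\max_j \beta z_j \neq y$ and the bounds anchored at $\mathbf{z}_{(n)}$ break; this is consistent with the annealing schedule in \cref{eq: dt}, which keeps $\beta_t$ positive throughout training when $\beta_0 > 0$.
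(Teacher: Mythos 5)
Your proposal is correct and takes essentially the same route as the paper's proof: apply the chain rule to write $\partial_\theta\mathcal{L}(\beta\mathbf{z},y)$ with $\gamma_j$ as the ratio of the softmax-gradient terms at $\beta\mathbf{z}$ and at $\mathbf{z}$, then bound the sums $S$ and $T$ (at both $\mathbf{z}$ and $\beta\mathbf{z}$) by their extreme order statistics and pair the bounds to recover $c_{0,i}$ through $c_{3,i}$. Your observation that $\beta>0$ is implicitly required (so that the same index maximizes $\beta\mathbf{z}$ and the inequalities anchored at $\mathbf{z}_{(n)}$ hold) is a valid point that the paper leaves unstated.
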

We note that $\mathbf{z}_{(k)}$ means the $k^{th}$ order statistics of logit vector $\mathbf{z}$. Proof of above proposition can be found in \cref{subsec: grad_proof}. 

%In the preceding \cref{the: annealing_gradient}, we assume that $y=\arg\max_{j}{\mathbf{z}_{j}}$, and we argue that this assumption is justified since, at least in our experiments, the main head can rapidly converge to the high-accuracy stage, when the assumption is a good approximation of the actual scenario.
In the preceding \cref{the: annealing_gradient}, we assume that $y=\arg\max_{j}{\mathbf{z}_{j}}$, and we argue that this assumption is justified since, at least in later periods of training, when the calibration head converges to the high-accuracy state, the assumption is a good approximation of the actual scenario.
Since $\mathbf{z}_{(n)}$ dominates $\mathbf{z}$, \cref{the: annealing_gradient} demonstrates that the upper bound and lower bound of the rescaled gradient $||\partial_{\theta}{\mathcal{L}(\beta \mathbf{z}, y)}||$ are primarily governed by the factor $\exp{[(1-\beta)\mathbf{z}_{(n)}]}$. 
Therefore, if $\beta > 1$, the norm of the gradients $||\partial_{\theta}{\mathcal{L}(\beta \mathbf{z}, y)}||$ is rescaled downwards, otherwise it is scaled upwards. This behavior is numerically verified in Fig. \ref{fig: grads_hist_end}, and Appendix \ref{subsec: varying_beta} provides more discussion.

As depicted in Fig. \ref{fig: entropy_vs_T}, the adapted gradient during the training of the calibration head results in the varied entropy of the converged confidence distribution: an increase in the annealing factor $\beta$ will lead to a greater entropy.
\cref{subsec: annealing} provides a more exhaustive numerical analysis.
%\textcolor{red}{Several sentences talking about the relation between the rescaled gradients and confidence distribution.}  

The idea of calibrating neural networks by penalizing the entropy of the networks' confidence distributions has been explored in several works, such as \citep{pereyra2017regularizing} and \citep{mukhoti2020calibrating}. 
And these findings are compatible with the following theorem, in which we demonstrate that $\text{ECE}_{2}$, the second-order ECE, is bounded by a constant term $\text{C}[\hat{p}(x]$, which is a function of the confidence distribution $\hat{p}(x)$, minus the entropy $\text{H}[\hat{p}(x)]$ of the confidence distribution $\hat{p}(x)$.

\begin{theorem}
\label{the: ece_upper_bound}
Let $\hat{p}(\cdot)$ denote a function which maps a sample $\mathbf{x}$ to its confidence $\hat{p}(\mathbf{x})$, then the second-order ECE: $\text{ECE}_{2}$, which is defined as the square root of the second moment of absolute difference between the confidence and accuracy can be bounded as follows,  
\begin{align*}
    \text{ECE}_{2}[\hat{p}(\mathbf{x})] \leq 
    & (\text{C}[\hat{p}(x)] - 2 H[\hat{p}(x)])^{1/2}, 
\end{align*}
where 
\begin{align*}
&\text{C}[\hat{p}(x)] = 3 - 2\mathbb{E}_{\hat{p}(x)}[\log{\hat{p}(x)}] - \mathbb{E}_{\hat{p}(x)}[f(\hat{p}(x))], \\
&\text{H}[\hat{p}(\mathbf{x})] = \mathbb{E}_{\hat{p}(\mathbf{x})}[\log{f(\hat{p}(\mathbf{x})}].
\end{align*}
Actually $H[\hat{p}(\mathbf{x})]$ is just the shannon entropy of $\hat{p}(\mathbf{x})$, and $f(\hat{p}(\mathbf{x}))$ stands for the probability mass function (PMF) of random variable $\hat{p}(\mathbf{x})$.
\end{theorem}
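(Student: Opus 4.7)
The plan is to unpack the definition of $\text{ECE}_{2}^{2}$, bound the pointwise squared deviation between confidence and accuracy by a sum of elementary terms, take expectations with respect to the PMF $f$ of $\hat{p}(X)$, and identify the resulting combination with $C[\hat{p}(x)] - 2 H[\hat{p}(x)]$ before finally taking a square root.

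I would begin by writing
\begin{equation*}
\text{ECE}_{2}^{2} = \mathbb{E}_{\hat{p}(X)}\bigl[(\hat{p}(X) - a(\hat{p}(X)))^{2}\bigr],
\end{equation*}
with $a(p) := \mathcal{P}(\hat{Y}=Y \mid \hat{p}(X)=p) = \mathbb{E}[\mathbf{1}[\hat{Y}=Y]\mid \hat{p}(X)=p]$. Since $\mathbf{1}^{2} = \mathbf{1}$, conditional Jensen gives $a(p)^{2} \leq a(p)$, yielding the pointwise bound $(p - a(p))^{2} \leq p^{2} - 2p\,a(p) + a(p)$. Next I would apply the elementary inequality $x^{2} \leq -2\log x + 4x - 3$ for $x \in (0,1]$ (easily verified since the derivative of the difference is $-2(x-1)^{2}/x \leq 0$ and the difference vanishes at $x=1$) to the $p^{2}$ term, which introduces the $-2\mathbb{E}[\log \hat{p}]$ appearing in $C$ and absorbs the linear-in-$\hat{p}$ contributions into constants. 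Finally, I would handle the PMF-dependent terms by writing every expectation as $\sum_{p} f(p)\, g(p)$ and applying a Gibbs-style inequality on the PMF, namely $\sum_{p} f(p)^{2} + 2\sum_{p} f(p)\log f(p) \leq \text{const}$, which glues together the collision probability $\mathbb{E}[f(\hat{p})]$ and the quantity $H[\hat{p}] = \mathbb{E}[\log f(\hat{p})]$ into the combination $-\mathbb{E}[f(\hat{p})] - 2H[\hat{p}]$. Collecting the constants across these three steps reproduces the $3$ in $C$.

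The main obstacle will be producing the combination $\mathbb{E}[f(\hat{p})] + 2H[\hat{p}]$ with exactly the right coefficients rather than via a trivial $\leq 1$ bound. Choosing the right Jensen-type identity on the PMF is what distinguishes a genuine derivation from the nearly vacuous observation that $(p - a(p))^{2} \leq 1$ already gives $\text{ECE}_{2} \leq 1 \leq \sqrt{C - 2H}$ on the valid range of $\hat{p}$. Verifying that the trivial upper bound $\mathbb{E}[a(\hat{p})] \leq 1$ combined with the log inequality produces precisely the constant $3$ rather than some other value is the delicate bookkeeping step.
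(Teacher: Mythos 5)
Your outline does establish the stated bound, but it does so by a genuinely different route from the paper's proof. The paper expands $(\mathrm{ECE}_2)^2=\sum_{\hat p}f(\hat p)\bigl(a(\hat p)-\hat p\bigr)^2$ into three terms, bounds the accuracy-squared term by $\mathrm{Acc}$ (same $a^2\le a$ step you use) and the confidence-squared term by $1$, and then does all of the real work on the cross term $2\sum_{\hat p}f(\hat p)\,a(\hat p)\,\hat p$, which needs a \emph{lower} bound: writing $a(\hat p)\hat p=f(\hat y=y,\hat p)\,\hat p/f(\hat p)$ and using $x\ge\log x$, the Bonferroni bound $f(\hat y=y,\hat p)\ge \mathrm{Acc}+f(\hat p)-1$, and $\log x\ge 1-1/x$, it is precisely from this term that $-2\mathbb{E}[\log\hat p]$, the collision term $\mathbb{E}[f(\hat p)]=\sum f^2$, and the entropy term emerge, with the $\mathrm{Acc}$ contributions cancelling to leave the constant $3$. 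You instead obtain $-2\mathbb{E}[\log\hat p]$ from the pointwise inequality $p^2\le -2\log p+4p-3$ on $(0,1]$ (which is correct, though the difference $p^2+2\log p-4p+3$ has derivative $+2(p-1)^2/p\ge 0$, i.e.\ it increases to $0$ at $p=1$, not the sign you wrote), drop the cross term via $-2\mathbb{E}[\hat p\,a]\le 0$, bound $4\mathbb{E}[\hat p]\le 4$ and $\mathrm{Acc}\le 1$, and then inject the $f$-dependent terms through $\mathbb{E}[f(\hat p)]+2\mathbb{E}[\log f(\hat p)]\le 1$; the constants indeed total $-3+4+1+1=3$, so the chain closes and the theorem follows. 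The trade-off: your argument is shorter and avoids the paper's shakier steps (the Bonferroni-plus-$\log x\ge 1-1/x$ combination tacitly needs $\mathrm{Acc}+f(\hat p)-1>0$), but, as you yourself anticipate in your final paragraph, your ``Gibbs-style'' step is exactly the trivial slack you hoped to avoid: the entropy and collision terms enter as nonnegative padding rather than being derived from the joint behavior of correctness and confidence, so your proof never uses the coupling between $a(\hat p)$ and $\hat p$ that the paper's cross-term analysis is built around (and that motivates the theorem's interpretation of entropy controlling calibration). In short: correct, more elementary, but it proves the bound only because the bound is weak, whereas the paper's derivation, for all its roughness, is structured to make the entropy term appear for a reason.
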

Proof can be found in \cref{subsec: bound_proof}.
%Moreover, if $y=argmin_{i}(z_{i})$, then we have
%\begin{align}
%||\partial_{\theta}{\mathcal{L}(\beta \mathbf{z})}|| 
%\begin{cases}
%\geq \beta||\partial_{\theta}{\mathcal{L}(\mathbf{z})}|| & \text{ if } \beta \geq 1, \\
%\leq \beta||\partial_{\theta}{\mathcal{L}(\mathbf{z})}||. & \text{ if } \beta \leq 1
%\end{cases}
%\end{align}
This theorem implies that the confidence distribution with a greater entropy tends to have a smaller ECE. However, it should be noted that higher entropy does not necessarily correspond to improved model calibration, as intuitively excessive entropy might lead to underconfidence issue. In \cref{subsec: main_vs_calib}, we examine the entropy of the outputs distribution of both the main head and the calibration head. The entropy of the confidence by calibration head, which is better calibrated, is indeed greater than the confidence by main head.  

%\textcolor{red}{Talk about the combination of Double Head with other calibration methods.}
%\textcolor{red}{\textbf{Outline}}
%\begin{itemize}
%    \item \textcolor{red}{Structure of Double-Head architecture;}
%    \item \textcolor{red}{Intuitive Explanation Why the structure works;}
%    \item \textcolor{red}{The relation between our method and Platt Scaling;}
%    \item \textcolor{red}{Talk about the Simplicity of our method (for example show a piece of codes);}
%\end{itemize}

% !TeX root = ../doublehead.tex

\section{Experiments and Results}
\label{sec: results}

\begin{table*}[!t]
\vskip 0.15in
\begin{centering}
\begin{small}
\begin{sc}
\resizebox{\textwidth}{!}{
\begin{tabular}{c c cc cc cc cc cc}
\toprule
\multirow{2}{*}{\textbf{Dataset}} & \multirow{2}{*}{\textbf{Model}} & \multicolumn{2}{c}{\textbf{Cross Entropy}} & \multicolumn{2}{c}{\textbf{MMCE}} & \multicolumn{2}{c}{\textbf{Brier Loss}} & \multicolumn{2}{c}{\textbf{Focal Loss}} & \multicolumn{2}{c}{\textbf{Double-Head}} \\
 &  & pre T & post T & pre T & post T & pre T & post T & pre T & post T & No Anneal & Anneal \\
\midrule
\multirow{4}{*}{CIFAR-10} & ResNet 50 & 4.28 & 2.17 (1.8) & 3.18 & 2.08 (1.4) & 2.56 & 1.58 (1.1) & 2.73 & 2.25 (1.1) & 1.22 & \bf{0.50} \\
& ResNet 101 & 5.69 & 2.98 (2.0) & 3.71 & 2.03 (1.5) & 2.75 & 2.04 (1.1) & 2.88 & 2.37 (1.1) & 1.32 & \bf{0.65} \\
& DenseNet 121 & 3.43 & 3.12 (1.1) & 3.13 & 2.40 (1.2) & 1.22 & 1.22 (1.0) & 2.65 & 2.02 (1.3) & 1.40 & \bf{1.18} \\
& Wide ResNet 28-10 & 3.22 & 2.38 (1.2) & 2.42 & 1.65 (1.2) & 1.09 & 1.09 (1.0) & 0.84 & 0.84(1.0) & 1.28 & \bf{0.71}\\
 \midrule
\multirow{4}{*}{CIFAR-100} & ResNet 50 & 10.48 & 6.32 (1.2) & 11.29 & 5.67 (1.3) & 5.54 & 5.54 (1.0) & 7.66 & 4.99 (1.1) & 2.52 & \bf{1.04} \\
& ResNet 101 & 10.40 &  6.82 (1.2) & 10.11 & 4.97 (1.3) & 3.43 & 1.57 (3.0) & 5.79 & 2.54 (1.5) & 2.04 & \bf{0.88} \\%8.50(8.74)
& DenseNet 121 & 8.12 & 4.39 (1.2) & 8.97 & 4.82 (1.2) & 4.14 & 4.14 (1.0) & 6.14 & 3.56 (1.1) & 3.38 & \bf{1.51}\\
& Wide ResNet 28-10 & 5.35 & 4.87 (1.1) & 6.32 & 4.65 (1.2) & 3.25 & 3.25 (1.0) & 3.61 & 3.19 (1.1) & 2.40 & \bf{1.53} \\ \midrule
\multirow{4}{*}{SVHN} & ResNet 50 & 2.90 & 1.29 (1.7) & 2.48 & 0.85 (1.6) & 0.90 & 0.90 (1.0) & 1.93 & 1.55 (1.2) & 0.78  & \bf{0.54}\\
& ResNet 101 & 3.10 & 1.09 (1.9) & 2.83 & 1.09 (1.6) & 0.83 & 0.83 (1.0) & 2.22 & 1.07 (1.2) & \bf{0.65} &  0.82\\
& DenseNet 121 & 2.31 & 0.89 (1.5) & 2.33 & 1.07 (1.5) & 1.74 & 1.74 (1.0) & 1.34 & 1.34 (1.0) & 1.17 & \bf{0.76}\\
& Wide ResNet 28-10 & 1.95 & 0.89 (1.5) & 1.90 & 1.76 (1.2) & 1.04 & 1.07 (0.9) & 0.79 & 0.79 (1.0) & 1.28 & \bf{0.65}\\ \midrule
\multirow{2}{*}{ImageNet}
& ResNet 50 & 24.96 & 6.78 (0.6) & 4.56 & 1.41 (1.2) & 1.23 & 1.23 (1.0) & 3.81 & 2.58 (0.8) & 1.26 & \bf{1.13}\\
%& DenseNet 161 & & & & & & & - \\
& ResNet 152 & 12.83 & 3.64 (0.8) & 4.10 & 2.15 (1.1) & 2.72 & \bf{1.54} (1.1) & 6.84 & 2.34 (0.9) & 2.63 & 1.65\\
\midrule
%\multirow{2}{*}{SST Binary} & TreeLSTM & 8.96 & 2.89 & 4.98 & 4.31 & 4.72 & 2.23 & ? & ? & 2.28 &  \\
%& GP CNN & 19.97 & 6.32 & & & 7.62 & 3.98 &  & & 1.42 & 0.92\\
SST Fine Grained & TreeLSTM & 1.91 & 1.43 (1.2) & 1.72 & 1.72 (1.0) & 4.92 & 3.99 (0.9) & 7.37 & 1.73 (0.6) & 1.79 & \bf{1.26}\\
SST Binary & GP CNN & 10.30 & 2.85 (2.5) & 4.49 & 2.29 (1.3) & 2.52 & 2.42 (1.1) & 3.13 & 2.81 (1.1) & \bf{1.43} & 1.84\\
20 Newsgroups & GP CNN & 4.10 & 2.55 (1.1) & 3.59 & 3.06 (1.3) & 2.75 & \bf{2.01} (0.9) & 3.47 & 2.65 (1.2) & 2.46 & 2.24\\ 
\bottomrule
\end{tabular}
}

%%%%%%%%%%%%%%%%%%%%%%%
%%% sst2: best of ours: epoch 51;
%%% 20 newsgroups: best of ours: epoch 40
%%% ECE of EC is estimated for the last epoch model;
%%% ECE of svhn: Res50: 70, Res101: 70, Dense: 200, wideres: 150 (temp), 200 (no temp), 
\end{sc}
\end{small}
\end{centering}
\caption{ECE ($\%$) of multiple methods: CE, MMCE, Brier, FL, and ours with and without TS on different model and dataset combinations is reported. The used temperature is specified between brackets. In the last two columns, the ECE of our own approach without Annealing and with Annealing is displayed.}
\label{tab: ece_max}
\vskip -0.2in
\end{table*}

\begin{figure*}[!t]
\vskip 0.2in
\begin{subfigure}{0.19\textwidth}
 \centering  
 \includegraphics[width=0.95\linewidth]{./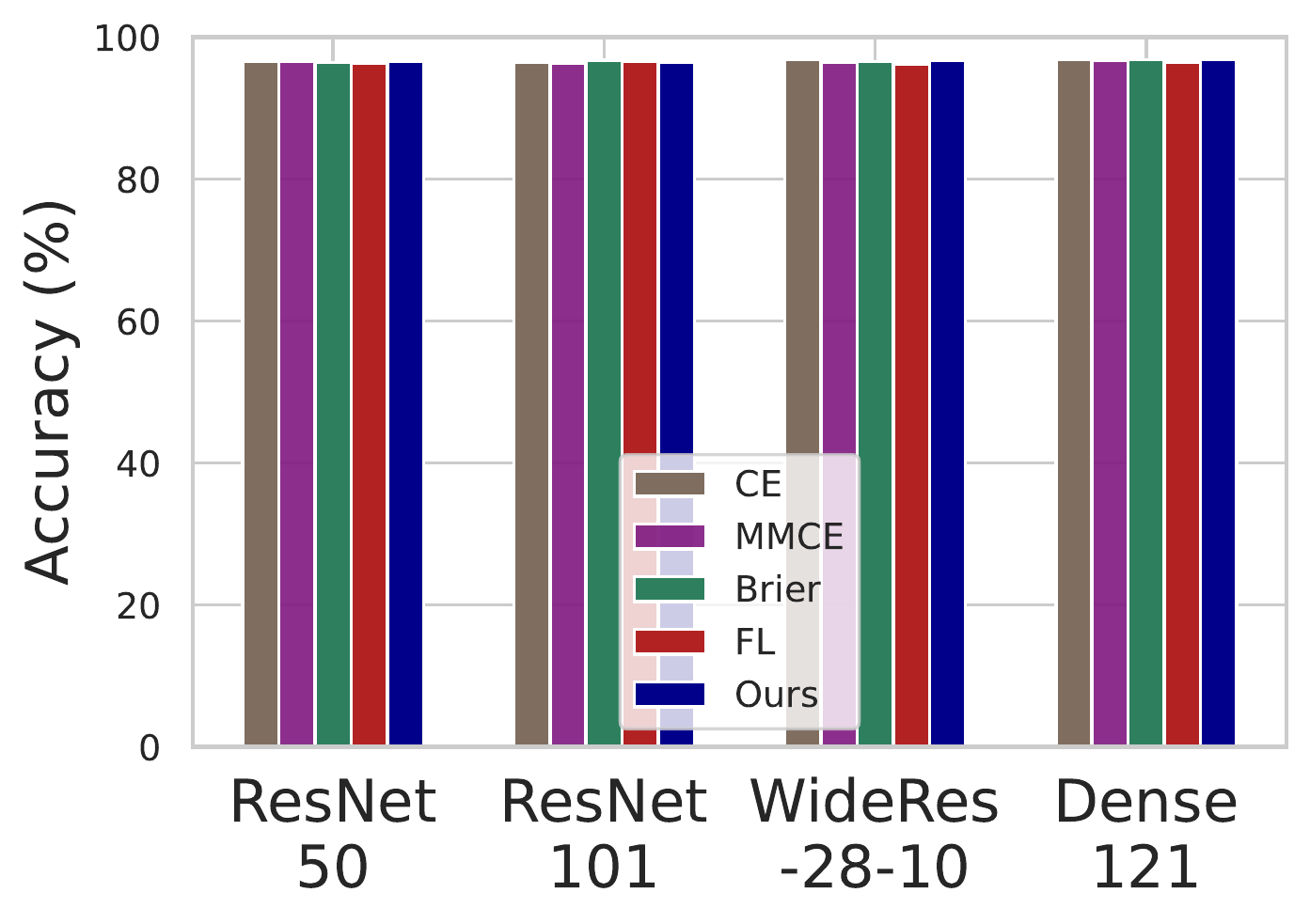}  
  \caption{CIFAR-10}
  \label{fig: acc_cifar10} 
\end{subfigure}
\begin{subfigure}{0.19\textwidth}
  \centering
  \includegraphics[width=0.95\linewidth]{./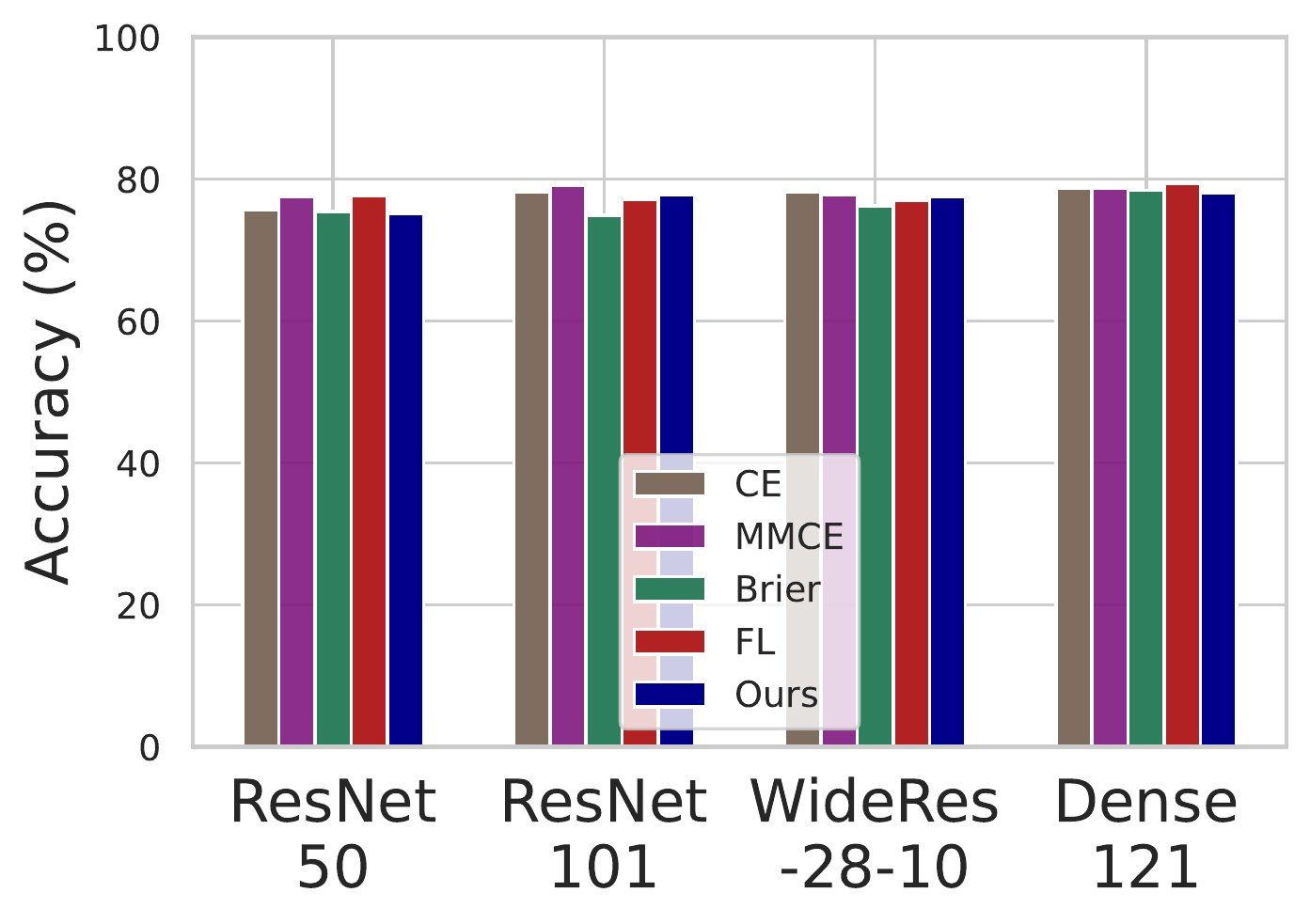} 
  \caption{CIFAR-100}
  \label{fig: acc_cifar100}
\end{subfigure}
\begin{subfigure}{0.19\textwidth}
  \centering
  \includegraphics[width=0.95\linewidth]{./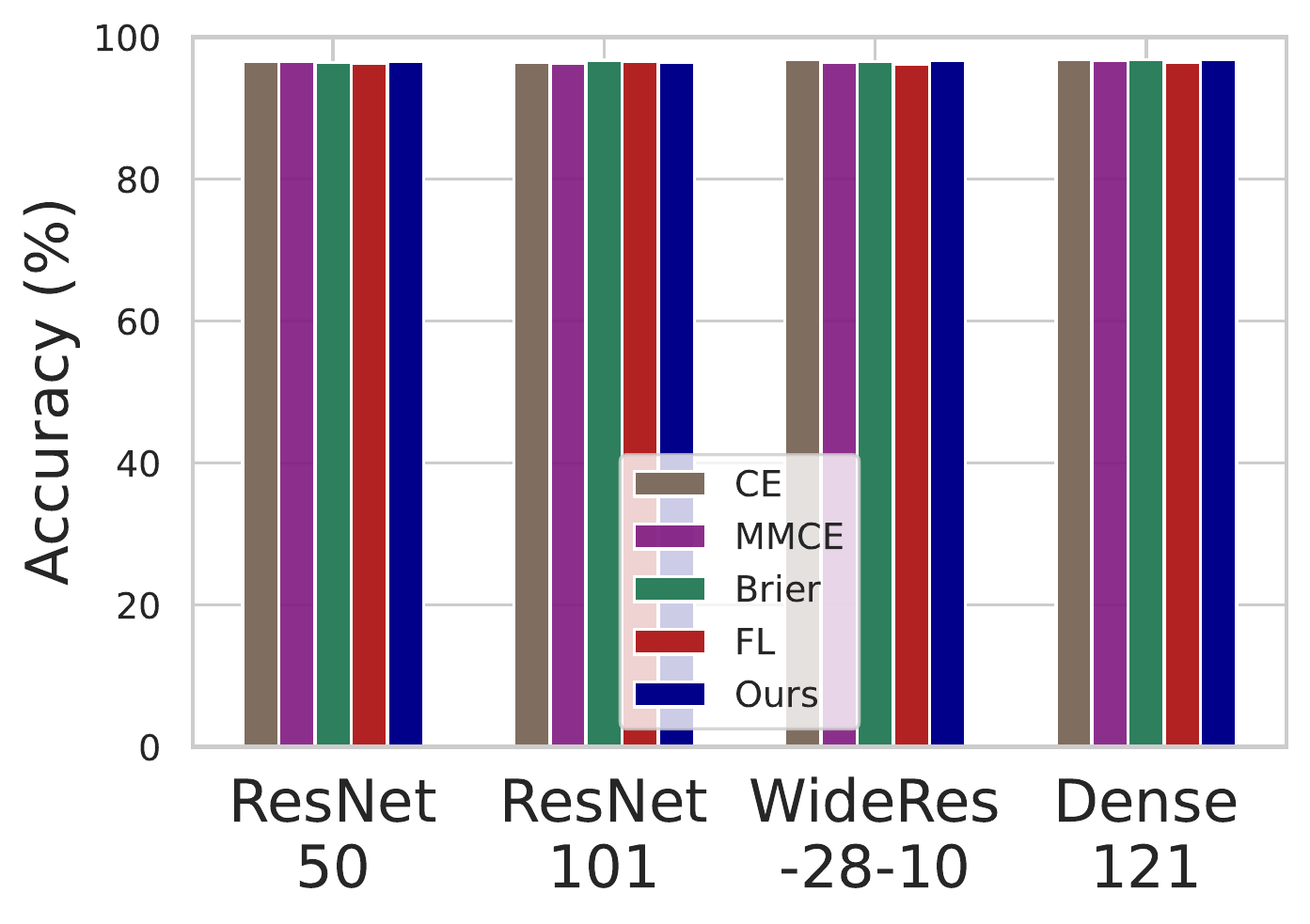} 
  \caption{SVHN}
  \label{fig: acc_svhn}
\end{subfigure}
\begin{subfigure}{0.19\textwidth}
  \centering
  \includegraphics[width=0.95\linewidth]{./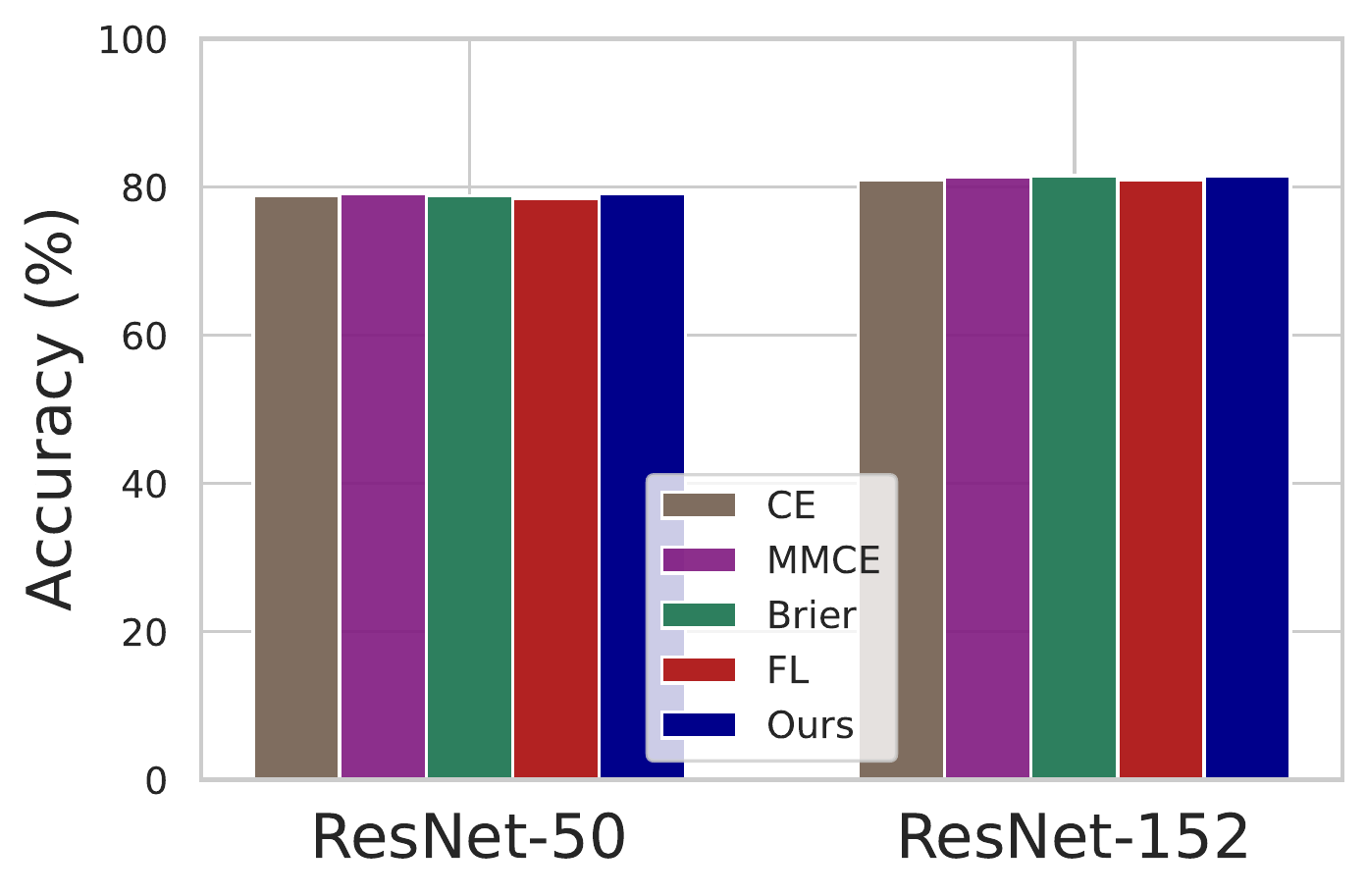} 
  \caption{ImageNet 2012}
  \label{fig: acc_4}
\end{subfigure}
\begin{subfigure}{0.19\textwidth}
  \centering
  \includegraphics[width=0.95\linewidth]{./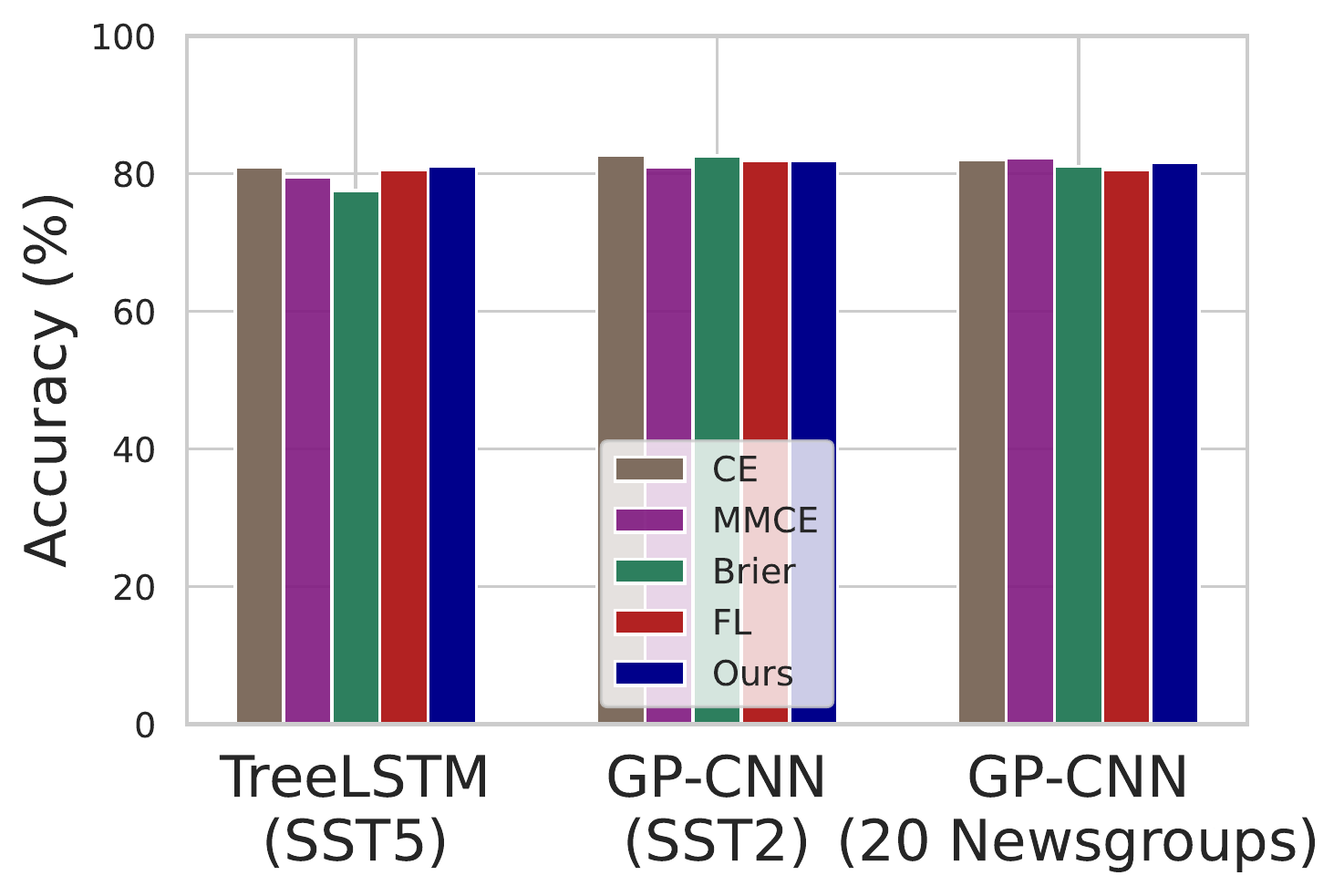} 
  \caption{NLP Datasets}
  \label{fig: acc_nlp}
\end{subfigure}
\caption{Barplots of accuracy on vision and speech datasets. (a), (b) and (c) show the accuracies provided by aformentioned models on vision datasets including: CIFAR-10, CIFAR-100 and SVHN. (d) display the accuracies of ResNet-50 and ResNet-152 on ImageNet, (e) summarizes the accuracies of TextCNN and TreeLSTM on speech datasets: SST Binary, SST Fine Grained and 20 Newsgroups.}
\label{fig: acc_barplot}
\vskip -0.2in
\end{figure*}

To validate the performance of Anneal Double-Head architecture on the calibration of DNNs, we compare our method to some other existing calibration methods including MMCE, Focal Loss, Soft Objective on both Vision and NLP datasets. In our experiments, we scale neural networks architectures from modest to large in order to fully comprehend the scalability of our method. Experiments demonstrate that our method delivers state-of-the-art ECE on multiple datasets and modern architectures with allowable computational overhead, which is controlled by the calibration period hyper-parameter $k$ in our algorithm. %and typically increases running time by less than five percent. 
Detailed analysis of the computational complexity is presented in \cref{subsec: complexity}.

\textbf{Datasets}\hspace{5pt}
Popular vision datasets such as Street View House Numbers (SVHN) \citep{netzer2011reading}, CIFAR-$10$/CIFAR-$100$ \citep{krizhevsky2009learning}, and ImageNet $2012$ \citep{deng2009imagenet} are employed in our studies, as well as frequently used speech datasets such as Stanford Sentiment Treebank (SST) \citep{socher2013recursive}and 20 Newsgroups \citep{Lang95}.
In \cref{sec: exp_app}, you may find further information about these datasets.

%\textbf{Datasets}\hspace{5pt} The datasets used in our experiments are
%\begin{itemize}
%    \item Street View House Numbers (SVHN) \citep{netzer2011reading}: $32 \times 32$ colored images centered around a single character from house numbers in Google Street View images with $10$ classes. 60,000/13,257/26,032 images for train/validation(calibration)/test.
%    \item CIFAR-$10$/CIFAR-$100$ \citep{krizhevsky2009learning}: $32 \times 32$ colored images from $10$ classes. The train/validation(calibration)/test is devided as: 45,000/5,000/10,000. 
%    \item ImageNet $2012$ \citep{deng2009imagenet}: Web images from $1000$ classes. 1.3millon/25,000/25,000 images for train/validation(calibration)/test.
%    \item Stanford Sentiment Treebank (SST) \citep{socher2013recursive}: $11,855$ single sentences extracted from Movie Reviews, parsed as sentence trees and annotated by sentiment. Each sample includes a binary and a fine grained $5$-classes label. Training/validation/test sets contain 6920/872/1821 documents.
%    \item 20 Newsgroups \citep{Lang95}: 20,000 newsgroups documents divided across 20 newsgroups. 15,098/900/3,999 documents for train/validation/test.
%\end{itemize}

\textbf{Models} \hspace{5pt} 
In vision tasks, we employ modern Convolutional Neural Networks (CNN): Residual Networks \citep{he2015deep} with $50$, $101$, and $152$ layers (Res-$50$, Res-$101$, and Res-$152$ respectively), Wide Residual Networks \citep{zagoruyko2016wide} with $28$ layers and a widening factor of $10$ (Res-28-10), and Dense Networks \citep{huang1608densely} with $121$ layers (dense-$121$). Text CNN \citep{kim2014convolutional} and Tree Long Short Term Memory (LSTM) \citep{hochreiter1997long, tai2015improved} are utilized for NLP tasks, with CNN-static architecture and Glove word embedding technique \citep{pennington2014glove} applied to Text CNN. We describe the setup of training in detail in \cref{sec: exp_app}.

\textbf{Comparison Methods}\hspace{5pt}
We assess the performance of our method in comparison to the following recently proposed "end-to-end" calibration techniques:
\begin{itemize}
    \item Maximum Mean Calibration Error (MMCE) \citep{kumar2018trainable} is a trainable measure of the calibration error based on the RKHS kernel;

    \item Brier Loss (BL) \citep{brier1950verification, degroot1983comparison} is the mean square error between the predicted confidence vector and the one-hot encoding vector of the ground truth label, and it is shown that Brier Loss can be decomposed into calibration and refinement which is related to the area under the ROC curve;  
    
    \item Focal Loss (FL) \citep{mukhoti2020calibrating} is shown to be an upper bound on the regularized KL-divergence, and thus encouraging the model to increase entropy so to prevent overfitting during training procedure;
    %\item The soft objective \citep{karandikar2021soft} is a differentiable ECE derived by softening out the binning operation within in the ECE estimator. Equipped with Soft objective as a secondary loss, DNN is able to optimize ECE during training.
\end{itemize}
%\begin{itemize}
%    \item \textcolor{red}{MMCE;}
%    \item \textcolor{red}{Focal Loss;}
%    \item \textcolor{red}{Soft objective;}
%\end{itemize}

\subsection{Calibration Results}
In Table. \ref{tab: ece_max}, we systematically evaluate the ECE ($\%$) of the aforementioned methods (with and without TS) and ours on varies architectures and datasets. With the exception of our own approach, we report the post-processed ECE by TS and the corresponding optimal temperature to it. We demonstrate that \textit{our method, without post-processing, may generally yield the best ECE among those given}. For our method, we additionally present the calibration error without Annealing and find that it still provide fairly modest ECE in comparison to other methods stated. Despite our method's outstanding calibration performance, it may also produce classification accuracies that are comparable to or even better than those of other methods when applied to multiple tasks, as illustrated in Fig. \ref{fig: acc_barplot}.

%\begin{figure}[!ht]
%%\begin{figure}[H]
%\vskip 0.2in
%\centering
%\begin{subfigure}{0.49\columnwidth}
%  \centering
%  \includegraphics[width=0.95\linewidth]{./}  
%  \caption{CIFAR-10}
%  \label{fig: acc_cifar10}
%\end{subfigure}
%\begin{subfigure}{0.49\columnwidth}
%  \centering
%  \includegraphics[width=0.95\linewidth]{./} 
%  \caption{CIFAR-100}
%  \label{fig: acc_cifar100}
%\end{subfigure}\\
%\centering
%\begin{subfigure}{0.49\columnwidth}
%  \centering
%  \includegraphics[width=0.95\linewidth]{./} 
%  \caption{SVHN}
%  \label{fig: acc_svhn}
%\end{subfigure}
%\begin{subfigure}{0.49\columnwidth}
%  \centering
%  \includegraphics[width=0.95\linewidth]{./} 
%  \caption{ImageNet 2012}
%  \label{fig: acc_4}
%\end{subfigure}\\
%\centering
%\begin{subfigure}{0.49\columnwidth}
%  \centering
%  \includegraphics[width=0.95\linewidth]{./} 
%  \caption{NLP Datasets}
%  %$\caption{SST-2, SST-5 and 20 Newsgroups}
%  \label{fig: acc_nlp}
%\end{subfigure}
%\caption{Barplots of accuracy on vision and speech datasets. (a), (b) and (c) show the accuracies provided by aformentioned models on vision datasets including: CIFAR-10, CIFAR-100 and SVHN. (d) display the accuracies of ResNet-50 and ResNet-152 on ImageNet, (e) summarizes the accuracies of TextCNN and TreeLSTM on speech datasets: SST Binary, SST Fine Grained and 20 Newsgroups.}
%\label{fig: acc_barplot}
%\vskip -0.2in
%\end{figure}

Fig. \ref{fig: ece_acc_vs_epoch} displays the ECE and accuracy curves versus the training Epochs for ResNet-50 on Cifar-10 in order to visualize how the ECEs and accuracies evolve during the training procedure. Our method decreases ECE rapidly and converges to the minimum value steadily, whereas the other three methods oscillate heavily and even increase considerably during the initial phase of training. As depicted in Fig. \ref{fig: ece_vs_epoch}, our approach reaches the smallest ECE among all after around $150$ epochs, whereas the ECE of MMCE increases during the last stage of training.
As seen in Fig. \ref{fig: acc_vs_epoch}, our method achieves the highest accuracy.

\begin{figure}[t!]
%\begin{figure}[H]
\vskip 0.2in
\centering
\begin{subfigure}{0.49\columnwidth}
  \centering
  \includegraphics[width=0.95\linewidth]{./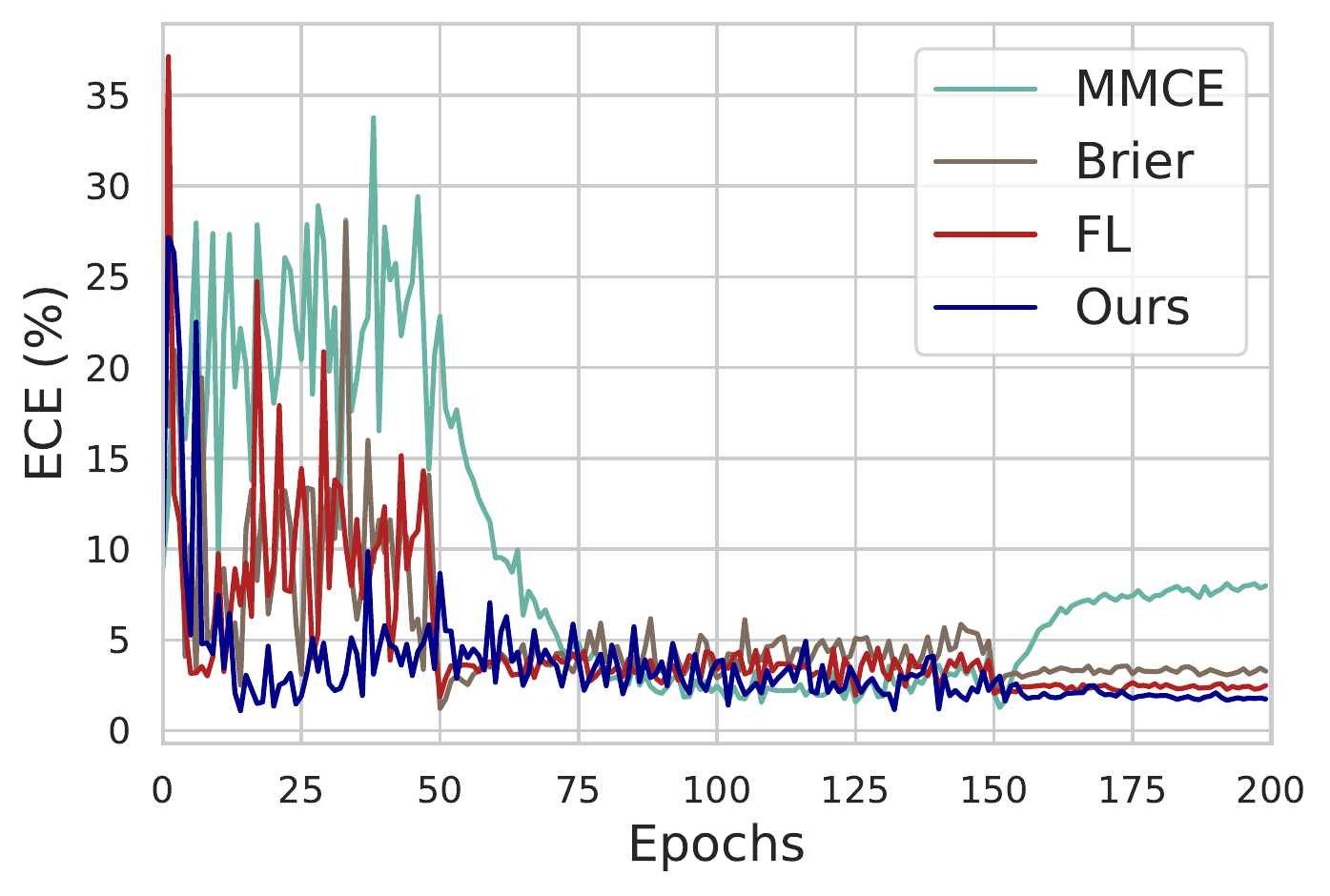}  
  \caption{ECE vs Epoch}
  \label{fig: ece_vs_epoch}
\end{subfigure}
\begin{subfigure}{0.49\columnwidth}
  \centering
  \includegraphics[width=0.95\linewidth]{./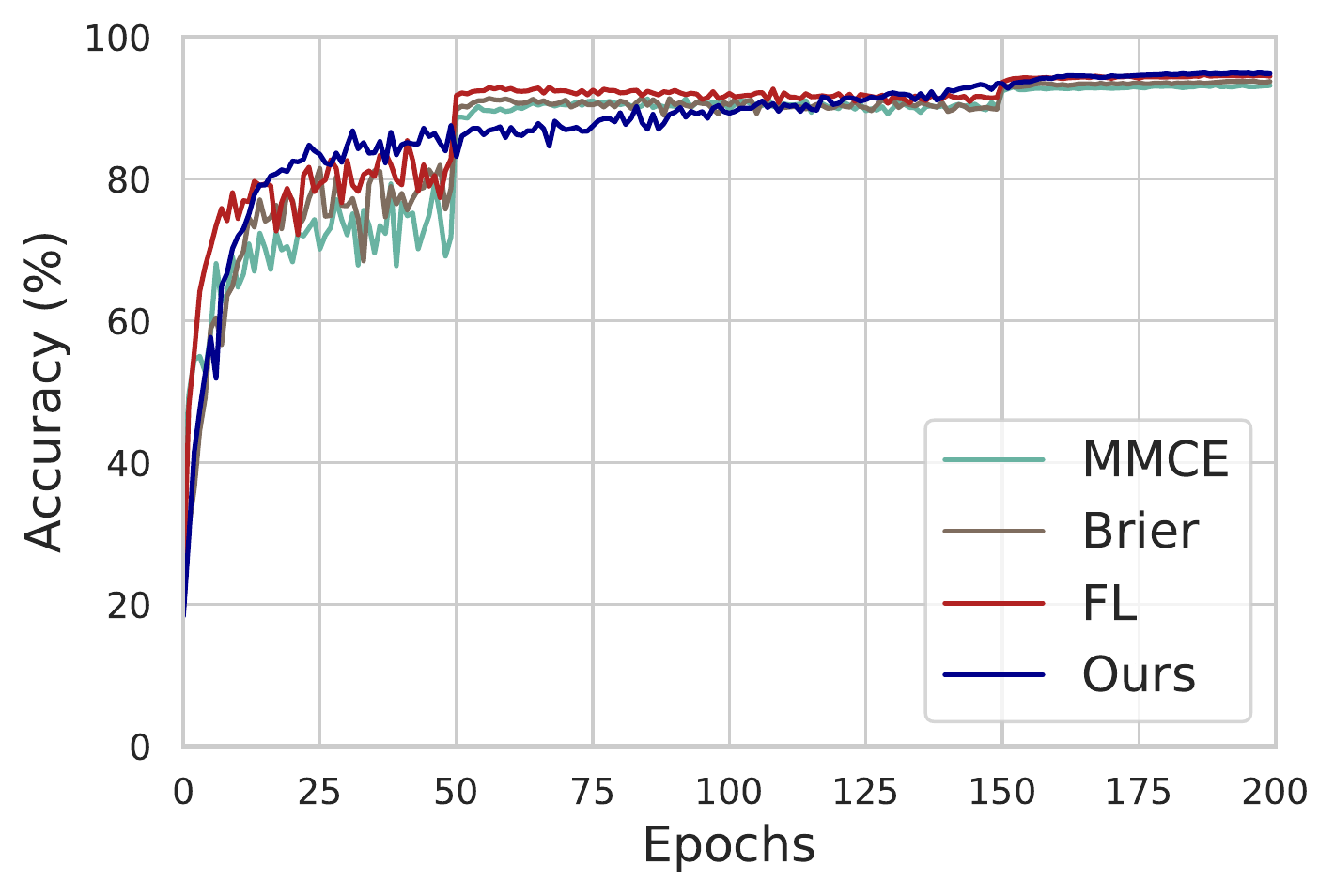} 
  \caption{Accuracy vs Epoch}
  \label{fig: acc_vs_epoch}
\end{subfigure}
\caption{ECE and accuracy for ResNet-50 on CIFAR-10 during training procedure. (a) and (b) show the ECE and accuracy versus the varying epoch respectively.}
\label{fig: ece_acc_vs_epoch}
\vskip -0.2in
\end{figure}

To further understanding the behaviors of these methods on the model calibration, we also explore the distribution of the confidences and also the reliability diagram of the predictive confidences. The results are reported in Fig. \ref{fig: relib_methods_compare} of \cref{subsec: methods_comparison}. 
%\textcolor{red}{More results.} 

\subsection{Main Head vs Calibration Head}
\label{subsec: main_vs_calib}
In this part, we comprehensively explore the dynamical behaviors of the main head and calibration head during training. 

As seen in Fig. \ref{fig: ece_main_calib}, the ECE of the calibration head decreases nearly monotonically, but the ECE of the main head decreases rapidly during the course of the first few epochs, before increasing for approximately $50$ epochs and then decreasing.
%\textcolor{red}{Talk about t}. 

The behavior of the entropy as a function of the training epoch is also investigated. As the training epoch rises, the entropy of the predictive confidence distribution drops. However, as demonstrated in Fig. \ref{fig: entropy_main_calib}, the entropy of the confidence distribution by calibration head is always larger than the entropy of the main head. In addition, we observe that the calibration head generalizes better than the main head, as shown in Fig. \ref{fig: nll_main_calib}, where the Test NLL of the calibration head is less than the Test NLL of the main head. As presented in Fig. \ref{fig: acc_main_calib}, the classification accuracy of the calibration head is roughly equal to that of the main head.
In Fig. \ref{fig: relib_hist_calib} and Fig. \ref{fig: relib_hist_main}, we display the reliability diagram of the confidences by calibration head and main head respectively. It is shown that the calibration head is much more better calibrated than the main head which is typically overconfident. 
In Appendix \ref{subsec: conf_relib_calib_vs_main}, the confidence distributions of both components are displayed in Fig. \ref{fig: relib_calib_vs_main}. 
%\textcolor{red}{More computation.}

%\textcolor{red}{Plot the output distribution of main head and calibration head, also, plot the output distributions by other methods.}

\begin{figure}[t!]
%\begin{figure}[!hb]
%\begin{figure}[H]
\vskip 0.2in
\centering
\begin{subfigure}{0.49\columnwidth}
  \centering
  \includegraphics[width=0.95\linewidth]{./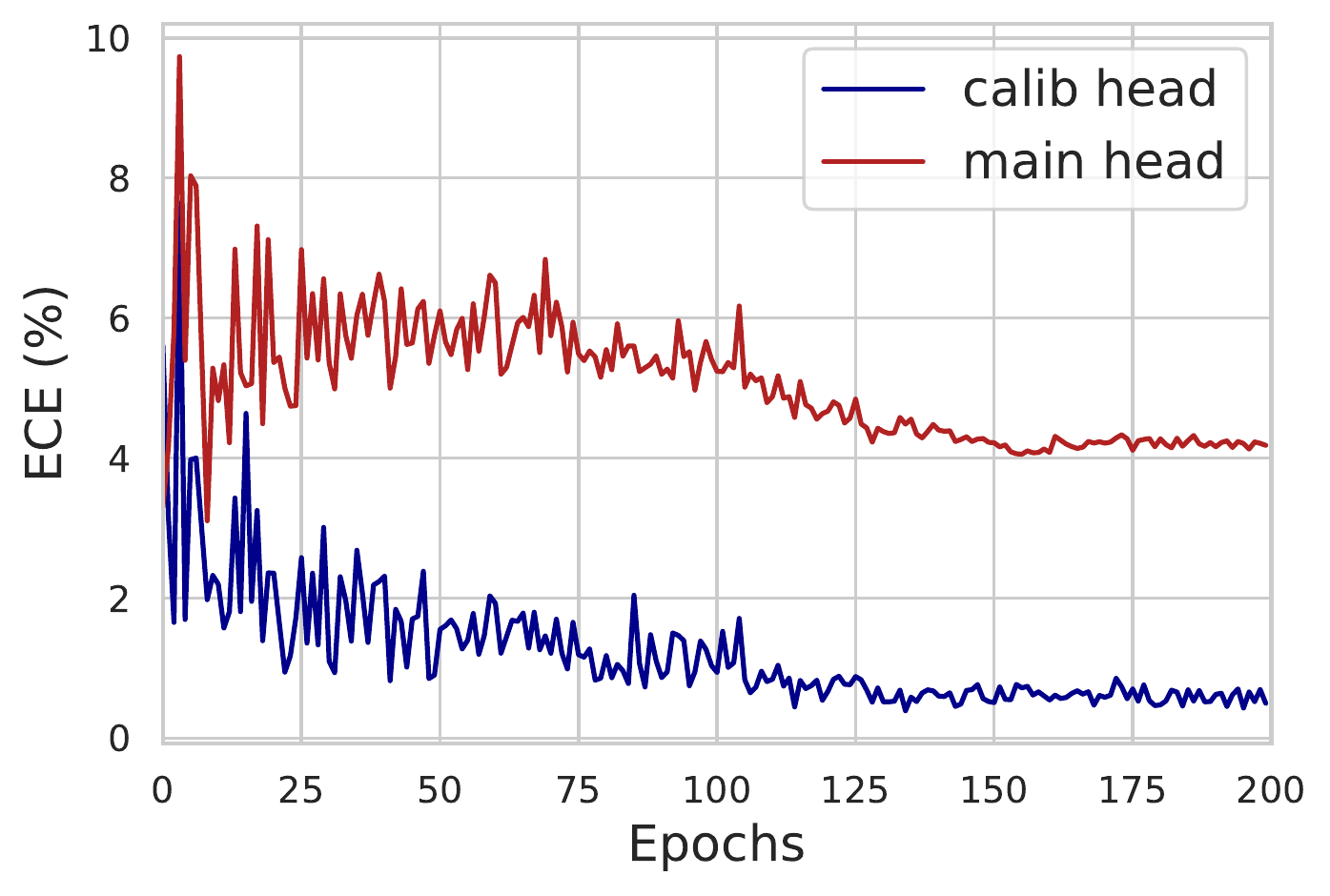}  
  \caption{ECE}
  \label{fig: ece_main_calib}
\end{subfigure}
\begin{subfigure}{0.49\columnwidth}
  \centering
  \includegraphics[width=0.95\linewidth]{./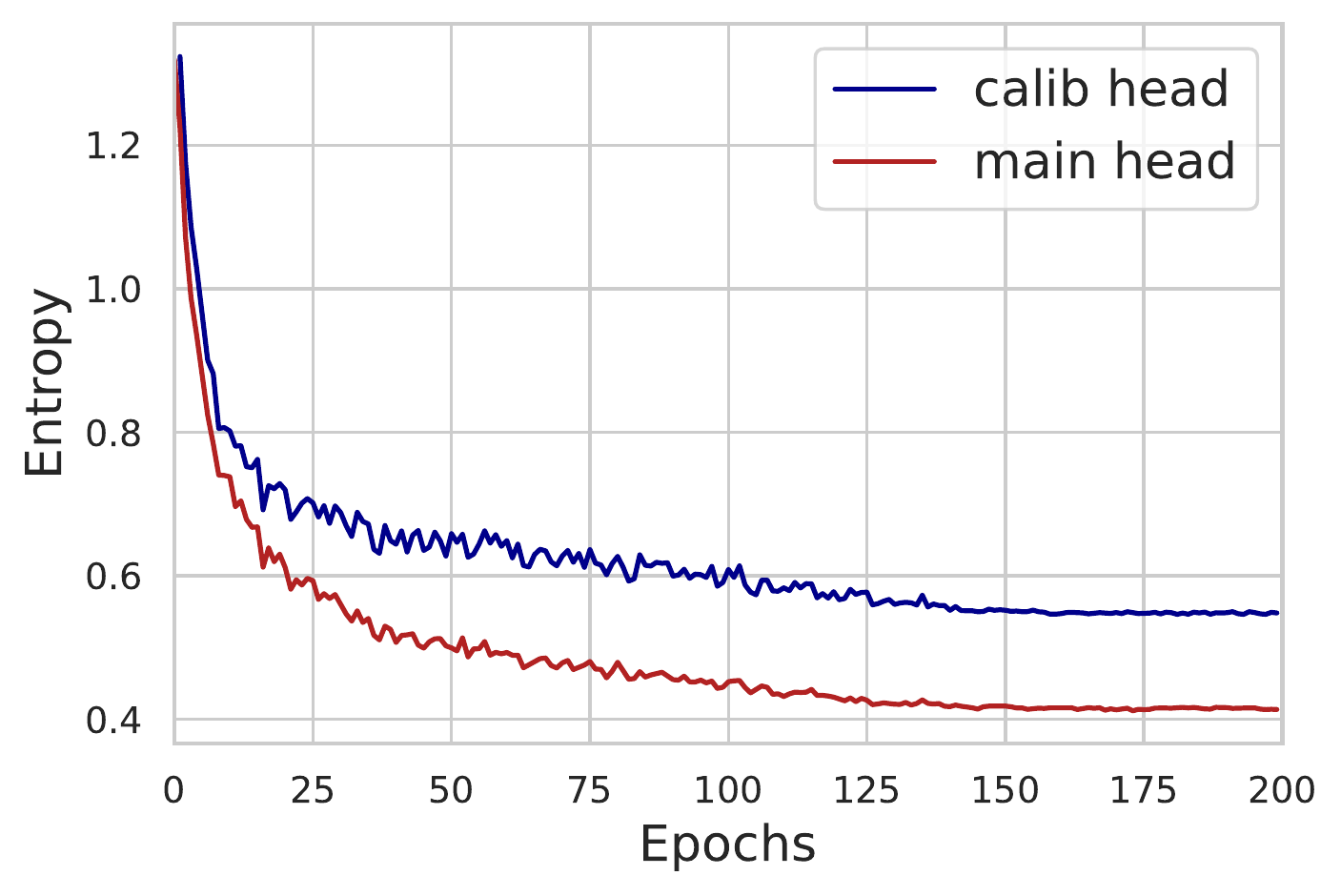} 
  \caption{Entropy}
  \label{fig: entropy_main_calib}
\end{subfigure}\\
\centering
\begin{subfigure}{0.49\columnwidth}
  \centering
  \includegraphics[width=0.95\linewidth]{./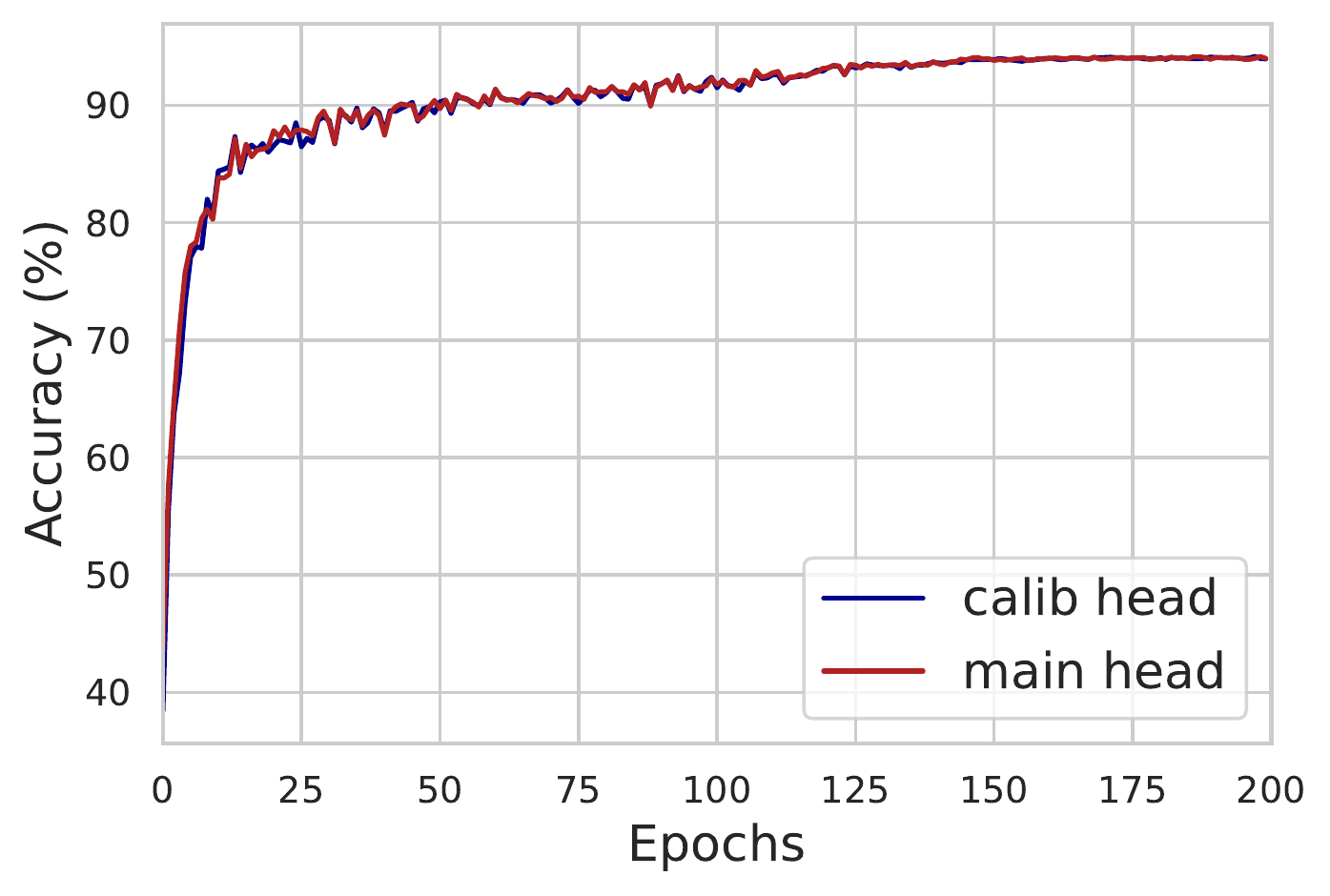} 
  \caption{Accuracy}
  \label{fig: acc_main_calib}
\end{subfigure}
\begin{subfigure}{0.49\columnwidth}
  \centering
  \includegraphics[width=0.95\linewidth]{./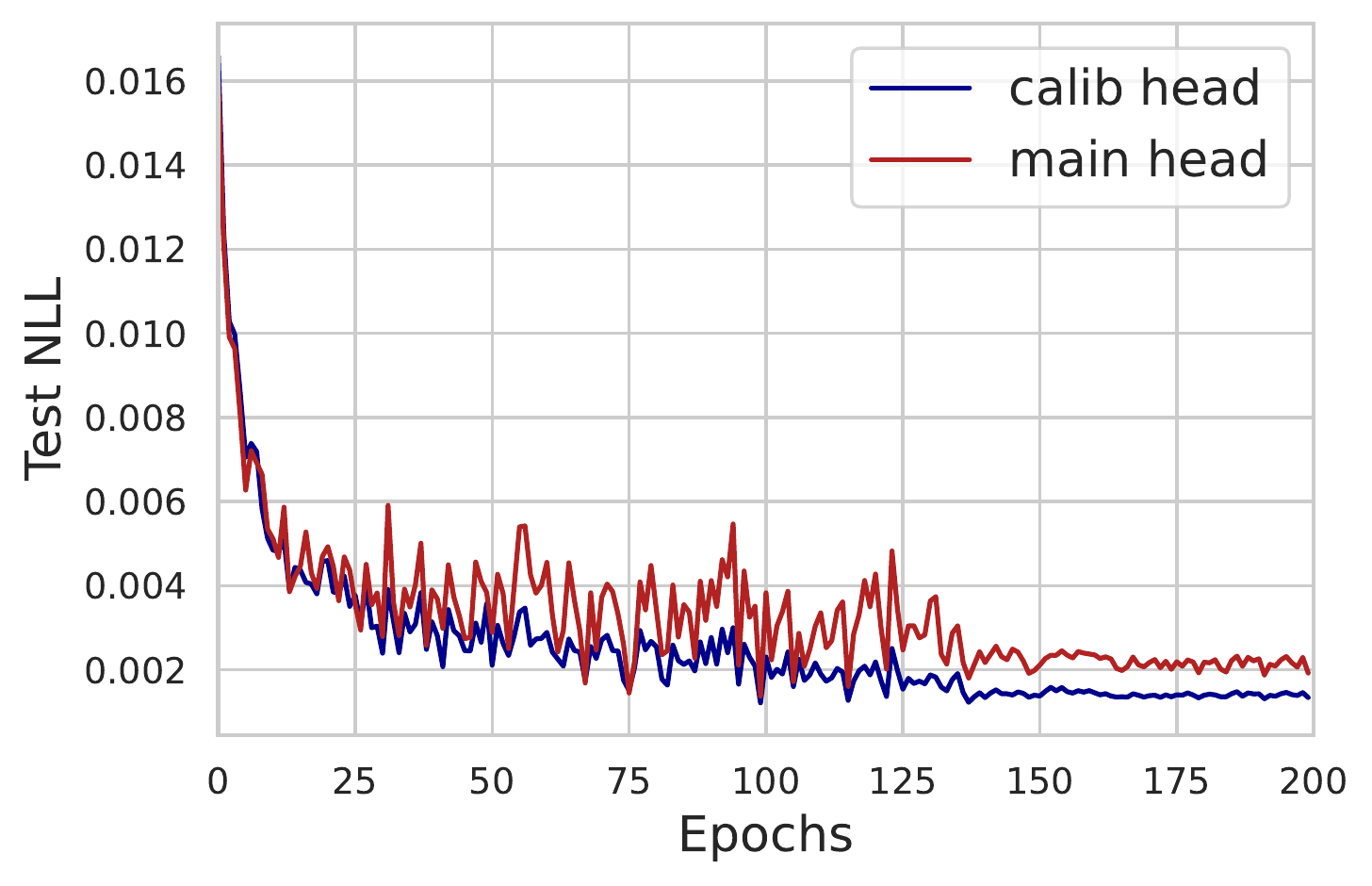} 
  \caption{NLL}
  \label{fig: nll_main_calib}
\end{subfigure}\\
\centering
\begin{subfigure}{0.49\columnwidth}
  \centering
  \includegraphics[width=0.95\linewidth]{./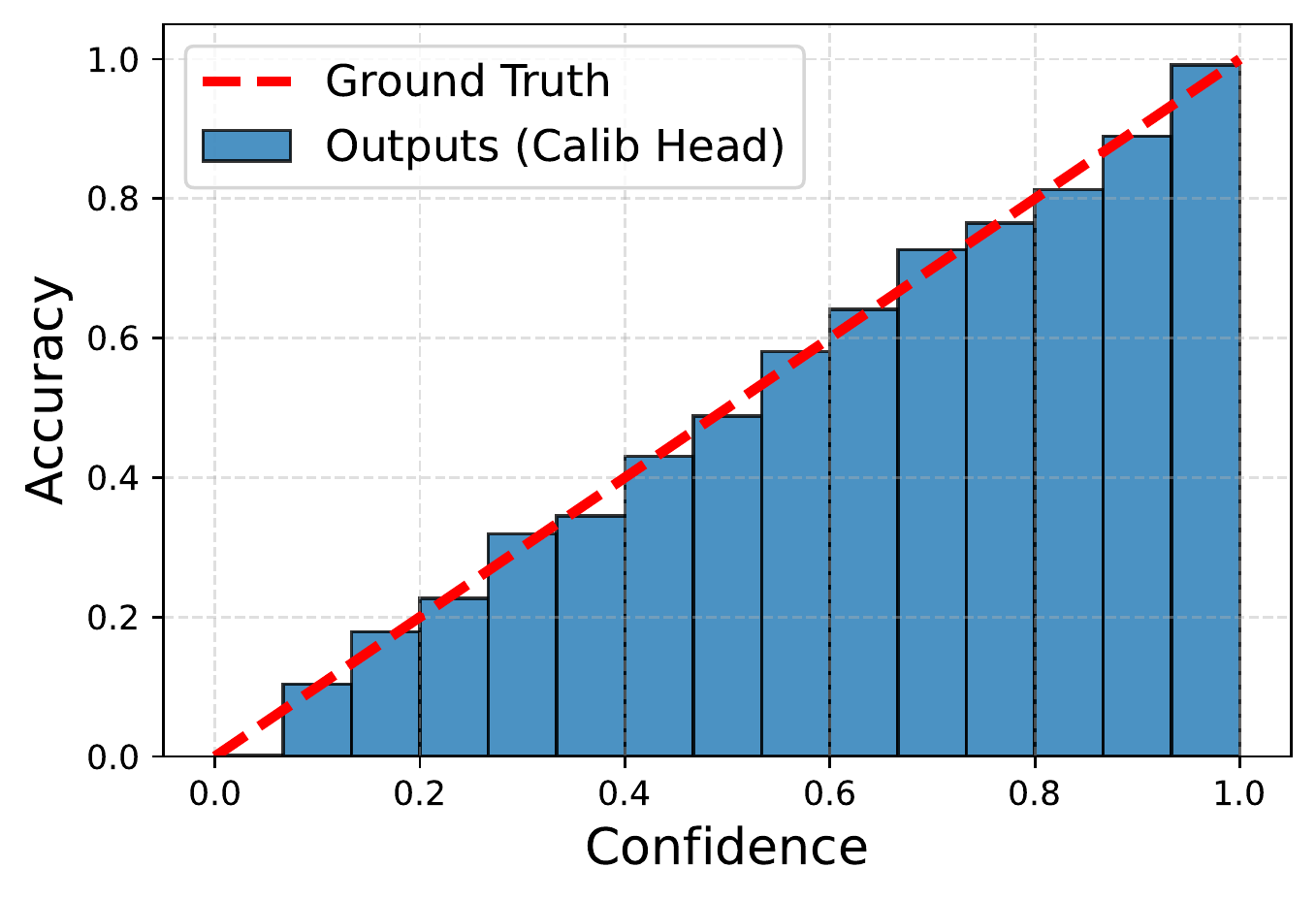} 
  \caption{Calibration Head}
  \label{fig: relib_hist_calib}
\end{subfigure}
\begin{subfigure}{0.49\columnwidth}
  \centering
  \includegraphics[width=0.95\linewidth]{./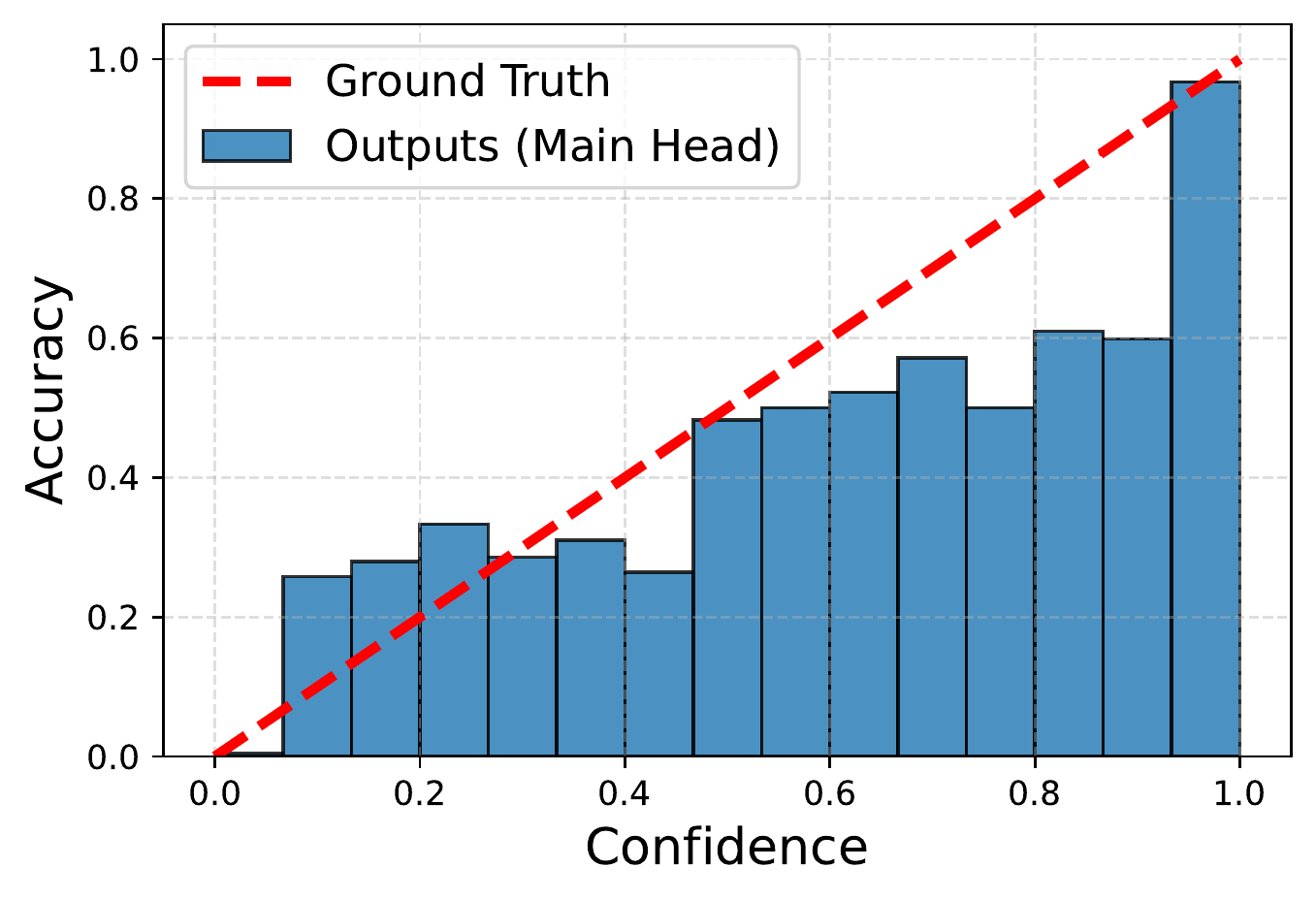} 
  \caption{Main Head}
  \label{fig: relib_hist_main}
\end{subfigure}
\caption{ECE, entropy, accuracy and NLL of ResNet-50 on CIFAR-10 versus epoch are plotted in (a), (b), (c) and (d) respectively. In (c) and (d), the reliability diagrams of calibration head and main head are displayed.}
\label{fig: main_vs_calib}
\vskip -0.2in
\end{figure}

%\textcolor{red}{Table}, 
%\textcolor{red}{Accuracy, Entropy, ECE vs Epoch}
%\begin{itemize}
%    \item \textcolor{red}{ECE Among Epochs in varies methods;}
%    \item \textcolor{red}{Show the entropy vs epoch plot of both of the heads during training}
%    \item \textcolor{red}{ECE comparison between our method and others (table);} (\textcolor{blue}{Done!}) 
%    \item \textcolor{red}{Use experiments to show the advantage to generalize the Platt and Temperature rescaling to End-to-End setup;}
%    \item \textcolor{red}{Acc? bar plot of the ECE and ACC for a specific model and dataset using different methods;} 
%\end{itemize}

\subsection{Annealing}
\label{subsec: annealing}
\begin{table*}[t!]
%\vskip 0.15in
\begin{centering}
\begin{small}
\begin{sc}
\resizebox{\textwidth}{!}{
\begin{tabular}{c c cc cc cc cc cc}
\toprule
\multirow{2}{*}{\textbf{Dataset}} & \multirow{2}{*}{\textbf{Model}} & \multicolumn{2}{c}{\textbf{Cross Entropy}} & \multicolumn{2}{c}{\textbf{MMCE}} & \multicolumn{2}{c}{\textbf{Brier Loss}} & \multicolumn{2}{c}{\textbf{Focal Loss}} & \multicolumn{2}{c}{\textbf{ADH}} \\
&  & pre T & post T & pre T & post T & pre T & post T & pre T & post T & pre T & post T \\
\midrule
\multirow{2}{*}{CIFAR-10/CIFAR-10-C} 
& ResNet 101 & 73.83 & 73.99 & 84.31 & \bf{84.77} & 73.68 & 75.95 & 81.19 & 81.28 & 82.76 & 82.93 \\
& Wide ResNet 28-10 & 74.73 & 75.87 & 83.54 & 84.54 & 91.40 & 92.23 & 91.57 & 91.69 & 92.60 & \bf{92.63} \\
\midrule
%\multirow{2}{*}{CIFAR-100/CIFA-100-C}
%& ResNet 101 & & & (19.14, 8.03)  & (3.86, 1.92) & 8.67 & 4.03 & & & 1.75 & 2.29(*)\\
%& Wide ResNet 28-10 & & & (13.17, 8.9) & (4.37, 2.3) & (3.03, 4.66) & (1.64, 6.47) & 2.37& 5.78 & 4.32 & 3.11\\
%\midrule
\multirow{2}{*}{CIFAR-10/SVHN}
& ResNet 101 & 86.04 & 85.43 & 87.50 & 89.02 & 92.89 & 94.45 & 89.50 &  91.27 & 93.25 & \bf{94.88} \\
& Wide ResNet 28-10 & 87.00 & 85.48 & 90.94 & 90.64 & 70.88 & 74.68 & 87.25 & 87.38 & 93.75 & \bf{94.08} \\
\bottomrule
\end{tabular}
}

%%%%%%%%%%%%%%%%%%%%%%%%%%%%%%
%%% ADH wideresnet (epoch 59/)
%%% wideres: CE: 74.73 (200 epoch)
%%% resnet: CE: 73.83 (200 epoch)
\end{sc}
\end{small}
\end{centering}
\caption{AUROC (\%) for aforementioned approaches on ResNet-50 and ResNet-152 in DS tasks. The training set is CIFAR-10 and the DS sets are the CIFAR-10-C and SVHN.}
\label{tab: auroc}
\vskip -0.2in
\end{table*}

We investigate the property of the Annealing technique in our Annealing Double-Head approach by experimenting with various annealing factors $\beta$. 
In Fig. \ref{fig: annealing}, ECE and entropy of ResNet-50 on CIFAR-10 with different $\beta$ are illustrated. 
The ECE and entropy of the calibration head exhibit the following behaviours. 

As indicated by the preceding plots and the confidence histogram in Fig. \ref{fig: relib_annealing} of Appendix \ref{subsec: varying_beta}, the model is overconfident in its predictions when $\beta$ is small, which leads to poor calibration.  
As $\beta$ increases, so does the entropy of the predictive distribution depicted in Fig. \ref{fig: entropy_vs_T}, since the model tends to distribute the confidences more uniformly among the erroneous categories in the predicted label vectors, resulting in a better calibrated model.

As the annealing factor $\beta$ surpasses the critical point, which is $1.2$ in our experiment (as shown in Fig. \ref{fig: ece_vs_T}), ECE will increase once more, as the model becomes underconfident in its predictions, as evidenced by the high entropy of the predicted distribution in Fig. \ref{fig: entropy_vs_T} and also the reliability plots Fig. \ref{fig: relib_annealing} of Appendix \ref{subsec: varying_beta}. 
Although we only present the results for ResNet-50 on CIFAR-10, these behaviours hold to all models.

%\begin{figure}[ht]
%\begin{figure}[H]
\begin{figure}[b!]
%\vskip 0.2in
\centering
\begin{subfigure}{0.49\columnwidth}
  \centering
  \includegraphics[width=0.95\linewidth]{./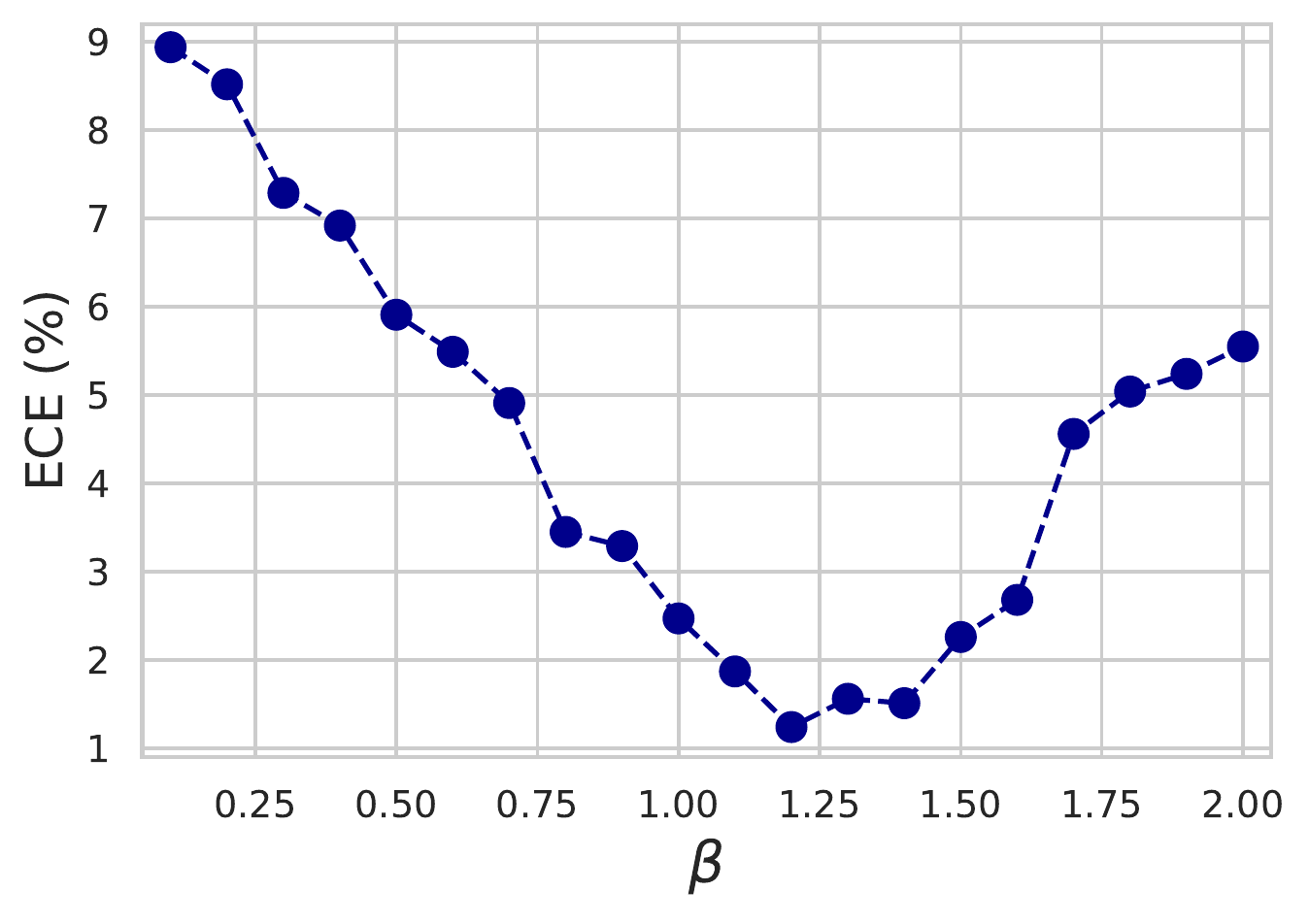}  
  \caption{ECE vs $\beta$}
  \label{fig: ece_vs_T}
\end{subfigure}
\begin{subfigure}{0.49\columnwidth}
  \centering
  \includegraphics[width=0.95\linewidth]{./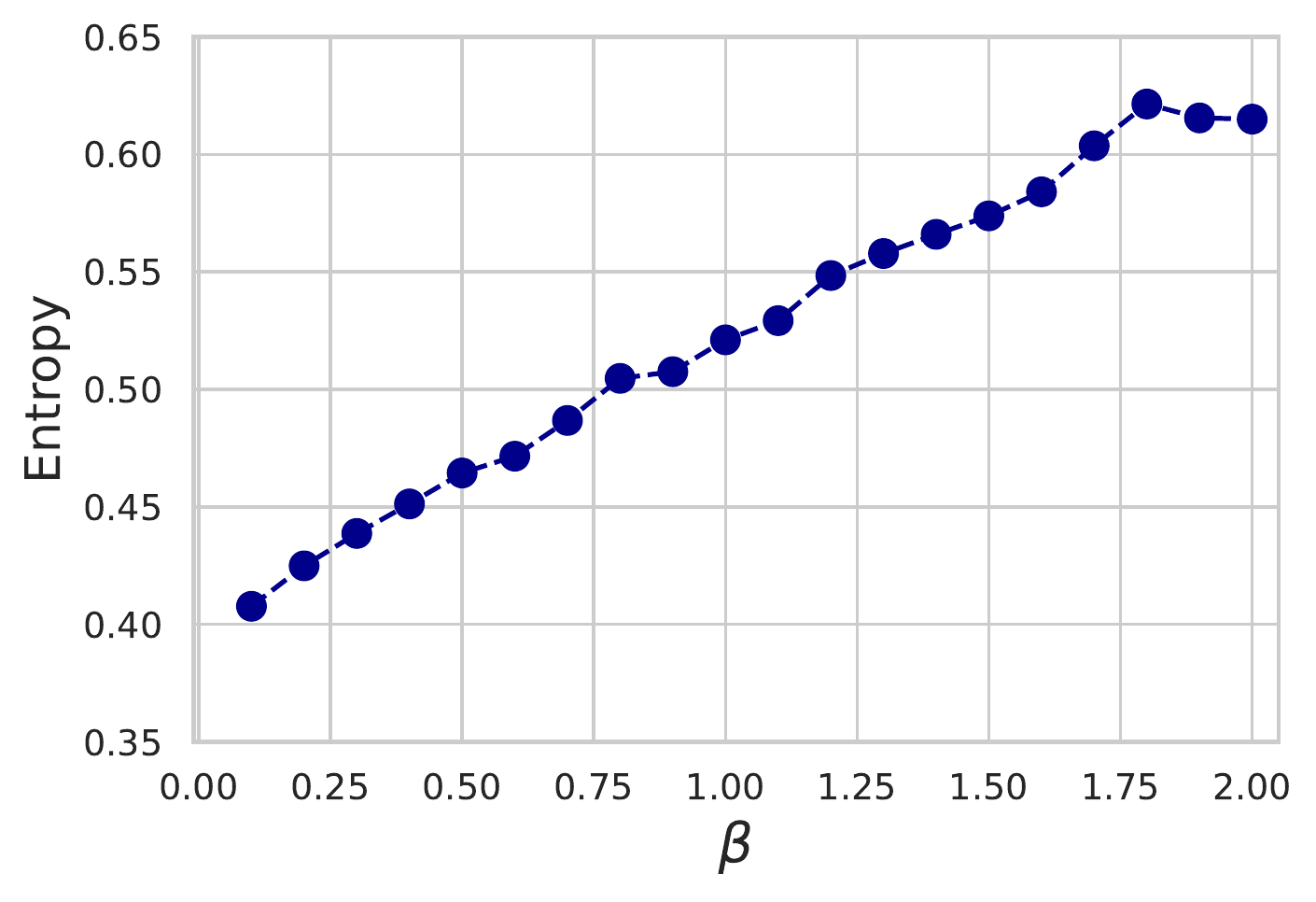} 
  \caption{Entropy vs $\beta$}
  \label{fig: entropy_vs_T}
\end{subfigure}
\caption{(a) and (b) illustrate the ECE and entropy, respectively, as a function of increasing $\beta$.}
\label{fig: annealing}
\vskip -0.23in
\end{figure}

\subsection{Calibration under Distributional Shift} 

Precise estimates of predictive confidence should be provided not only for in-distribution samples, but also for out-of-distribution samples, which are frequently encountered in practical applications.
%Realizing that a correct evaluation of the predictive uncertainty in the OOD scenario is both crucial and difficult,
We investigate the model calibration by previously discussed approaches and our own method in the DS scenario.

In our implementation, we adopt two types of distributional shift: (a) shifting the test inputs (CIFAR-10) with a Gaussian noise perturbation of severity $5$, specifically CIFAR-10-C, \citep{hendrycks2018benchmarking}, and (b) using an entirely different test dataset, SVHN. In both instances, we train ResNet-50 and Wide-ResNet-28-10 on CIFAR-10 using the listed calibration methods and obtain the AUROC before and after Temperature Scaling (TS). The findings are presented in Table. \ref{tab: auroc}. As stated in \citep{ovadia2019can}, TS does not perform well in the distributional shift situation, even if the shift is modest, and our results confirm this, as TS post-processing does not always result in a significant rise in AUROC. Even though our method is not based on regularization, its performance is comparable to the other three methods and generally excellent in all four cases.

Our method provides an efficient way to construct ensemble of models by introducing multiple heads into the base model, which is a potential method for model uncertainty summarization (bayesian model averaging).
\subsection{Time Complexity and Implementation}
\label{subsec: complexity}
%In our Double-Head architecture, a second component shares the base layers of the primary head. 
We alternate the training of the main head and the calibration head by interleaving every $k$ steps of training for the main head with one step of training for the calibration head. 
Therefore, it is straightforward to estimate that the additional computing complexity cost for model calibration does not exceed $O(\frac{1}{k})$ of the normal training process.
In Appendix \ref{subsec: complexity_app}, we examine the performance of our ADH method as a function of the calibration period $k$ and demonstrate that setting $k=70$ still yields excellent performance while only causing a $1.3\%$ increase in computation cost.
%In our experiments, setting $k$ to a value between $5$ and $10$ results in excellent performance; thus, less than $20\%$ of computation time is consumed. 

Our Annealing Double-Head is quite simple to implement, and Fig. \ref{fig: code_adh} of Appendix \ref{subsec: complexity_app} shows an implementation with about thirty lines of code in Pytorch.
%The implementation of our Annealing Double-Head is quite straightforward and the codes of the framework of ADH in Pytorch is shown in Figure [].

%In Table [], we compare the epoch running times of our Annealing Double-Head approach to those of the previously stated approaches.

%Compared to existing calibration methods, the implementation of our Annealing Double-Head is quite straightforward: only a few lines of code are required to implement the calibration head in Pytorch, as shown in Figure [].

%\begin{figure}[hbtp]
%    \caption{Code}
%    \label{code.1}
%    \begin{lstlisting}[language=Python, frame=single]
%        f1 = Cos[2*Pi*freq1*t]; 
%        freq = 1 + 2*t;
%        f2 = Cos[2*Pi*Integrate[freq, t]*t];
%    \end{lstlisting}
%\end{figure}
%
%
%\begin{itemize}
%    \item \textcolor{red}{computation complexity estimation;}
%    \item \textcolor{red}{ease of implementation;}
%\end{itemize}

% Note use of \abovespace and \belowspace to get reasonable spacing
% above and below tabular lines.

% !TeX root = ../doublehead.tex

\section{Conclusion and Future Work}
\label{sec: conclusion}
In this paper, we present Annealing Double-Head, a simple but effective architecture for calibrating DNNs. 
We extensively evaluate the performance of our approach for model calibration by comparing it to various existing high-performance approaches, such as MMCE, Brier Loss, and Focal Loss, utilizing a variety of recent neural network architectures and widely used vision and speech datasets, and demonstrate that our approach achieves the state-of-art model calibration performance, which is the best among the aforementioned methods.
We empirically examine the dynamical behaviours of both the calibration head and the main head during training and quantitatively validate our model design reasoning.
We demonstrate theoretically that the Annealing technique leads to adaptive gradients during training and thoroughly investigated its properties through a variety of experiments involving altering annealing factor.
As is noticed by previous works such as \citep{pereyra2017regularizing, mukhoti2020calibrating}, larger entropy of confidence distribution may result in improved model calibration. 
We verify this assumption by showing that $\text{ECE}_{2}$ is bounded by the negative entropy of the predictive confidence distribution and a constant term which is a function of predictive confidence.
Finally, we investigate the performance of our approach in the setting of distributional shift and demonstrate that it yields comparable performance to other regularization-based methods.

\section*{Acknowledgements}
We acknowledge J. Xavier Prochaska and Ila Fiete for their insightful discussions.
Erdong Guo would like to thank ITP, CAS, for their gracious hospitality while this study was being completed, Pan Zhang for providing the computational resources and Akhilan Boopathy for useful discussions.
\nocite{}

\bibliography{references}
\bibliographystyle{icml2022}

%%%%%%%%%%%%%%%%%%%%%%%%%%%%%%%%%%%%%%%%%%%%%%%%%%%%%%%%%%%%%%%%%%%%%%%%%%%%%%%
%%%%%%%%%%%%%%%%%%%%%%%%%%%%%%%%%%%%%%%%%%%%%%%%%%%%%%%%%%%%%%%%%%%%%%%%%%%%%%%
% APPENDIX
%%%%%%%%%%%%%%%%%%%%%%%%%%%%%%%%%%%%%%%%%%%%%%%%%%%%%%%%%%%%%%%%%%%%%%%%%%%%%%%
%%%%%%%%%%%%%%%%%%%%%%%%%%%%%%%%%%%%%%%%%%%%%%%%%%%%%%%%%%%%%%%%%%%%%%%%%%%%%%%
\newpage
\appendix
\onecolumn
\section{Proofs}
\label{sec: pfs}
\subsection{Proof of Theorem \ref{the: annealing_gradient}}
\label{subsec: grad_proof}
\begin{theorem}    
Let $\mathcal{L}(\mathbf{z}, y)$, denote the cross-entropy loss of a sample pair, 
$(\mathbf{x}, y)$ denote a sample and its label, and $\mathbf{z} =f_{\theta}(\mathbf{x})$ represent the logit of the sample $\mathbf{x}$. 
Given the assumption that $y = \arg\max_{j}(\mathbf{z}_{j})$, we have
\begin{align*}
&\partial_{\theta}{\mathcal{L}(\beta \mathbf{z}, y)} = \sum_{j}\gamma_{j}\frac{\partial{\mathcal{L}(\mathbf{z}, y)}}{\partial{z_{j}}}\partial_{\theta}{z_{j}}, \\
&\begin{cases}
c_{0, i} \leq \gamma_{i} \leq c_{1, i} \quad \text{ if } i = y \\
c_{2, i} \leq \gamma_{i} \leq c_{3, i} \quad \text{ if } i \neq y
\end{cases}
\end{align*}
where
\begin{align*}
&c_{0, i} = \frac{1}{n}\exp{[(1-\beta)(z_{i} - \mathbf{z}_{(1)}) - (\mathbf{z}_{(n-1)} - \mathbf{z}_{(1)})]} \\
&c_{1, i} = n\exp{[(1-\beta)(z_{i} - \mathbf{z}_{(n-1)}) + (\mathbf{z}_{(n-1)} - \mathbf{z}_{(1)})]} \\
&c_{2, i} = (\frac{1}{n} + \frac{n-1}{n}\exp{(\mathbf{z}_{(1)} - \mathbf{z}_{(n)})})\exp{[(\beta - 1)z_{i}]}\\
&c_{3, i} = (1 + (n - 1)\exp{(\mathbf{z}_{(n-1)} - \mathbf{z}_{(n)})})\exp{[(\beta - 1)z_{i}]}.
\end{align*}
\end{theorem}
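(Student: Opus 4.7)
\textbf{Proof plan for Theorem \ref{the: annealing_gradient}.}

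The plan is to explicitly compute both gradients via the chain rule, express each $\gamma_j$ as a ratio of softmax differences, and then bound these ratios using the order statistics of $\mathbf{z}$. Writing $p_i = e^{z_i}/\sum_k e^{z_k}$ and $p_i^{\beta} = e^{\beta z_i}/\sum_k e^{\beta z_k}$, the standard softmax/cross-entropy derivative gives
\begin{equation*}
\partial_{\theta}\mathcal{L}(\mathbf{z},y) = \sum_{j}(p_j - \mathbb{1}[j=y])\,\partial_{\theta}z_j,\qquad \partial_{\theta}\mathcal{L}(\beta\mathbf{z},y) = \beta\sum_{j}(p_j^{\beta} - \mathbb{1}[j=y])\,\partial_{\theta}z_j.
\end{equation*}
Matching coefficients of $\partial_{\theta}z_j$ forces
\begin{equation*}
\gamma_j \;=\; \frac{p_j^{\beta} - \mathbb{1}[j=y]}{p_j - \mathbb{1}[j=y]} \;=\; \begin{cases} (1-p_y^{\beta})/(1-p_y) & j = y,\\[2pt] p_j^{\beta}/p_j & j \neq y,\end{cases}
\end{equation*}
so the identity in the theorem is automatic once $\gamma_j$ is defined this way. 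All the work therefore lies in proving the four inequalities.

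The off-diagonal case ($i \neq y$) is the easier one. Factoring,
\begin{equation*}
\gamma_i \;=\; \frac{p_i^{\beta}}{p_i} \;=\; e^{(\beta-1)z_i}\cdot\frac{\sum_k e^{z_k}}{\sum_k e^{\beta z_k}}.
\end{equation*}
For the upper bound I would use $\sum_k e^{z_k} \leq e^{\mathbf{z}_{(n)}} + (n-1)e^{\mathbf{z}_{(n-1)}}$ (since the assumption $y = \arg\max_j z_j$ forces the $n{-}1$ non-maximal entries to be bounded by $\mathbf{z}_{(n-1)}$) together with $\sum_k e^{\beta z_k} \geq e^{\beta \mathbf{z}_{(n)}}$, after which routine algebra collapses the exponents to $c_{3,i}$. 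For the lower bound I would use $\sum_k e^{z_k} \geq e^{\mathbf{z}_{(n)}} + (n-1)e^{\mathbf{z}_{(1)}}$ and $\sum_k e^{\beta z_k} \leq n e^{\beta\mathbf{z}_{(n)}}$; the same rearrangement then yields $c_{2,i}$.

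The diagonal case ($i=y$) is the main obstacle, because $\gamma_y$ is a ratio of two \emph{sums} of non-maximal softmax weights, so naive bounds are loose. The cleanest path is to split the numerator and denominator symmetrically:
\begin{equation*}
\gamma_y \;=\; \frac{1-p_y^{\beta}}{1-p_y} \;=\; \frac{\sum_{j\neq y}e^{\beta z_j}}{\sum_{j\neq y}e^{z_j}}\cdot\frac{\sum_{k}e^{z_k}}{\sum_{k}e^{\beta z_k}}.
\end{equation*}
Each of the four sums now has an elementary two-sided envelope in terms of $\mathbf{z}_{(1)}$, $\mathbf{z}_{(n-1)}$ and $\mathbf{z}_{(n)}$: the two full sums are bracketed between $e^{(\beta\text{ or }1)\mathbf{z}_{(n)}}$ and $n e^{(\beta\text{ or }1)\mathbf{z}_{(n)}}$, while the two restricted sums are bracketed between $(n-1)e^{(\beta\text{ or }1)\mathbf{z}_{(1)}}$ and $(n-1)e^{(\beta\text{ or }1)\mathbf{z}_{(n-1)}}$. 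Combining the four inequalities in the direction that maximizes the product yields $\gamma_y \leq n\exp[(1-\beta)(\mathbf{z}_{(n)}-\mathbf{z}_{(n-1)}) + (\mathbf{z}_{(n-1)}-\mathbf{z}_{(1)})]$, which is exactly $c_{1,y}$; combining them in the opposite direction yields $c_{0,y}$. The remaining subtlety I expect to have to handle carefully is monotonicity in $\beta$: the bounds above implicitly assume $\beta \geq 0$ so that $e^{\beta z_j}$ is monotone in $z_j$; if one wants the statement to hold for $\beta < 0$ the roles of the max and min simply swap, which is worth a brief remark but does not change the structure of the proof.
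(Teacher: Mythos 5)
Your proposal is correct and takes essentially the same route as the paper's proof: compute the softmax/cross-entropy gradient, identify $\gamma_j$ as the ratio of the two partial derivatives (a $\beta$-free ratio, with the extra chain-rule factor $\beta$ kept outside), and bound the resulting sums of exponentials by the order statistics $\mathbf{z}_{(1)}, \mathbf{z}_{(n-1)}, \mathbf{z}_{(n)}$ together with $z_y=\mathbf{z}_{(n)}$, arriving at the same constants. The differences are cosmetic: your four-sum bracketing in the $j=y$ case is a slight rearrangement of the paper's numerator/denominator bounds, your $\beta\ge 0$ caveat is implicit in the paper, and the constants you obtain for $i\neq y$ carry the factor $\exp[(1-\beta)(\mathbf{z}_{(n)}-z_i)]$ exactly as in the paper's own derivation (the restated $c_{2,i}, c_{3,i}$ above, which drop the $\exp[(1-\beta)\mathbf{z}_{(n)}]$ part, contain a typo rather than reflecting a gap on your side).
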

\begin{proof}
The definition of cross-entropy loss $\mathcal{L}(\mathbf{z}, y)$ is as 
\begin{align}
\mathcal{L}(\mathbf{z}, y) = \sum_{i}y_{i}(z_{i} - \log{\sum_{k}\exp{(z_{k})}}),
\end{align}
here we use the one-hot encoding of $y$ and assign the true class component of the encoding vector $\mathbf{y}$ to $1$. Then we have
\begin{align}
\frac{\partial{\mathcal{L}(\mathbf{z}, y)}}{\partial{z_{j}}} = 
\begin{cases}
\frac{\sum_{i\neq j}\exp{(z_{i}})}{\sum_{i}\exp{(z_{i})}} \quad \text{ if } j = y \\
-\frac{\exp{(z_{j}})}{\sum_{i}\exp{(z_{i})}} \quad \text{ if } j \neq y.
\end{cases}
\end{align}
For the rescaled confidence, we can derive following relation, 
\begin{align}
\partial_{\theta}{\mathcal{L}(\beta\mathbf{z}, y)} &= \sum_{j}\beta\frac{\partial{\mathcal{L}(\beta\mathbf{z}, y)}}{\partial{\beta z_{j}}}\partial_{\theta}{z_{j}}\\
&= \sum_{j}\gamma_{j}\beta\frac{\partial{\mathcal{L}(\mathbf{z}, y)}}{\partial{z_{j}}}\partial_{\theta}{z_{j}}, 
\end{align}
where
\begin{align*}
\gamma_{j} = \frac{\partial{\mathcal{L}(\beta\mathbf{z}, y)}}{\partial{\beta z_{j}}} /\frac{\partial{\mathcal{L}(\mathbf{z}, y)}}{\partial{z_{j}}} = 
\begin{cases}
\frac{\sum_{i\neq j}\exp{(\beta z_{i}})\sum_{i}\exp{(z_{i})}}{\sum_{i}\exp{(\beta z_{i})}\sum_{i\neq j}\exp{(z_{i}})} & \text{ if } j = y, \\
\exp{[(\beta - 1)z_{j}]}\frac{\sum_{i}\exp{(z_{i})}}{\sum_{i}\exp{(\beta z_{i})}} & \text{ if } j \neq y.
\end{cases}
\end{align*}

First, consider the situation when $j=y$, 
\begin{align}
\gamma_{j} &\geq \frac{(n - 1)\exp{(\beta \mathbf{z}_{(1)})}}{\exp{(\beta z_{j})} + (n - 1)\exp{(\beta\mathbf{z}_{(n-1)})}}\frac{\exp{(z_{j})} + (n - 1)\exp{(\mathbf{z}_{(1)})}}{(n - 1)\exp{(\mathbf{z}_{(n-1)})}} \\
&\geq \exp{(\beta\mathbf{z}_{(1)} - \mathbf{z}_{(n-1)})}\frac{1}{n}\exp{[(1 - \beta)z_{j}]} \\
&= \frac{1}{n}\exp{[(1-\beta)(z_{j} - \mathbf{z}_{(1)}) - (\mathbf{z}_{(n-1)} - \mathbf{z}_{(1)})]}, \\
\gamma_{j} &\leq \frac{(n - 1)\exp{(\beta \mathbf{z}_{(n-1)})}}{\exp{(\beta z_{j})} + (n - 1)\exp{(\beta\mathbf{z}_{(n-1)})}}\frac{\exp{(z_{j})} + (n - 1)\exp{(\mathbf{z}_{(n-1)})}}{(n - 1)\exp{(\mathbf{z}_{(1)})}} \\
&\leq \exp{(\beta\mathbf{z}_{(n-1)} - \mathbf{z}_{(1)})}n\exp{[(1 - \beta)z_{j}]} \\
&= n\exp{[(1-\beta)(z_{j} - \mathbf{z}_{(n-1)}) + (\mathbf{z}_{(n-1)} - \mathbf{z}_{(1)})]}.
\end{align}
Similarly, in the case when $j \neq y$, we have
\begin{align}
%\gamma_{j} &\geq \exp{[(\beta - 1)z_{j}]}\frac{\sum_{i}\exp{(z_{i})}}{\sum_{i}\exp{(\beta z_{i})}} \\
\gamma_{j} &\geq \exp{[(\beta - 1)z_{j}]}\frac{\exp{(\mathbf{z}_{(n)})} + (n - 1)\exp{(\mathbf{z}_{(0)})}}{n\exp{(\beta \mathbf{z}_{(n)})}}\\ %+ (n - 1)\exp{(\beta z_{(n-1)}})}} \\ 
&= (\frac{1}{n} + \frac{n-1}{n}\exp{(\mathbf{z}_{(1)} - \mathbf{z}_{(n)})}) \exp{[(1 - \beta)(\mathbf{z}_{(n)} - z_{i})]}, \\
\gamma_{j} &\leq \exp{[(\beta - 1)z_{j}]}\frac{\exp{(\mathbf{z}_{(n)})} + (n - 1)\exp{(\mathbf{z}_{(n-1)})}}{n\exp{(\beta \mathbf{z}_{(0)})}}\\
& = (1 + (n - 1)\exp{(\mathbf{z}_{(n-1)} - \mathbf{z}_{(n)})})\exp{[(1 - \beta)(\mathbf{z}_{(n)} - z_{i})]}.
\end{align}
\end{proof}

\subsection{Proof of Theorem \ref{the: ece_upper_bound}}
\label{subsec: bound_proof}
\begin{theorem}
\label{the: ece_upper_bound_v1}
Let $\hat{p}(\cdot)$ denote a function which maps a sample $\mathbf{x}$ to  its confidence $\hat{p}(\mathbf{x})$, then the second order ECE: $\text{ECE}_{2}$, which is defined as the square root of the second moment of absolute difference between the confidence and accuracy can be bounded as follows,  
\begin{align*}
    \text{ECE}_{2}[\hat{p}(\mathbf{x})] \leq 
    & (\text{C}[\hat{p}(x)] - 2\text{H}[\hat{p}(x)])^{1/2}, 
\end{align*}
where 
\begin{align*}
&\text{C}[\hat{p}(x)] = 3 - 2\mathbb{E}_{\hat{p}(x)}[\log{\hat{p}(x)}] - \mathbb{E}_{\hat{p}(x)}[f(\hat{p}(x))], \\
&\text{H}[\hat{p}(\mathbf{x})] = \mathbb{E}_{\hat{p}(\mathbf{x})}[\log{f(\hat{p}(\mathbf{x})}].
\end{align*}
\end{theorem}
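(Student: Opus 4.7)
The plan is to bound $\text{ECE}_2^2$ above by $C - 2H$ using two elementary scalar inequalities applied pointwise, and then take a square root. Throughout, write $a(p) := \mathcal{P}(\hat{Y}=y \mid \hat{p}(\mathbf{X}) = p)$ for the accuracy at confidence level $p$, so that by definition $\text{ECE}_2^2 = \mathbb{E}_{\hat{p}(\mathbf{X})}[(\hat{p}(\mathbf{X}) - a(\hat{p}(\mathbf{X})))^2]$.

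The first step is to expand the square and drop the cross term: since $\hat{p}, a(\hat{p}) \in [0,1]$, the product $2\hat{p}\cdot a(\hat{p})$ is non-negative, so $\text{ECE}_2^2 \le \mathbb{E}[\hat{p}^2] + \mathbb{E}[a(\hat{p})^2]$. This cleanly splits the task into bounding $\mathbb{E}[\hat{p}^2]$ by a $\log \hat{p}$ term and bounding $\mathbb{E}[a(\hat{p})^2]$ by the $f(\hat{p})$ and $\log f(\hat{p})$ terms that appear in $C$ and $H$.

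The second step establishes two one-variable inequalities on $(0,1]$. First, $p^2 \le 1 - 2\log p$: set $h(p) = 1 - 2\log p - p^2$, observe $h(1)=0$ and $h'(p) = -2/p - 2p < 0$, so $h$ is decreasing on $(0,1]$ and hence non-negative. Second, $1 \le 2 - f - 2\log f$, equivalently $f + 2\log f \le 1$: set $g(f) = 1 - f - 2\log f$, observe $g(1)=0$ and $g'(f) = -1 - 2/f < 0$, so $g$ is decreasing on $(0,1]$ and hence non-negative. Applying the first inequality pointwise with $p = \hat{p}(\mathbf{x})$ and taking expectation gives $\mathbb{E}[\hat{p}^2] \le 1 - 2\mathbb{E}[\log \hat{p}]$. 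For the second term, I would first use the crude bound $a(\hat{p})^2 \le 1$ and then apply the second inequality to $f(\hat{p}(\mathbf{x}))$, yielding $\mathbb{E}[a(\hat{p})^2] \le 2 - \mathbb{E}[f(\hat{p})] - 2\mathbb{E}[\log f(\hat{p})] = 2 - \mathbb{E}[f(\hat{p})] - 2H$.

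Summing the two bounds produces exactly $\text{ECE}_2^2 \le 3 - 2\mathbb{E}[\log \hat{p}] - \mathbb{E}[f(\hat{p})] - 2H = C - 2H$, and extracting the square root finishes the proof. The main obstacle is simply spotting the right pair of scalar inequalities whose right-hand sides assemble into the stated $C$ and $H$; once these are found, the rest is bookkeeping, with the only care needed being to confirm that the relevant quantities lie in $(0,1]$ (in particular that $\hat{p} > 0$ so $\log \hat{p}$ is finite, and that $f(\hat{p})$ is read as a positive pmf value on the support of $\hat{p}$) so that both scalar inequalities apply.
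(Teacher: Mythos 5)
Your proof is correct, and it reaches the stated bound by a genuinely different route than the paper. The paper expands $(\hat{p}-a)^2$ and keeps the cross term $2f(\hat{p})\,a(\hat{p})\,\hat{p}$, lower-bounding it through the chain $x \geq \log x$, a Bonferroni inequality $f(\hat{y}=y,\hat{p}) \geq \mathrm{Acc} + f(\hat{p}) - 1$, and $\log x \geq 1 - 1/x$; the accuracy contributions from the squared term ($\sum f a^2 \leq \mathrm{Acc}$) and from the cross term then cancel, leaving exactly $C - 2H$. You instead discard the cross term outright (it is nonnegative) and convert each remaining piece with a single scalar inequality: $\hat{p}^2 \leq 1 - 2\log\hat{p}$ handles $\mathbb{E}[\hat{p}^2]$, and $1 \leq 2 - f - 2\log f$ inflates the crude bound $\mathbb{E}[a^2] \leq 1$ into the $\mathbb{E}[f(\hat{p})]$ and $-2H$ terms. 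What your route buys is transparency and fewer hidden hypotheses: the paper's Bonferroni-plus-$\log x \geq 1-1/x$ step implicitly needs $\mathrm{Acc} + f(\hat{p}) - 1 > 0$ to be a valid chain, whereas you only need $\hat{p} > 0$ and $f(\hat{p}) > 0$ on the support, which you correctly flag. What it gives up is any pretence that the accuracy enters the bound meaningfully — your argument makes explicit that the constant $C - 2H$ absorbs the accuracy-dependent terms through crude bounds, i.e., the second inequality is introduced solely to reproduce the stated right-hand side rather than to exploit structure; the paper's derivation obscures the same looseness behind the cancellation of $\mathrm{Acc}$. Since both proofs establish the identical inequality $\mathrm{ECE}_2^2 \leq 3 - 2\mathbb{E}[\log\hat{p}] - \mathbb{E}[f(\hat{p})] - 2\mathbb{E}[\log f(\hat{p})]$, yours stands as a simpler and arguably more watertight alternative.
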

\begin{proof}
According to the definition the second order ECE, its square is derived as 
\begin{align}    
\label{eq: ece_square}
(\text{ECE}_{2}[\hat{p}(\mathbf{x})])^{2} = & \sum_{\hat{p}(\mathbf{x})}f(\hat{p}(\mathbf{x}))[f(\hat{y}(\mathbf{x}) = y(\mathbf{x})|\hat{p}(\mathbf{x})) - \hat{p}(\mathbf{x})]^{2}\\
=& \sum_{\hat{p}(\mathbf{x})}f(\hat{p}(\mathbf{x}))(f(\hat{y}(\mathbf{x}) = y(\mathbf{x})|\hat{p}(\mathbf{x}))^{2} - 2f(\hat{p}(\mathbf{x}))f(\hat{y}(\mathbf{x}) = y(\mathbf{x})|\hat{p}(\mathbf{x}))\hat{p}(\mathbf{x}) )\\
& + f(\hat{p}(\mathbf{x}))\hat{p}(\mathbf{x})^{2}).
\end{align}
For the first term, we get
\begin{align}
\label{eq: first_term}
\sum_{\hat{p}(\mathbf{x})}f(\hat{p}(\mathbf{x}))(f(\hat{y}(\mathbf{x}) = y(\mathbf{x})|\hat{p}(\mathbf{x}))^{2} &\leq \sum_{\hat{p}(\mathbf{x})}f(\hat{p}(\mathbf{x}))(f(\hat{y}(\mathbf{x}) = y(\mathbf{x})|\hat{p}(\mathbf{x}))\\
&=\text{Acc}.
\end{align}
For the third term, we have 
\begin{align}
\label{eq: third_term}
\sum_{\hat{p}}{f(\hat{p}(\mathbf{x}))\hat{p}(\mathbf{x})^{2}} \leq 1. 
\end{align}
We have, for the second term, 
\begin{align}
&\sum_{\hat{p}(\mathbf{x})}2f(\hat{p}(\mathbf{x}))f(\hat{y}(\mathbf{x}) = y(\mathbf{x})|\hat{p}(\mathbf{x}))\hat{p}(\mathbf{x}) \\
= &\sum_{\hat{p}(\mathbf{x})}2f(\hat{p}(\mathbf{x}))\frac{f(\hat{y}(\mathbf{x}) = y(\mathbf{x}), \hat{p}(\mathbf{x}))}{f(\hat{p}(\mathbf{x}))}\hat{p}(\mathbf{x})\\
\geq & 
\label{eq: three_term}
\sum_{\hat{p}(\mathbf{x})}2f(\hat{p}(\mathbf{x}))\log{[f(\hat{y}(\mathbf{x}) = y(\mathbf{x}), \hat{p}(\mathbf{x}))]} - \sum_{\hat{p}(\mathbf{x})}2f(\hat{p}(\mathbf{x}))\log{[f(\hat{p}(\mathbf{x}))]}\\
& + \sum_{\hat{p}(\mathbf{x})}2f(\hat{p}(\mathbf{x}))\log{[\hat{p}(\mathbf{x})]}.
%= &2\sum_{\hat{p}(\mathbf{x})}f(\hat{p}(\mathbf{x}))\log{f(\hat{y}(\mathbf{x}) = y(\mathbf{x})|\hat{p}(\mathbf{x}))} + f(\hat{p}(\mathbf{x}))\log{\hat{p}(\mathbf{x})}
%%&\geq 2\sum_{\hat{p}(\mathbf{x})}[f(\hat{p}(\mathbf{x}))\log{[\text{Acc} + f[\hat{p}(\mathbf{x})] - 1]} - f(\hat{p}(\mathbf{x}))\log{f(\hat{p}(\mathbf{x}))}] + f(\hat{p}(\mathbf{x}))\log{\hat{p}(\mathbf{x})}
\end{align}
Let's analyse each term in \cref{eq: three_term}. 
By the Bonferroni inequalities, we obtain the lower bound of the first term as 
\begin{align*}
& \sum_{\hat{p}(\mathbf{x})}2f(\hat{p}(\mathbf{x}))\log{[f(\hat{y}(\mathbf{x}) = y(\mathbf{x}), \hat{p}(\mathbf{x}))]} \\
\geq & 2\sum_{\hat{p}(\mathbf{x})}f(\hat{p}(\mathbf{x}))(1 - \frac{1}{\text{Acc} + f(\hat{p}(\mathbf{x})) - 1}) \\ 
\geq & \sum_{\hat{p}(\mathbf{x})}f(\hat{p}(\mathbf{x}))\text{Acc} + (f(\hat{p}(\mathbf{x})))^{2} - 2 \\
= & \text{Acc} - 2 + \sum_{\hat{p}(\mathbf{x})}(f(\hat{p}(\mathbf{x})))^{2}.
\end{align*}
Let's combine all three term in \cref{eq: three_term}, and get 
\begin{align}
\label{eq: second_term}
&\sum_{\hat{p}(\mathbf{x})}2f(\hat{p}(\mathbf{x}))f(\hat{y}(\mathbf{x}) = y(\mathbf{x})|\hat{p}(\mathbf{x}))\hat{p}(\mathbf{x}) \\
\geq & \text{Acc} + 2\text{H}[\hat{p}(x)] - 2 + \sum_{\hat{p}(\mathbf{x})}(f(\hat{p}(\mathbf{x})))^{2} + 2\sum_{\hat{p}(\mathbf{x})}f(\hat{p}(\mathbf{x}))\log{[\hat{p}(\mathbf{x})]}
\end{align}
Finally, we plug \cref{eq: first_term}, \cref{eq: second_term} and \cref{eq: third_term} back into \cref{eq: ece_square}, and get
\begin{align}
(\text{ECE}_{2}[\hat{p}(\mathbf{x})])^{2} \leq 3 - 2\mathbb{E}_{\hat{p}(x)}[\log{\hat{p}(x)}] - \mathbb{E}_{\hat{p}(x)}[f(\hat{p}(x))] - 2 \text{H}[\hat{p}(\mathbf{x})].
\end{align}
\end{proof}
%\begin{itemize}
%    \item \textcolor{red}{Belief Histogram}
%    \item \textcolor{red}{Talk about the detail of AUROC computation in the OOD case;
%    \item acc, ece, loss, entropy four plots of other models and datasets}
%\end{itemize}

\section{Datasets and Setup for Experiments}
\label{sec: exp_app}
\textbf{Datasets}\hspace{5pt} The datasets used in our experiments are
\begin{itemize}
    \item Street View House Numbers (SVHN) \citep{netzer2011reading}: $32 \times 32$ colored images centered around a single character from house numbers in Google Street View images with $10$ classes. 60,000/13,257/26,032 images for train/validation(calibration)/test.
    \item CIFAR-$10$/CIFAR-$100$ \citep{krizhevsky2009learning}: $32 \times 32$ colored images from $10$ classes. The train/validation(calibration)/test is devided as: 45,000/5,000/10,000. 
    \item ImageNet $2012$ \citep{deng2009imagenet}: Web images from $1000$ classes. 1.3millon/25,000/25,000 images for train/validation(calibration)/test.
    \item Stanford Sentiment Treebank (SST) \citep{socher2013recursive}: $11,855$ single sentences extracted from Movie Reviews, parsed as sentence trees and annotated by sentiment. Each sample includes a binary and a fine grained $5$-classes label. Training/validation/test sets contain 6920/872/1821 documents.
    \item 20 Newsgroups \citep{Lang95}: 20,000 newsgroups documents divided across 20 newsgroups. 15,098/900/3,999 documents for train/validation/test.
\end{itemize}

\textbf{Setup of Experiments}\hspace{5pt}
Normalization, random cropping and padding, and random horizontal flips are the data augmentation procedures utilized for all vision dataset experiments. In the training period, SGD optimizer is applied with Nesterov momentum $0.9$ and weight decay of $5 \times 10^{-4}$ for main head and $5 \times 10^{-5}$ for calib head, respectively. 
For CIFAR-10/100 and SVHN, the cosine annealing schedule \citep{loshchilov2016sgdr} is used for learning rate adaptation and the models are trained with $200$ epochs for the ADH method and the baseline CE, whereas the Multi-Step schedule is adopted and the models are trained with $350$ for MMCE, Brier Loss, and Focal Loss. 
For the ImageNet-2012 dataset, we employ the pretrained models provided by \citep{facebookarchive} and retrain them for $100$ epochs. All other settings are identical to those discussed previously.
In NLP tasks, all models are trained primarily following the training setups outlined in the papers where the models were first proposed. In our experiments, the Adam optimizer \citep{kingma2014adam} is used and trained for $50$ epochs.

\section{Confidence Distribution and Reliability Diagram}
\label{sec: conf_and_relib}
\subsection{Methods Comparison}
\label{subsec: methods_comparison}
In Fig. \ref{fig: relib_methods_compare}, we display the reliability and confidence histograms of ResNet-50 trained on CIFAR-10 and calibrated using the MMCE, BL, FL, and ADH methods. 
In the first row of the figures matrix, the reliability diagrams are displayed. 
Our method yields the most calibrated model, as demonstrated. The models calibrated by the other three approaches continue to be slightly overconfident in their predictions.
In the second row are the histograms of confidence. 
We observe that, among the four calibration methods, ADH provides the most uniform confidence distribution.
To better represent the left bins with small confidences, we exclude the bin with the highest confidence from the confidence histograms in the final row.

\begin{figure}[!htb]
\vskip 0.2in
\centering
\begin{subfigure}{0.24\columnwidth}
  \centering
  \includegraphics[width=0.95\linewidth]{./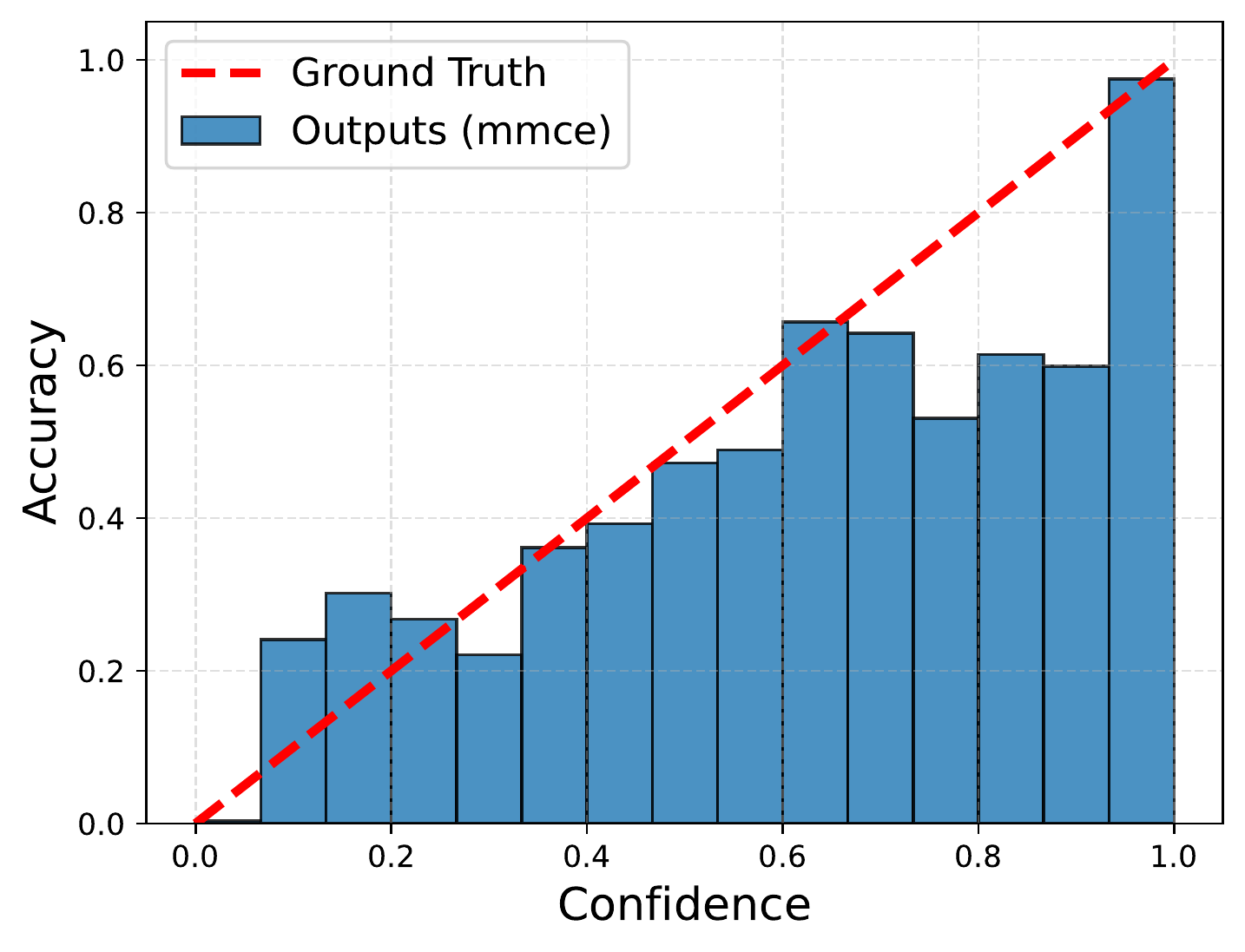}  
  \caption{MMCE}
  \label{fig: mmce_relib}
\end{subfigure}
\begin{subfigure}{0.24\columnwidth}
  \centering
  \includegraphics[width=0.95\linewidth]{./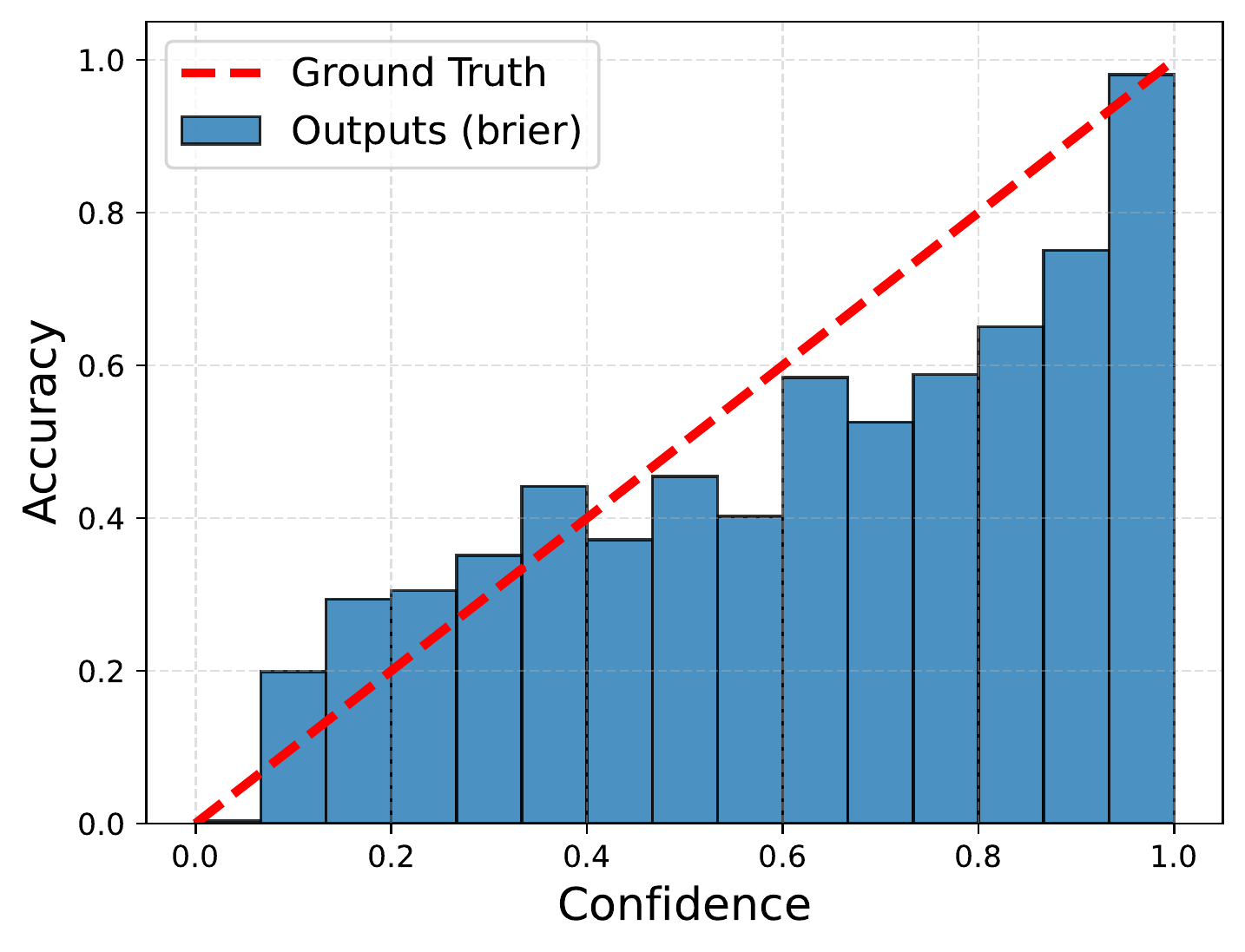} 
  \caption{Brier}
  \label{fig: brier_relib}
\end{subfigure}
\begin{subfigure}{0.24\columnwidth}
  \centering
  \includegraphics[width=0.95\linewidth]{./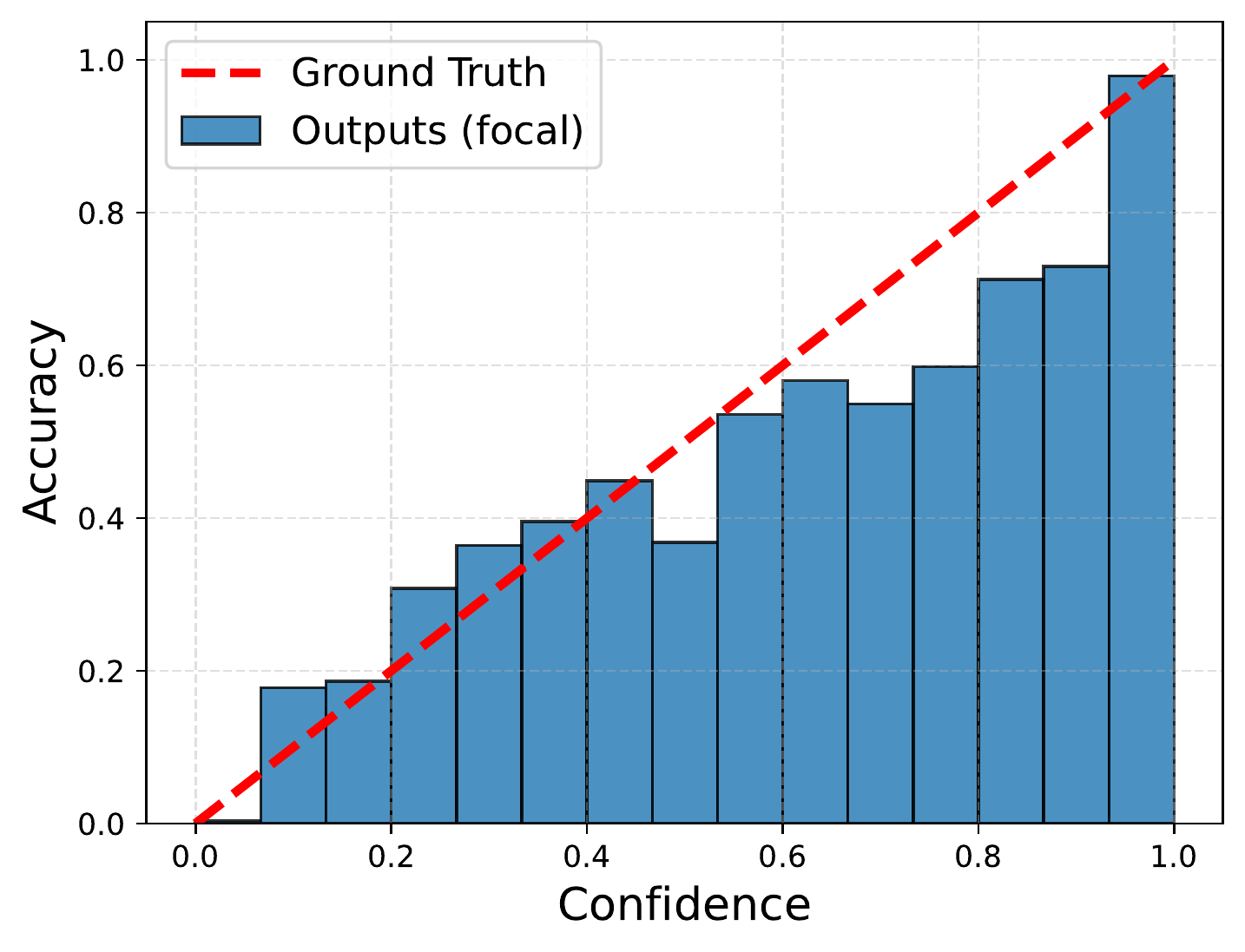}  
  \caption{Focal Loss}
  \label{fig: focal_relib}
\end{subfigure}
\begin{subfigure}{0.24\columnwidth}
  \centering
  \includegraphics[width=0.95\linewidth]{./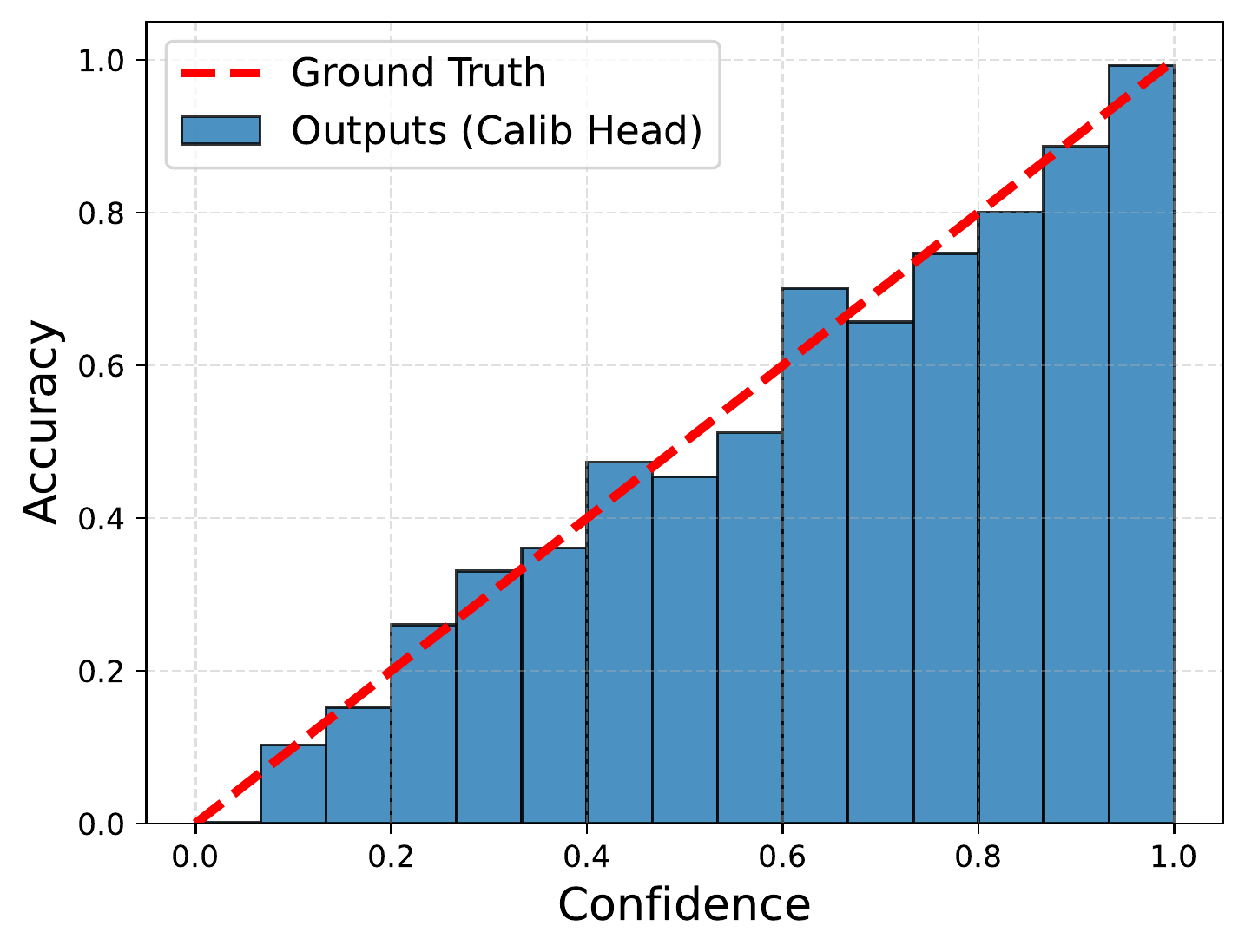} 
  \caption{ADH}
  \label{fig: adh_relib}
\end{subfigure}\\
\centering
\begin{subfigure}{0.24\columnwidth}
  \centering
  \includegraphics[width=0.95\linewidth]{./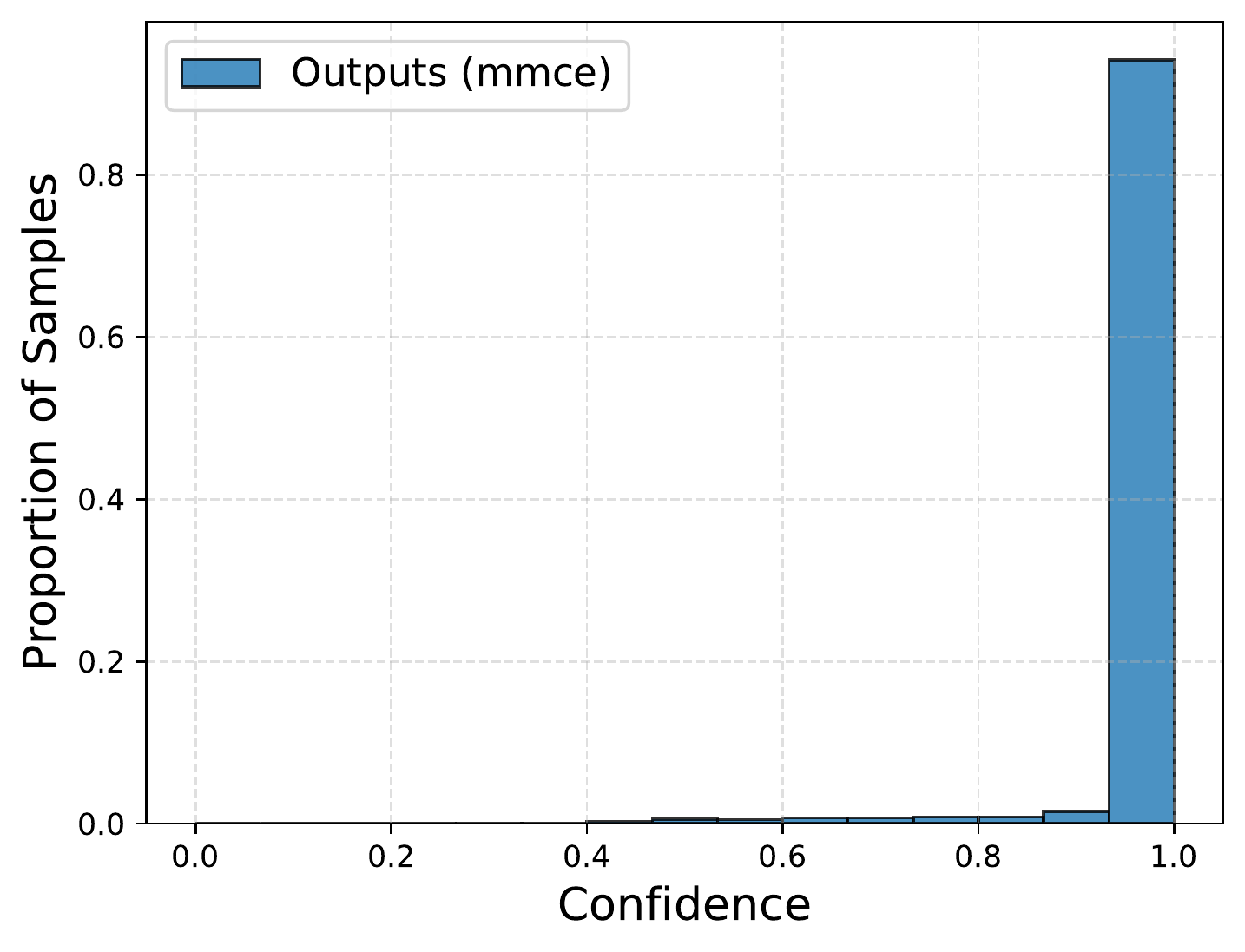}  
  \caption{MMCE}
  \label{fig: mmce_conf}
\end{subfigure}
\begin{subfigure}{0.24\columnwidth}
  \centering
  \includegraphics[width=0.95\linewidth]{./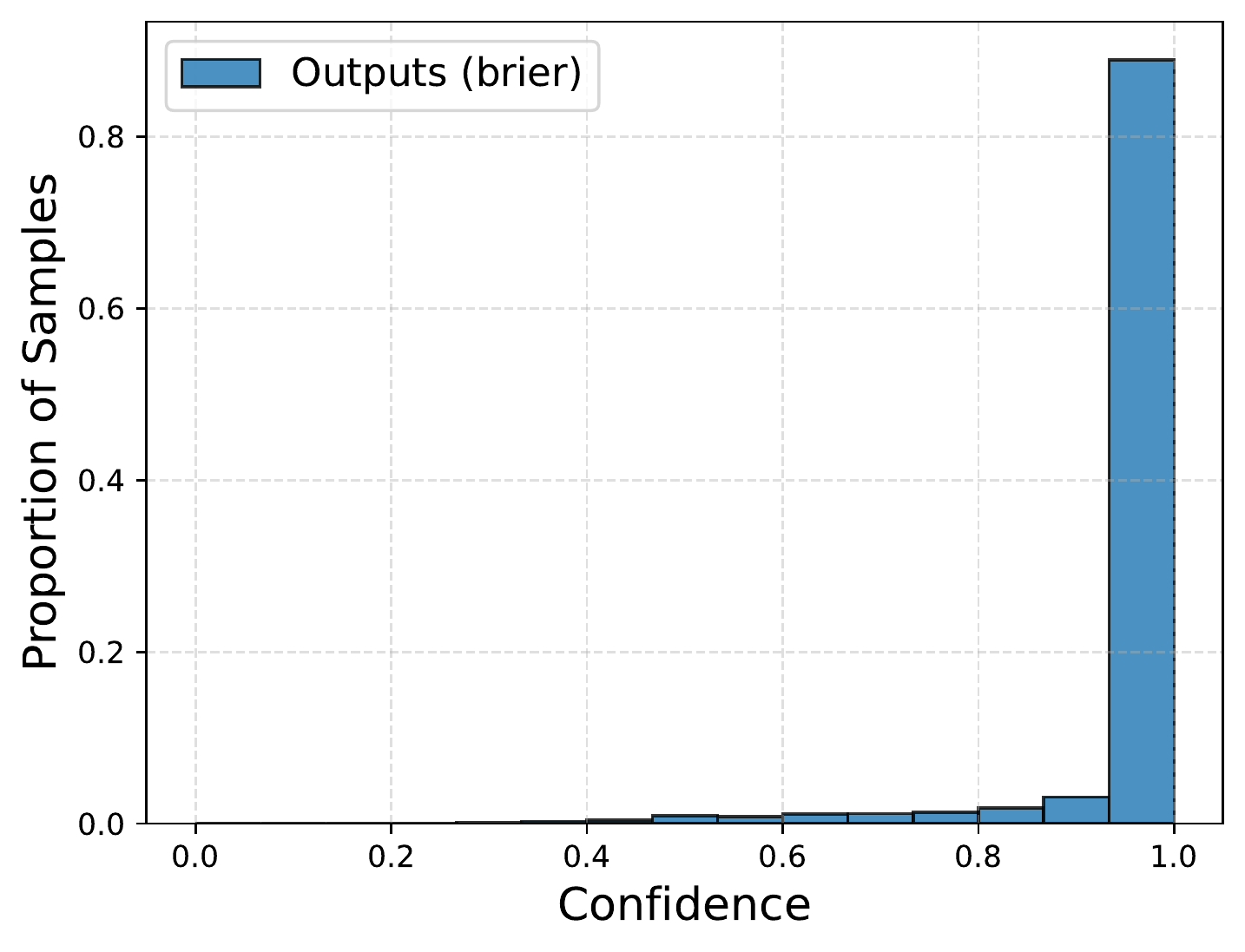} 
  \caption{Brier}
  \label{fig: brier_conf}
\end{subfigure}
\begin{subfigure}{0.24\columnwidth}
  \centering
  \includegraphics[width=0.95\linewidth]{./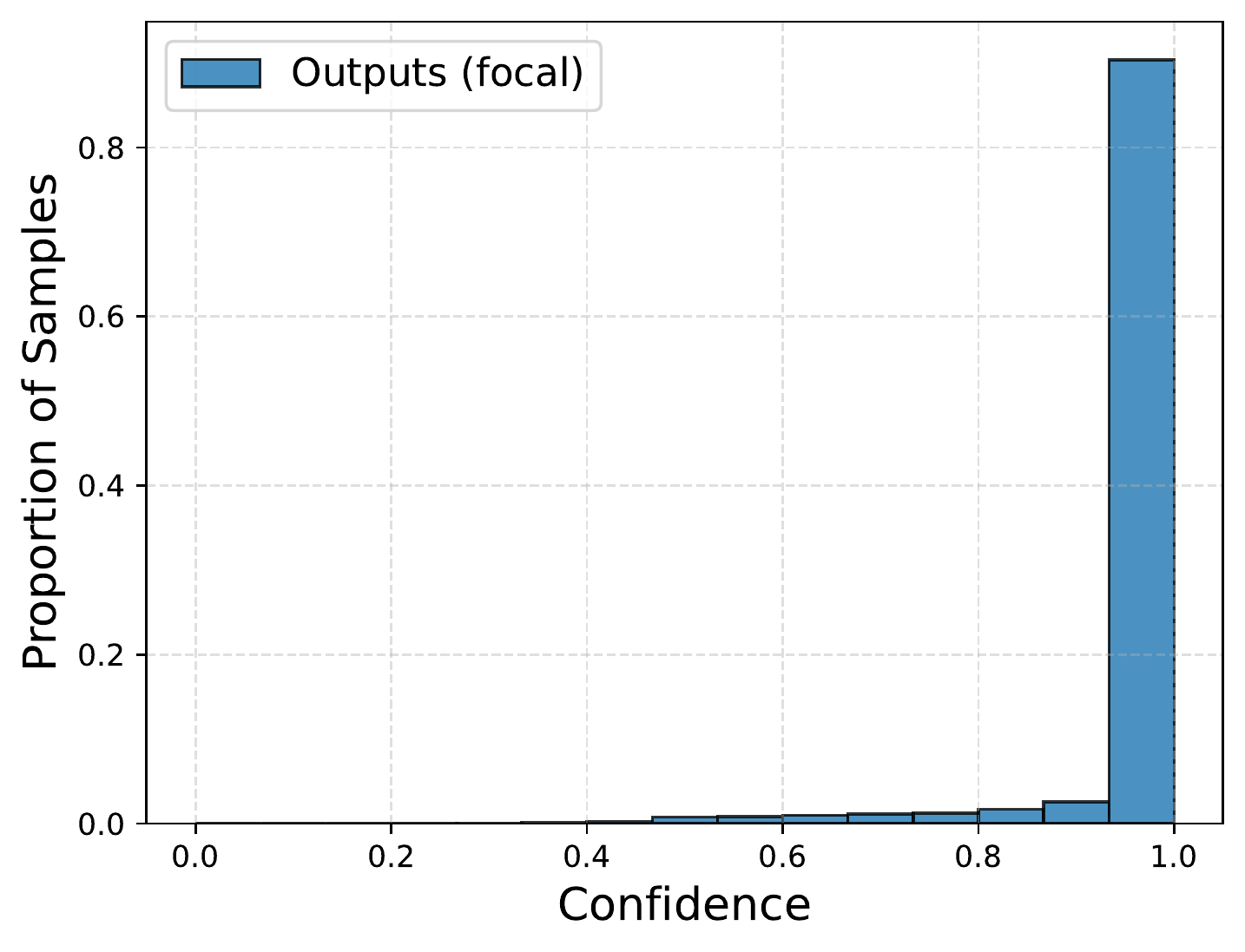}  
  \caption{Focal Loss}
  \label{fig: focal_conf}
\end{subfigure}
\begin{subfigure}{0.24\columnwidth}
  \centering
  \includegraphics[width=0.95\linewidth]{./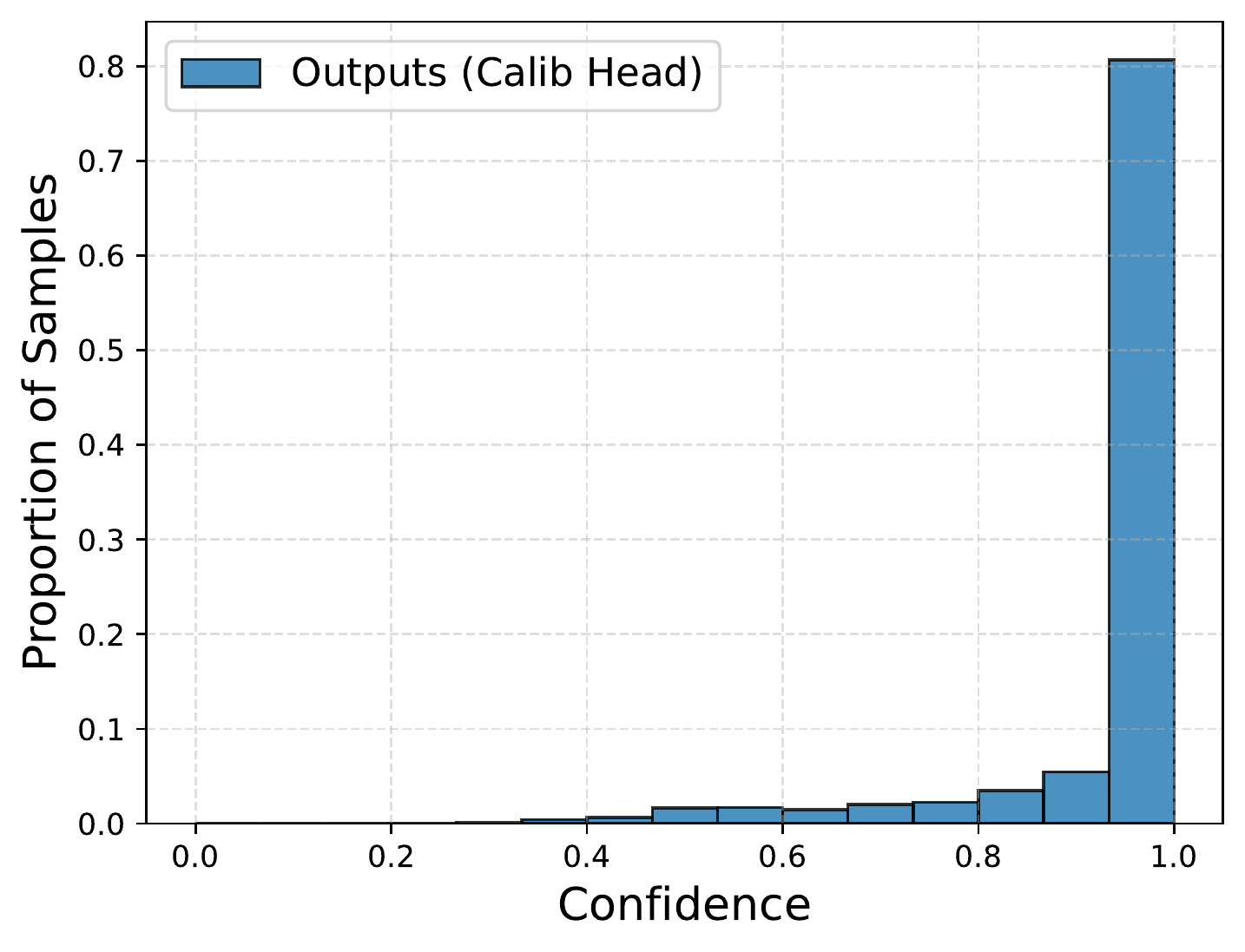} 
  \caption{ADH}
  \label{fig: adh_conf}
\end{subfigure}\\
\centering
\begin{subfigure}{0.24\columnwidth}
  \centering
  \includegraphics[width=0.95\linewidth]{./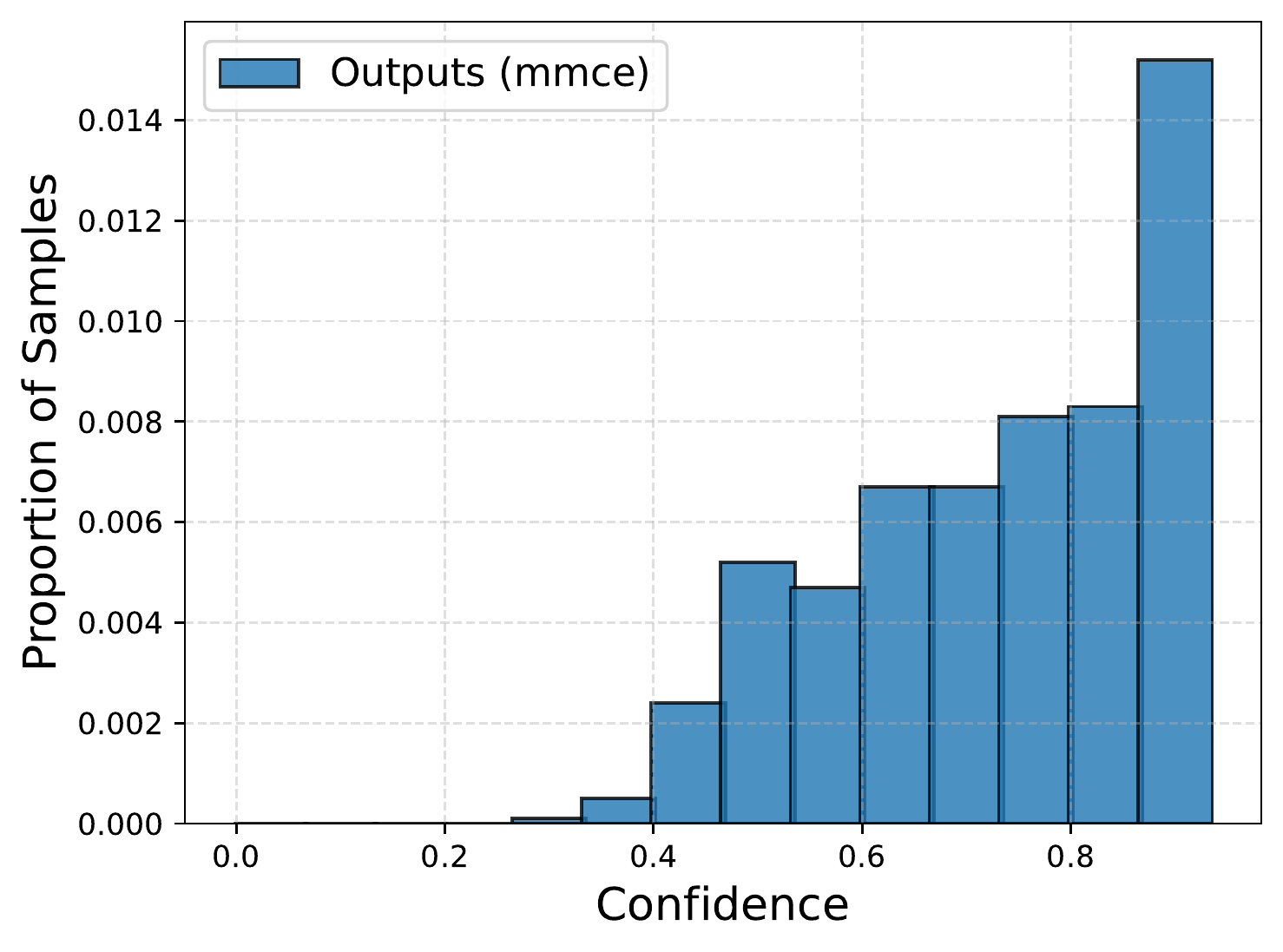}  
  \caption{MMCE}
  \label{fig: mmce_conf_red}
\end{subfigure}
\begin{subfigure}{0.24\columnwidth}
  \centering
  \includegraphics[width=0.95\linewidth]{./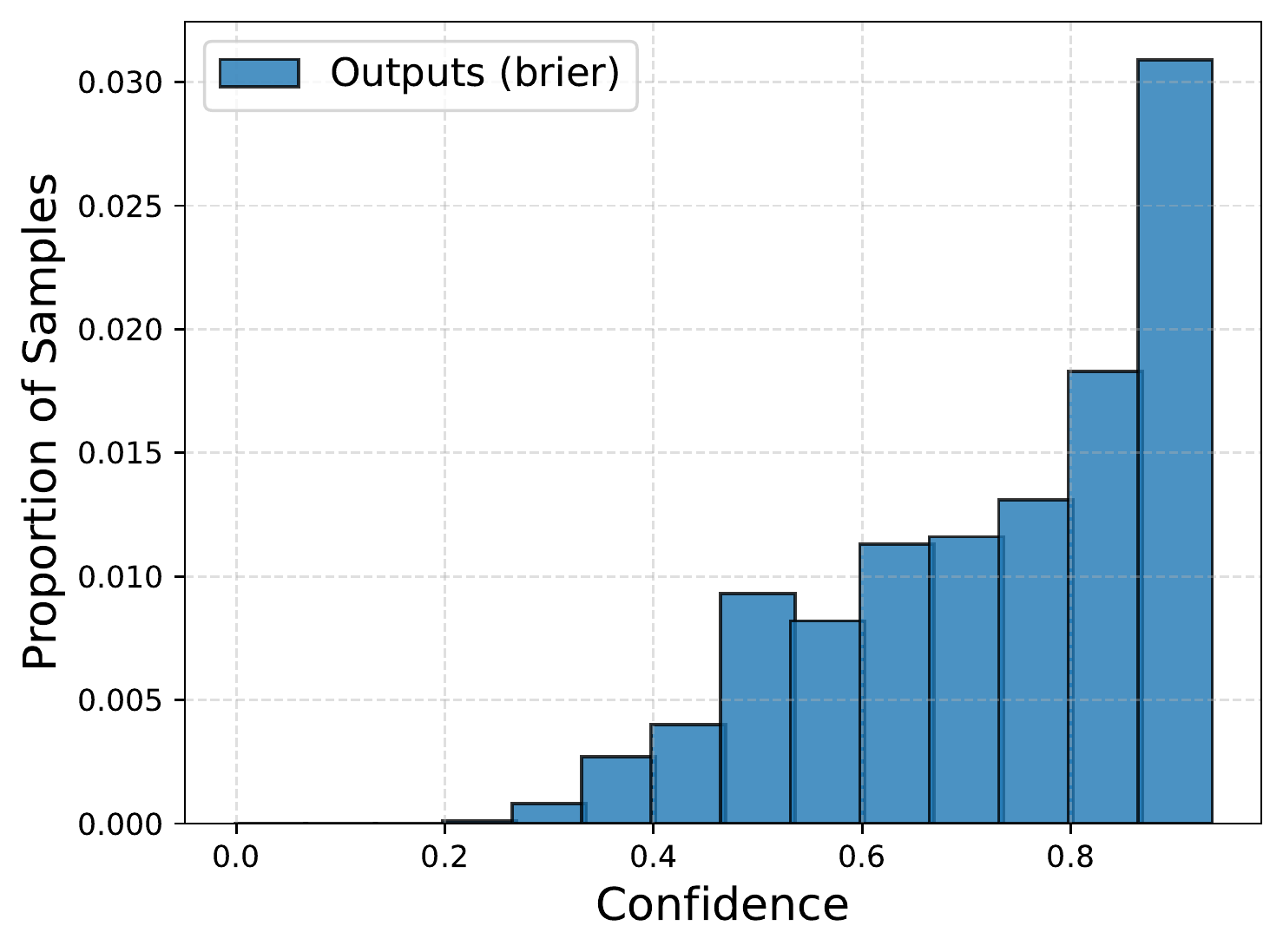} 
  \caption{Brier}
  \label{fig: brier_conf_red}
\end{subfigure}
\begin{subfigure}{0.24\columnwidth}
  \centering
  \includegraphics[width=0.95\linewidth]{./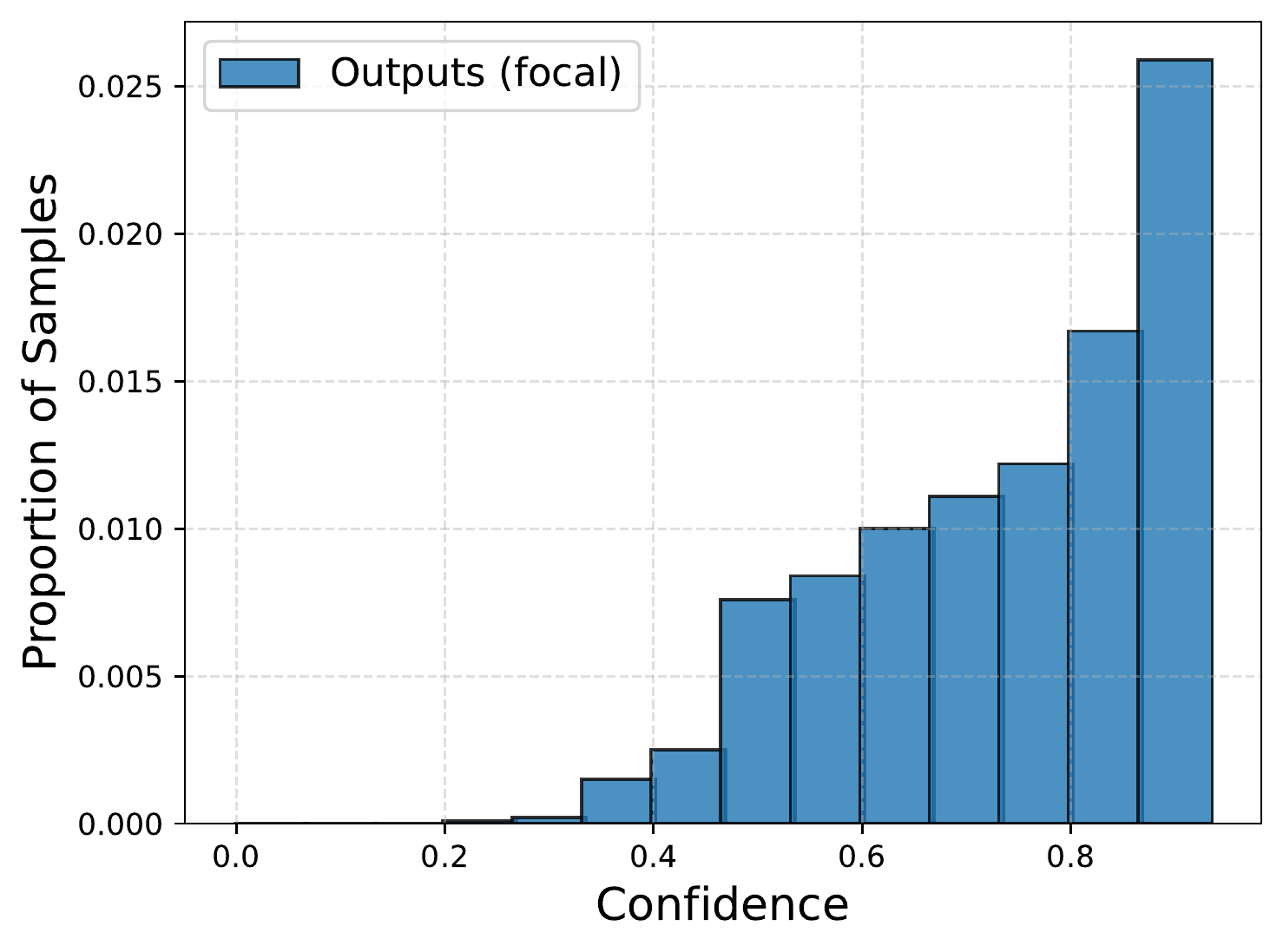}  
  \caption{Focal Loss}
  \label{fig: focal_conf_red}
\end{subfigure}
\begin{subfigure}{0.24\columnwidth}
  \centering
  \includegraphics[width=0.95\linewidth]{./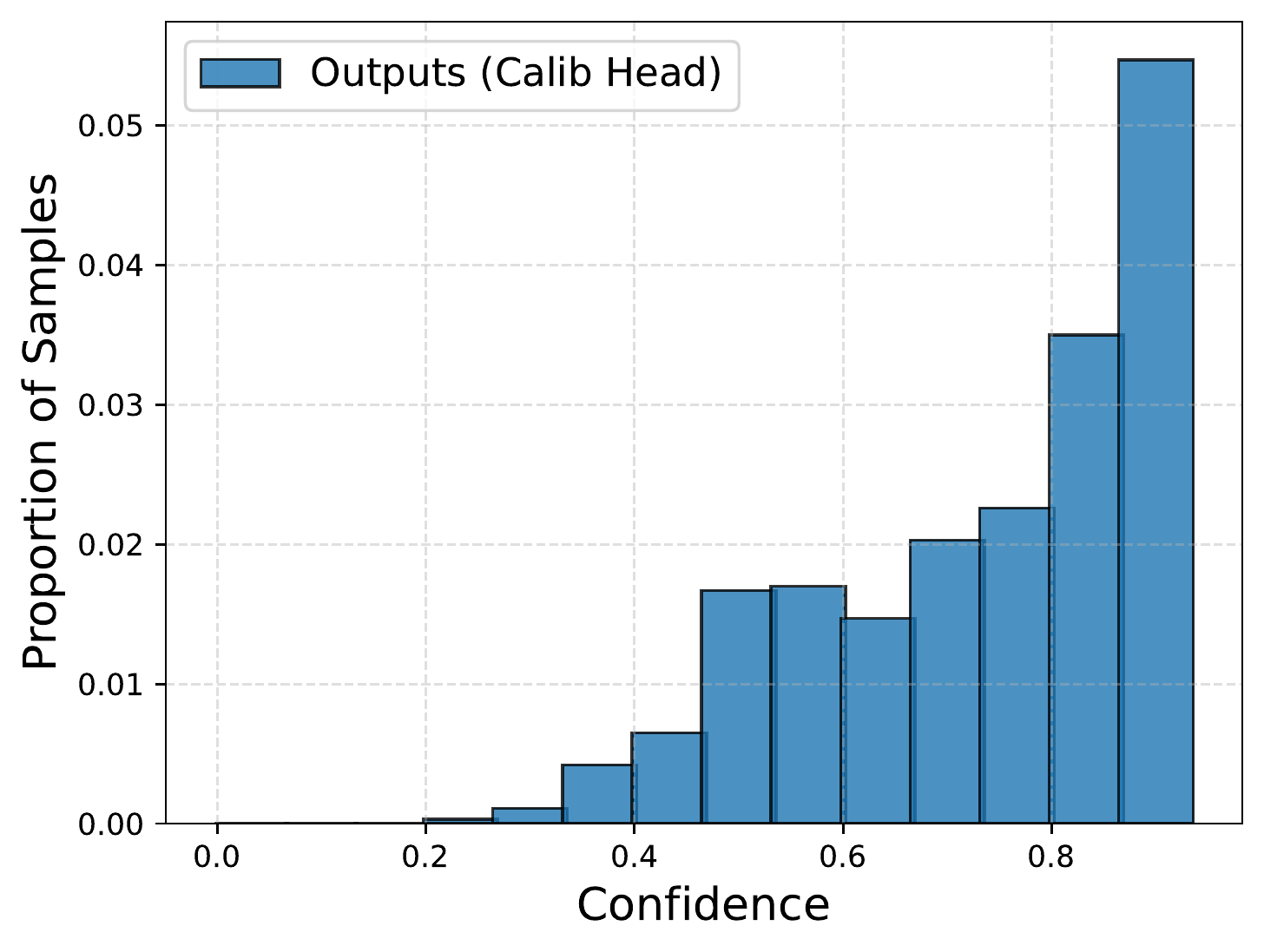} 
  \caption{ADH}
  \label{fig: adh_conf_red}
\end{subfigure}
\caption{The reliability diagrams of MMCE, BL, FL, and ADH are shown in (a), (b), (c), and (d), respectively. The confidence histograms of MMCE, BL, FL, and ADH are plotted in (e), (f), (g), and (h). The last bins of the confidence histograms of aforementioned four methods are eliminated in (i), (j), (k), and (l).}
\label{fig: relib_methods_compare}
\vskip -0.2in
\end{figure}

\subsection{Calibration Head vs Main Head}
\label{subsec: conf_relib_calib_vs_main}
In Fig. \ref{fig: relib_calib_vs_main}, the reliability and confidence histograms of the calibration head and main head of ResNet-50 trained on CIFAR-10 are displayed.
In the first row are plotted reliability diagrams. The histograms of confidence appear in the second row. 
 In the final row of the confidence histograms, the bins with the highest confidences are eliminated. 
 The figures reveals that the calibration head transfers samples from the high confidence bins to the low confidence bins, hence increasing predictive uncertainty.
 
\begin{figure}[!ht]
\vskip 0.2in
\centering
\begin{subfigure}{0.49\columnwidth}
  \centering
  \includegraphics[width=0.8\linewidth]{./results/figures/relib_hist_calib.pdf}  
  \caption{Calibration Head}
  \label{fig: relib_hist_calib_v1}
\end{subfigure}
\begin{subfigure}{0.49\columnwidth}
  \centering
  \includegraphics[width=0.8\linewidth]{./results/figures/relib_hist_main.pdf} 
  \caption{Main Head}
  \label{fig: relib_hist_main_v1}
\end{subfigure}\\
\centering
\begin{subfigure}{0.49\columnwidth}
  \centering
  \includegraphics[width=0.8\linewidth]{./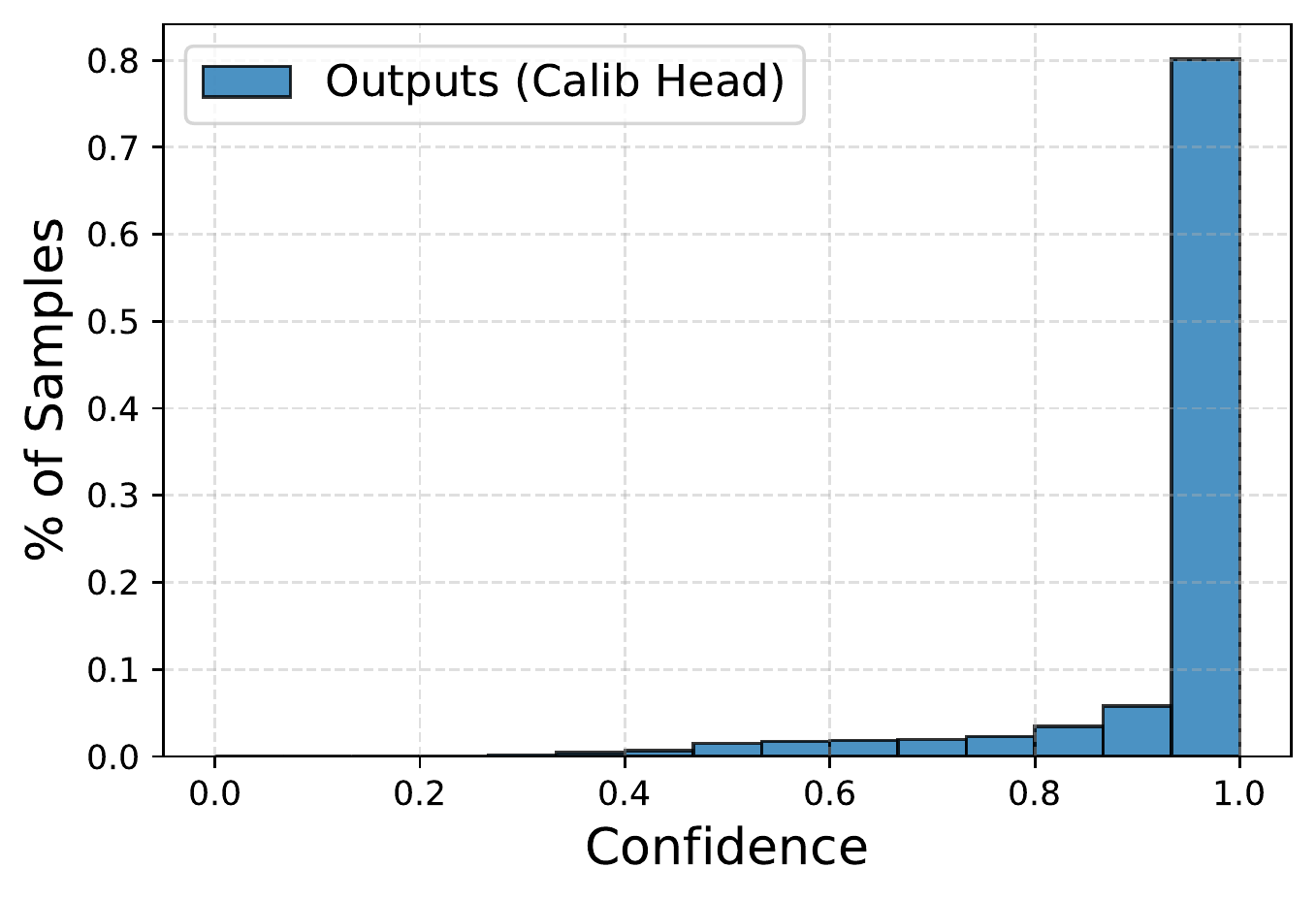}  
  \caption{Calibration Head}
  \label{fig: confid_hist_calib}
\end{subfigure}
\begin{subfigure}{0.49\columnwidth}
  \centering
  \includegraphics[width=0.8\linewidth]{./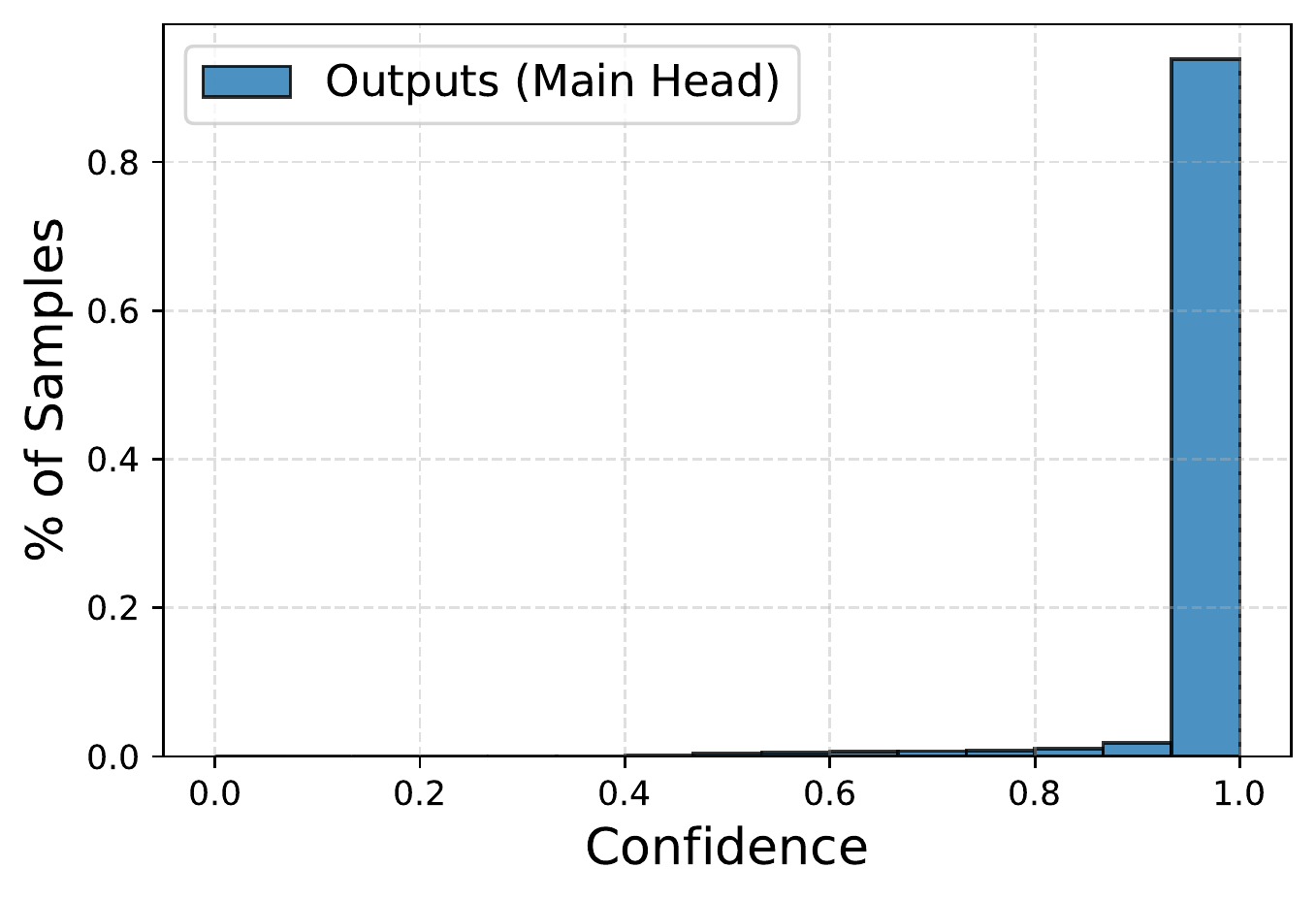} 
  \caption{Main Head}
  \label{fig: conf_hist_main}
\end{subfigure}\\
\centering
\begin{subfigure}{0.49\columnwidth}
  \centering
  \includegraphics[width=0.8\linewidth]{./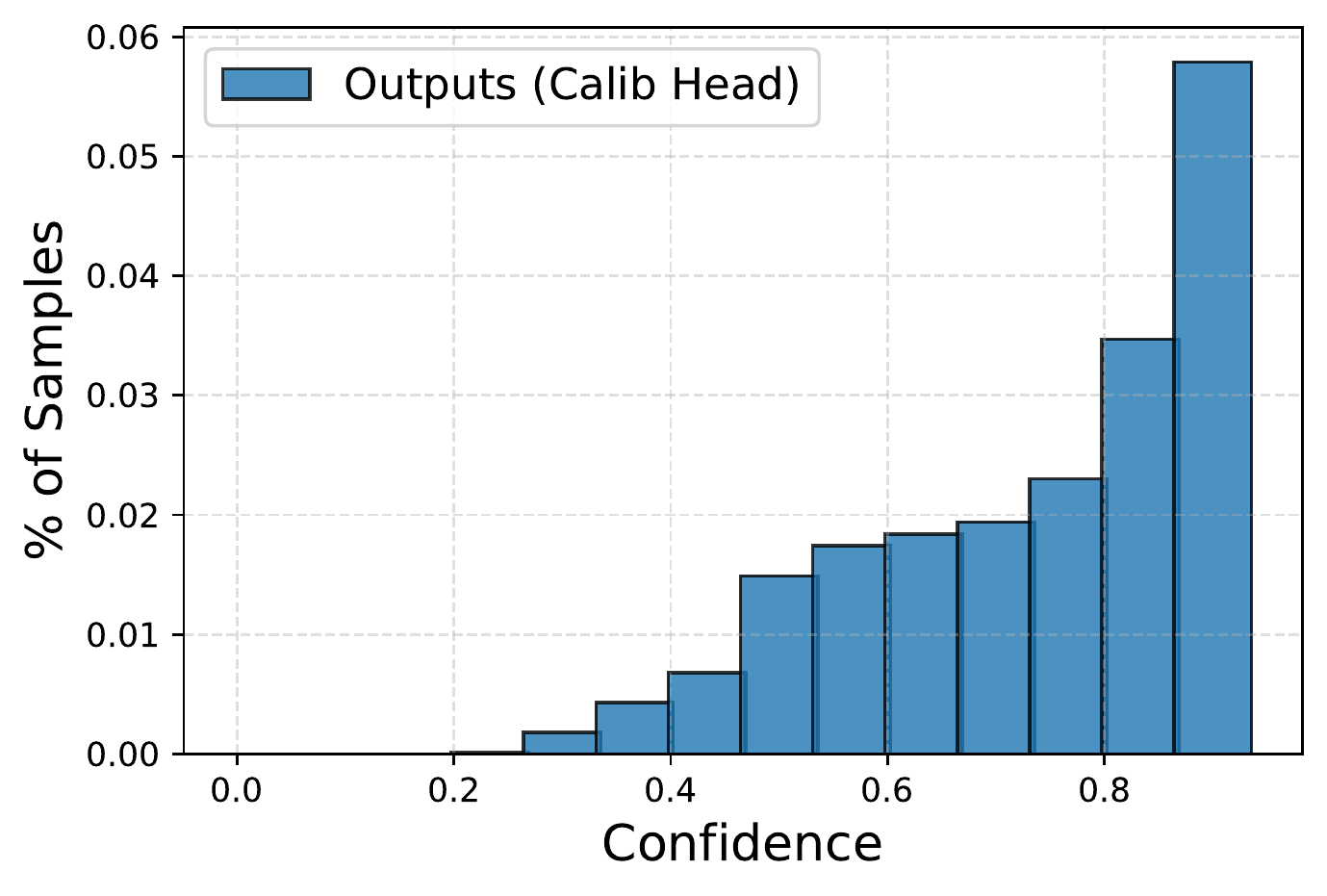}  
  \caption{Calibration Head}
  \label{fig: conf_hist_calib_red}
\end{subfigure}
\begin{subfigure}{0.49\columnwidth}
  \centering
  \includegraphics[width=0.8\linewidth]{./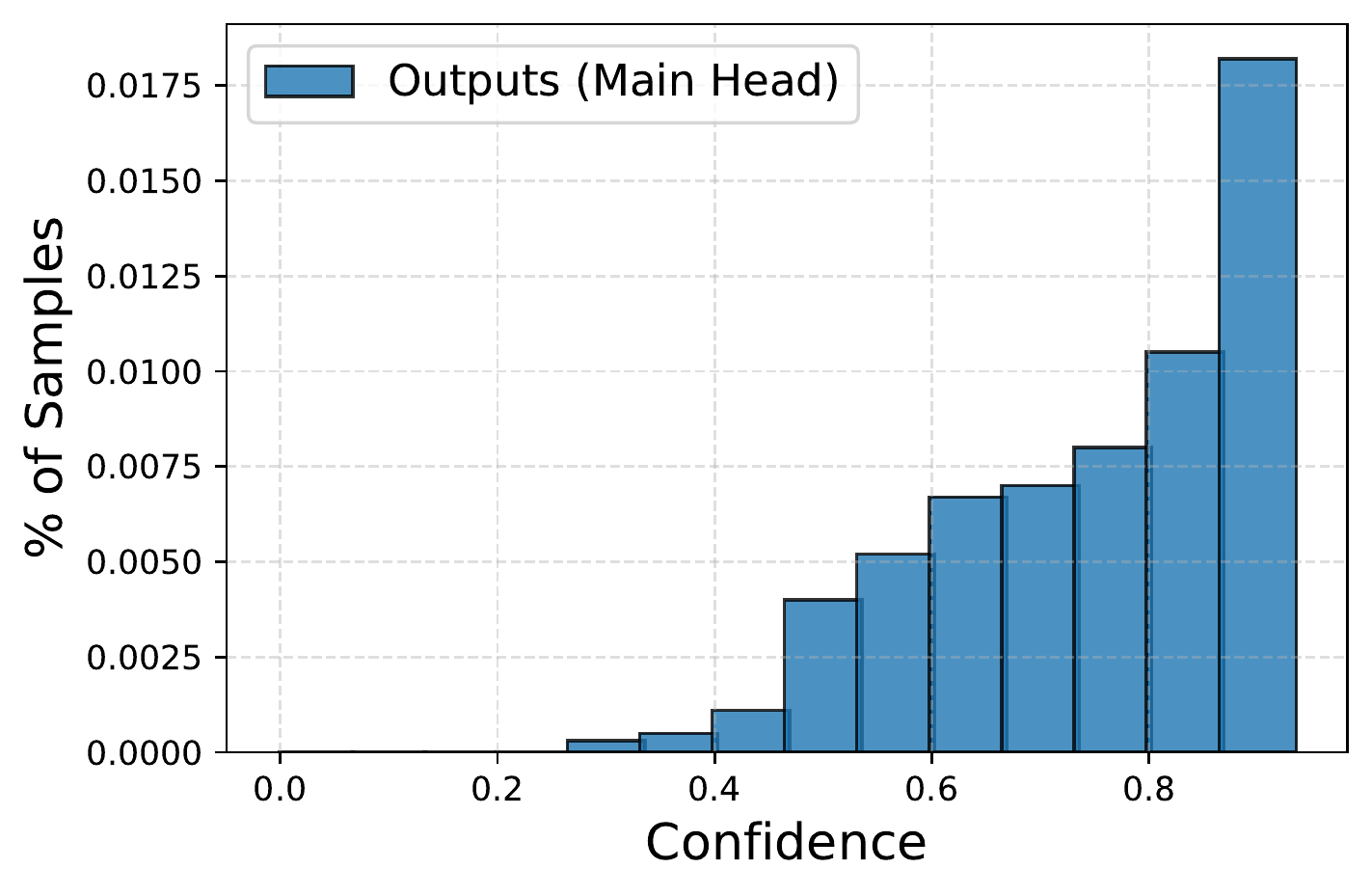} 
  \caption{Main Head}
  \label{fig: conf_hist_main_red}
\end{subfigure}
\caption{(a) and (b) depict the reliability histograms for the calibration head and the main head, respectively. (c) and (d) exhibit, respectively, the confidence histograms of the calibration head and the main head. In (e) and (f), the confidence histograms from (c) and (d) are displayed without the final bin.}
\label{fig: relib_calib_vs_main}
\vskip -0.2in
\end{figure}

\subsection{Annealing}
\label{subsec: varying_beta}
In Fig. \ref{fig: relib_annealing}, we display the reliability histograms and the confidence histograms of calibration head with varies $\beta$ for ResNet-50 trained on CIFAR-10. 
As studied in \cref{subsec: annealing}, an increase in $\beta$ leads to an increase in the entropy of confidence distribution, as more samples from the higher confidence bins are shifted to the lower confidence bins, resulting in higher predictive uncertainty.
The ECE will decrease and then increase as $\beta$ grow, because the model with smaller $\beta$ attempts to be overconfident in its prediction while the model with a larger $\beta$ appears to be underconfident in its prediction as shown in the first row of Fig. \ref{fig: relib_annealing}.

In Fig. \ref{fig: grads_hist}, we display the histograms of the gradient norm in the last (the last $10^{\text{th}}$ step, the last $5^{\text{th}}$ step, and the final step) and initial (the first step, the $5^{\text{th}}$ step, the $10^{\text{th}}$ step) steps from Epoch $60$ to Epoch $200$ to demonstrate that rescaling the logits dynamically leads to the rescaled $2$-norm of gradients, as proved in Theorem \ref{the: annealing_gradient}. 
Consistent with the discussion in \cref{subsec: why_work}: $\beta$ less than $1$ or higher than $1$ will magnify or reduce the gradient norm respectively, Fig. \ref{fig: grads_hist_end} indicates that the distribution of the gradient norm with $\beta=0.6$ resides in the right side of the distribution of the gradient norm with $\beta=1.5$.
From left to right in Fig. \ref{fig: grads_hist_end}, which corresponds to rising $\beta$, we notice that the discrepancy between the two distributions decreases until it nearly disappears, as intended by our design.
However, in the first few steps, factor $\beta$ dominates the rescaling factor since all components of vector $\mathbf{z}$ are almost equally spread, so $\beta$ greater than $1$ or less than $1$ will result in magnified or reduced gradient norm, which is the reverse of the previously mentioned case, as illustrated in Fig. \ref{fig: grads_hist_start}. 

\begin{figure}[!ht]
\vskip 0.2in
\centering
\begin{subfigure}{0.24\columnwidth}
  \centering
  \includegraphics[width=0.95\linewidth]{./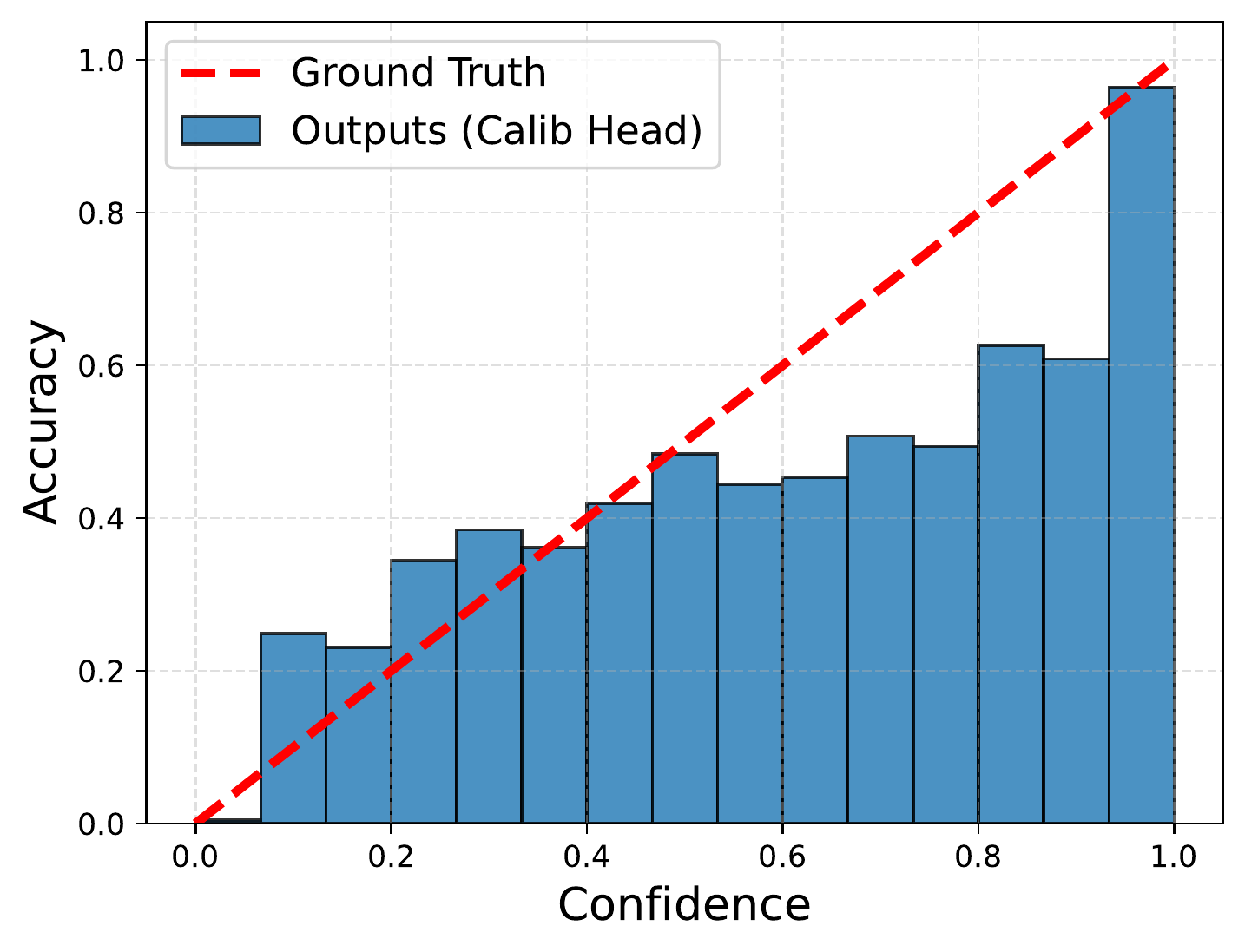}  
  \caption{$\beta = 0.1$}
  \label{fig: relib_beta_0.1}
\end{subfigure}
\begin{subfigure}{0.24\columnwidth}
  \centering
  \includegraphics[width=0.95\linewidth]{./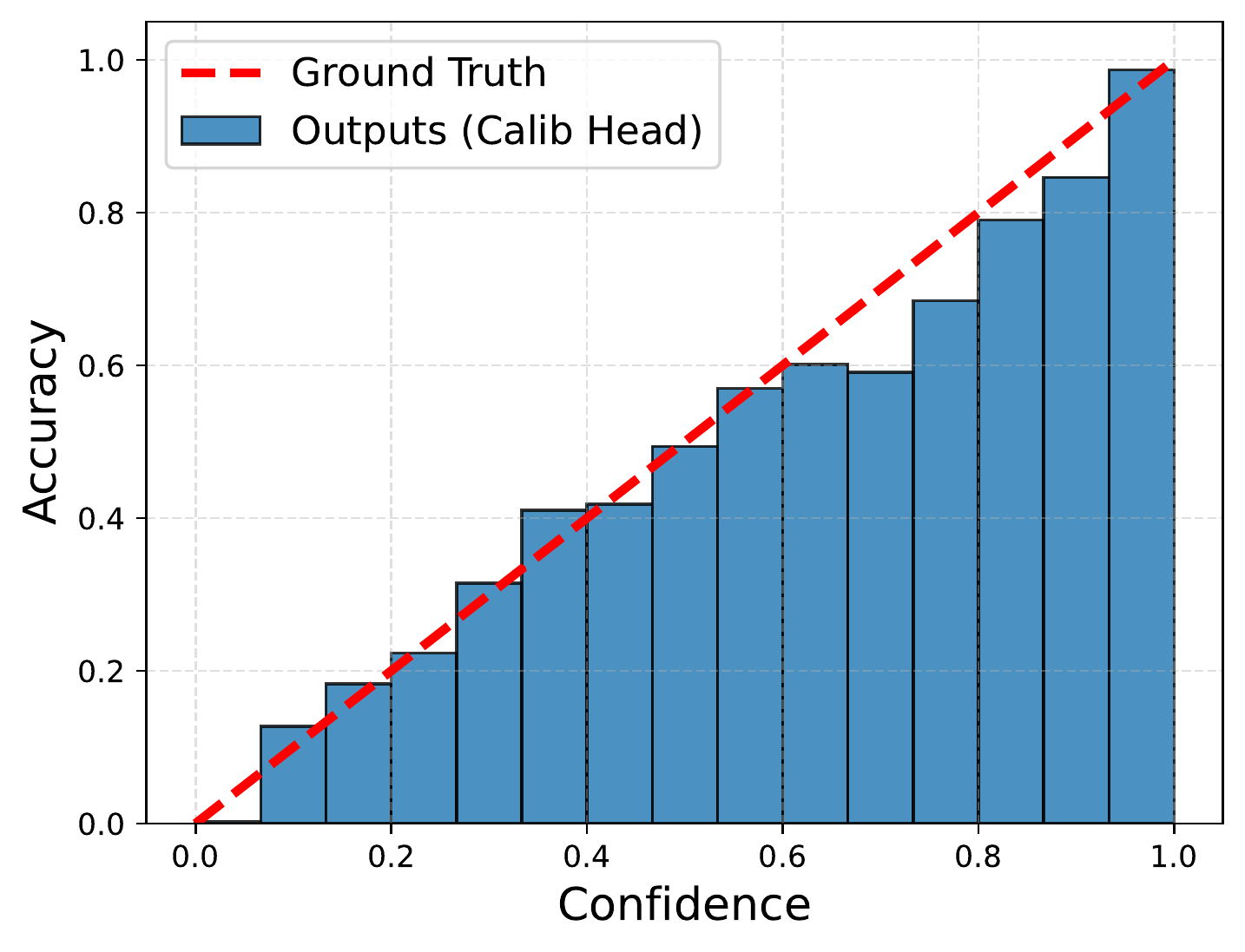} 
  \caption{$\beta = 1$}
  \label{fig: relib_beta_1}
\end{subfigure}
\begin{subfigure}{0.24\columnwidth}
  \centering
  \includegraphics[width=0.95\linewidth]{./results/figures/relib_hist_1.2.pdf}  
  \caption{$\beta = 1.2$}
  \label{fig: relib_beta_1.2}
\end{subfigure}
\begin{subfigure}{0.24\columnwidth}
  \centering
  \includegraphics[width=0.95\linewidth]{./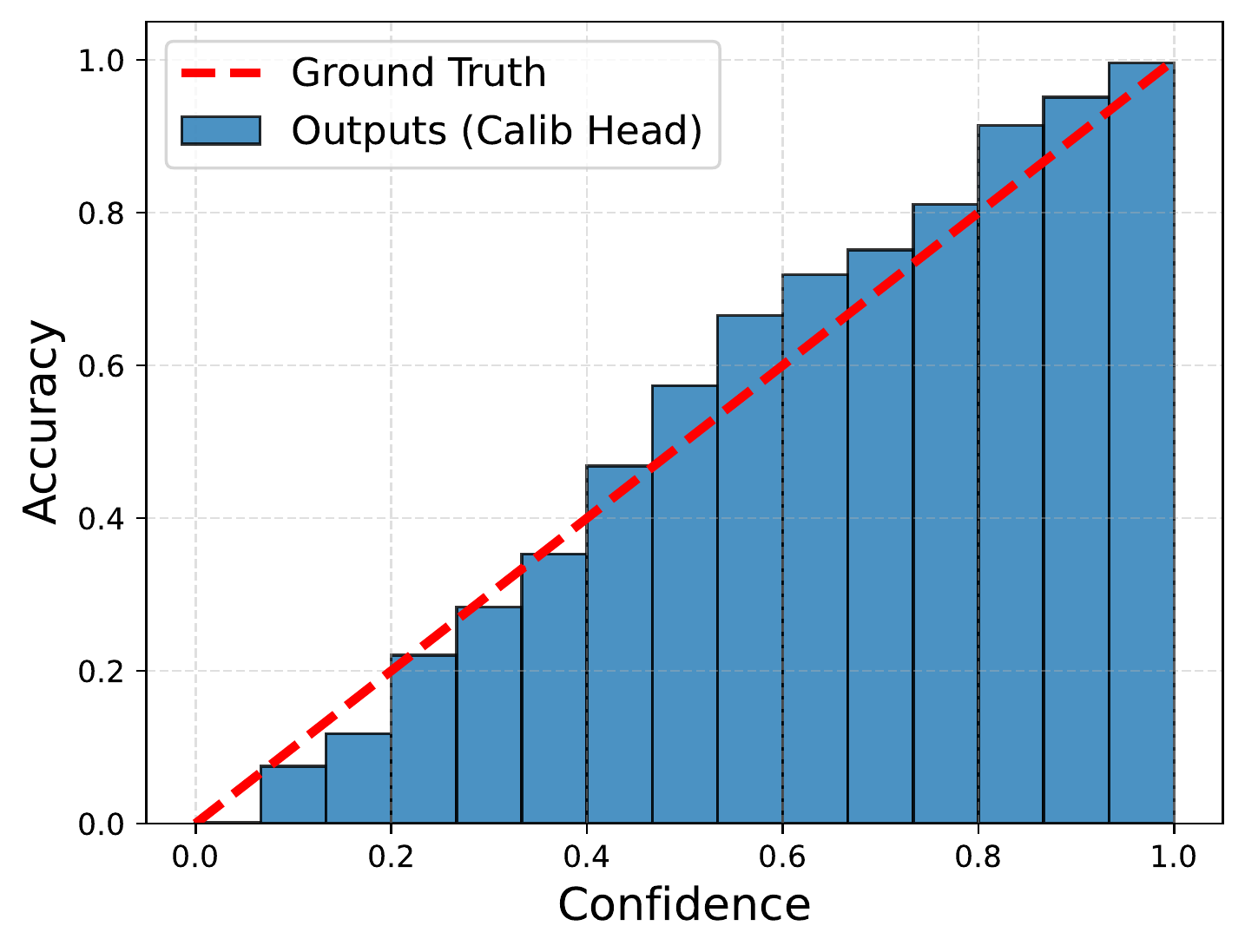} 
  \caption{$\beta=2.0$}
  \label{fig: relib_beta_2.0}
\end{subfigure}\\
\centering
\begin{subfigure}{0.24\columnwidth}
  \centering
  \includegraphics[width=0.95\linewidth]{./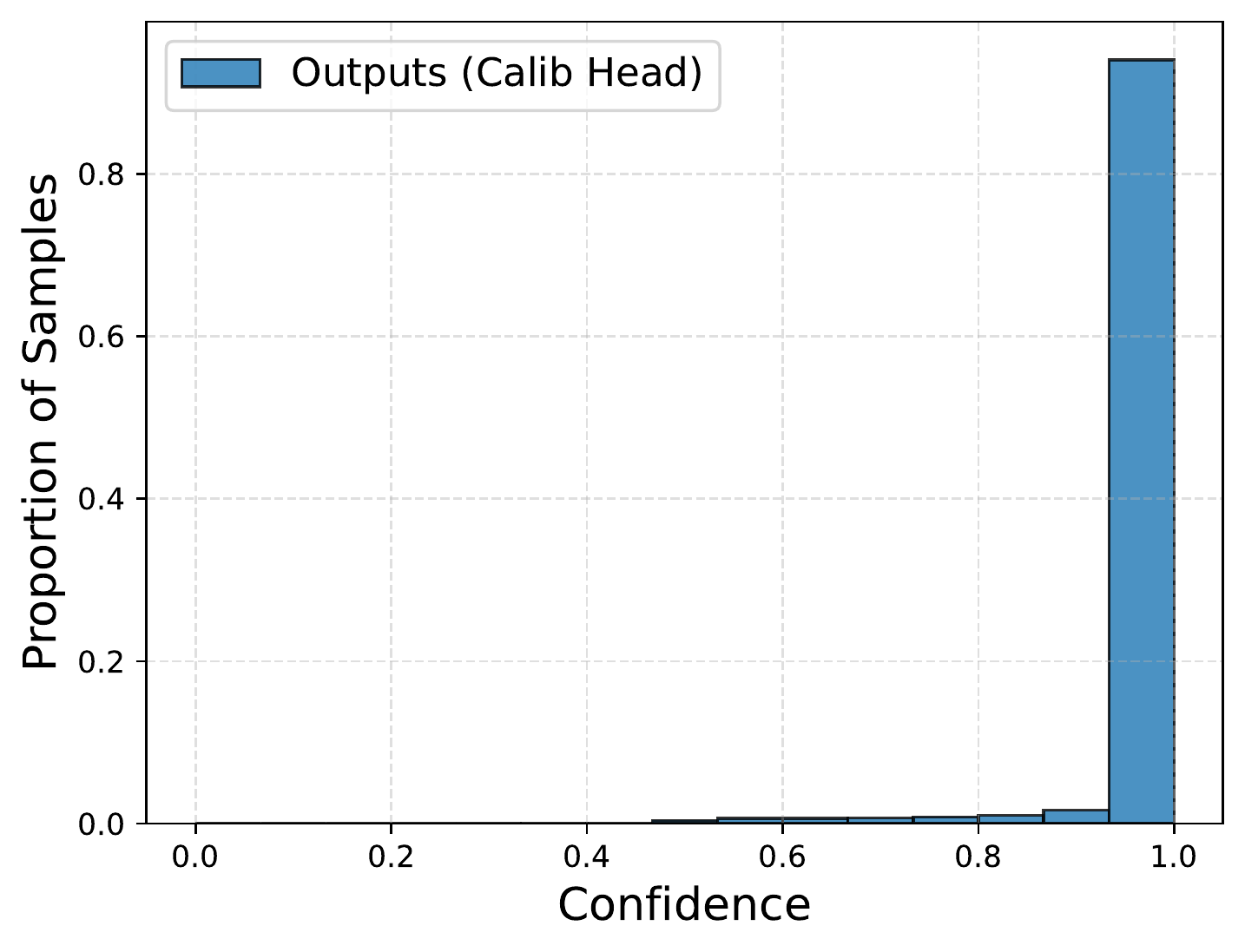}  
  \caption{$\beta = 0.1$}
  \label{fig: confid_beta_0.1}
\end{subfigure}
\begin{subfigure}{0.24\columnwidth}
  \centering
  \includegraphics[width=0.95\linewidth]{./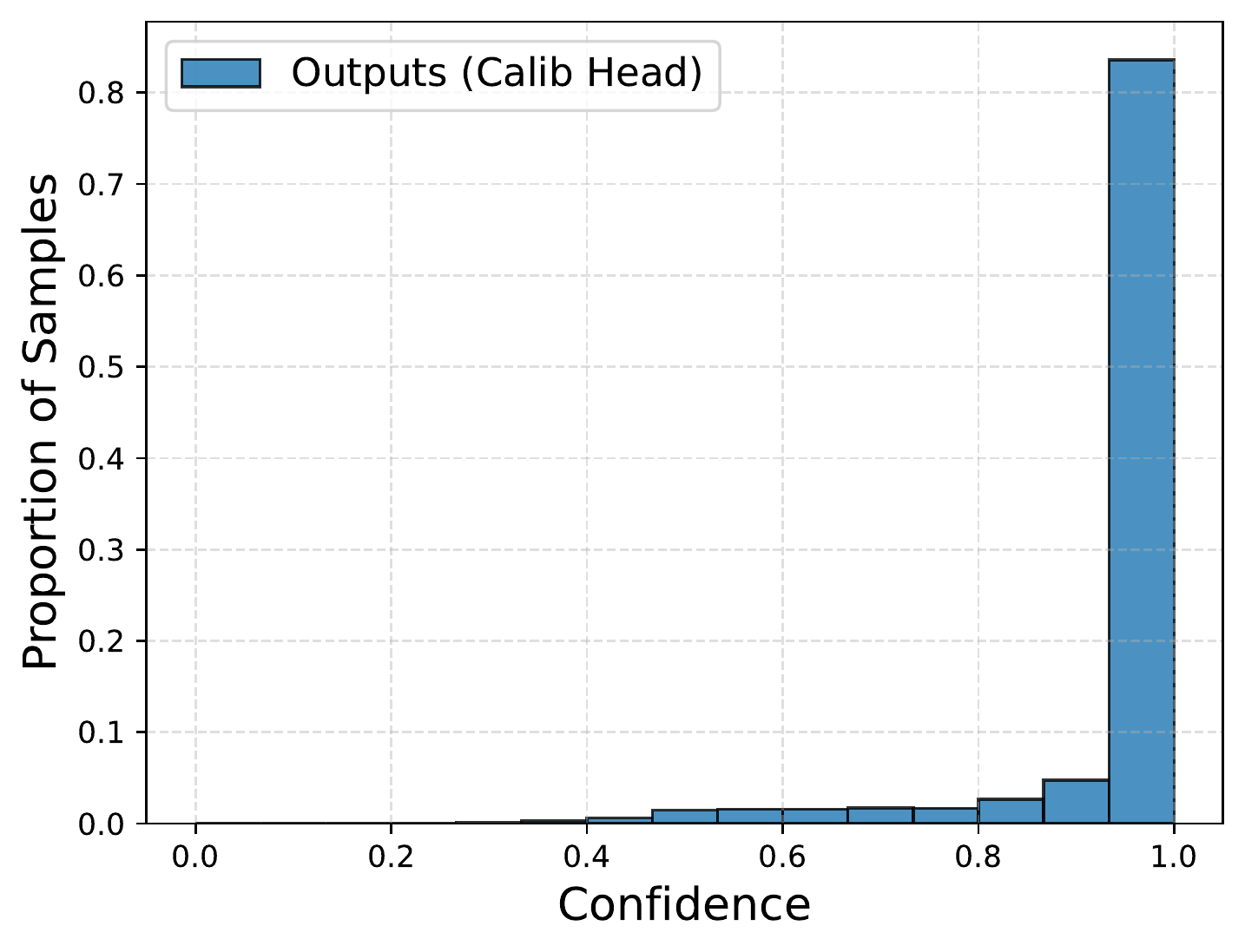} 
  \caption{$\beta = 1$}
  \label{fig: conf_beta_1}
\end{subfigure}
\begin{subfigure}{0.24\columnwidth}
  \centering
  \includegraphics[width=0.95\linewidth]{./results/figures/conf_hist_1.2.pdf}  
  \caption{$\beta = 1.2$}
  \label{fig: confid_beta_1.2}
\end{subfigure}
\begin{subfigure}{0.24\columnwidth}
  \centering
  \includegraphics[width=0.95\linewidth]{./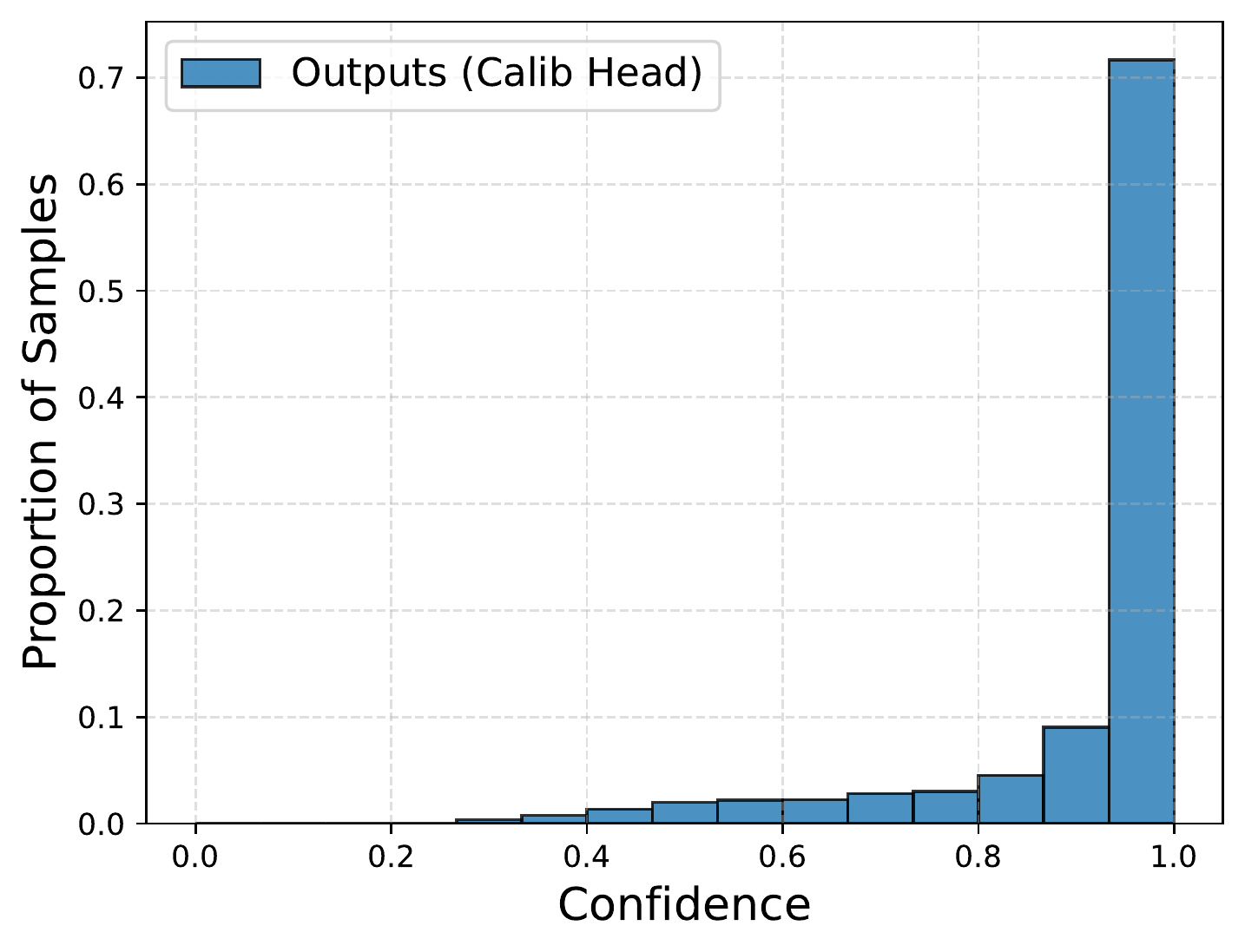} 
  \caption{$\beta=2.0$}
  \label{fig: confid_beta_2.0}
\end{subfigure}\\
\centering
\begin{subfigure}{0.24\columnwidth}
  \centering
  \includegraphics[width=0.95\linewidth]{./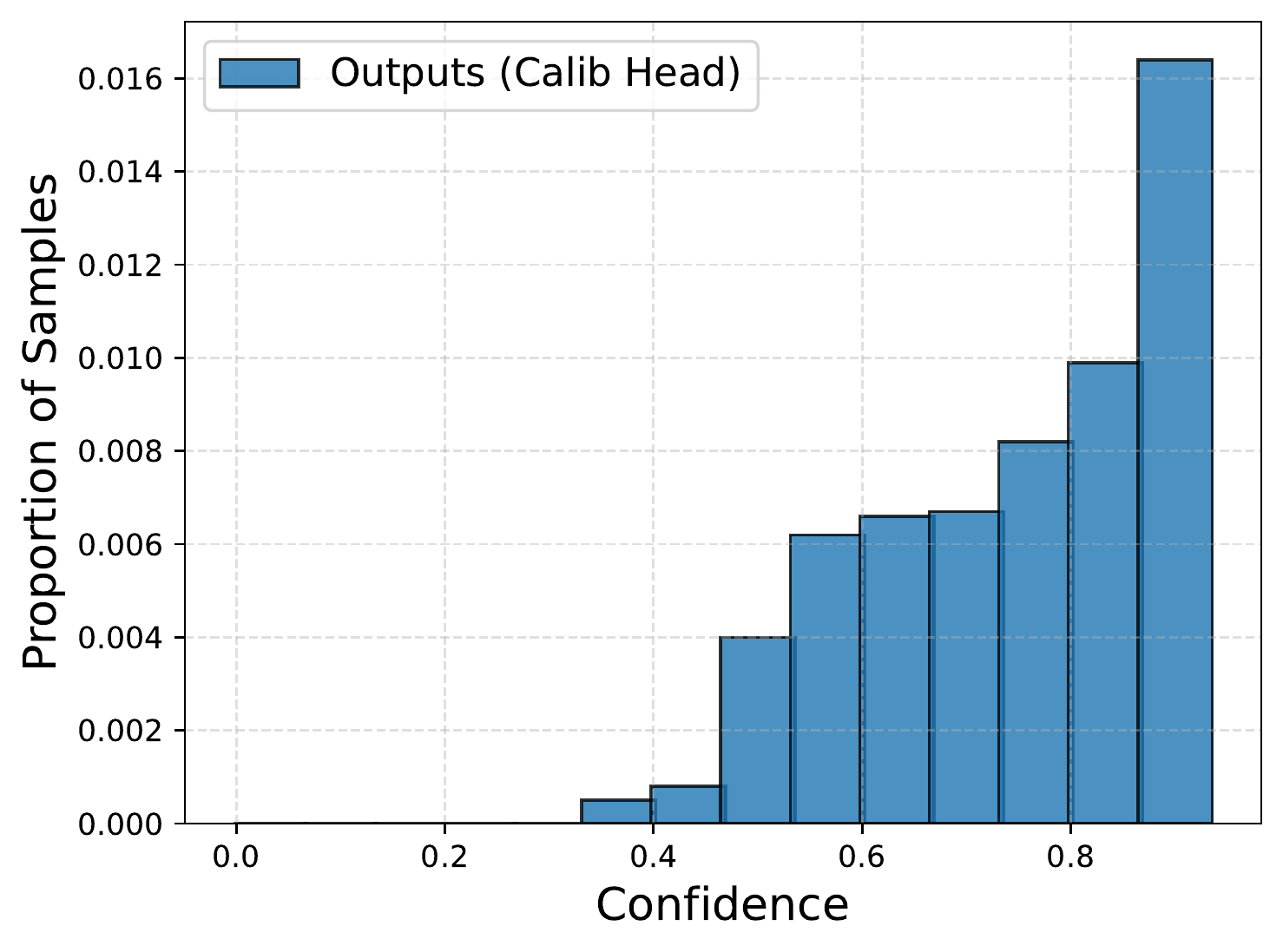}  
  \caption{$\beta = 0.1$}
  \label{fig: confid_beta_0.1_red}
\end{subfigure}
\begin{subfigure}{0.24\columnwidth}
  \centering
  \includegraphics[width=0.95\linewidth]{./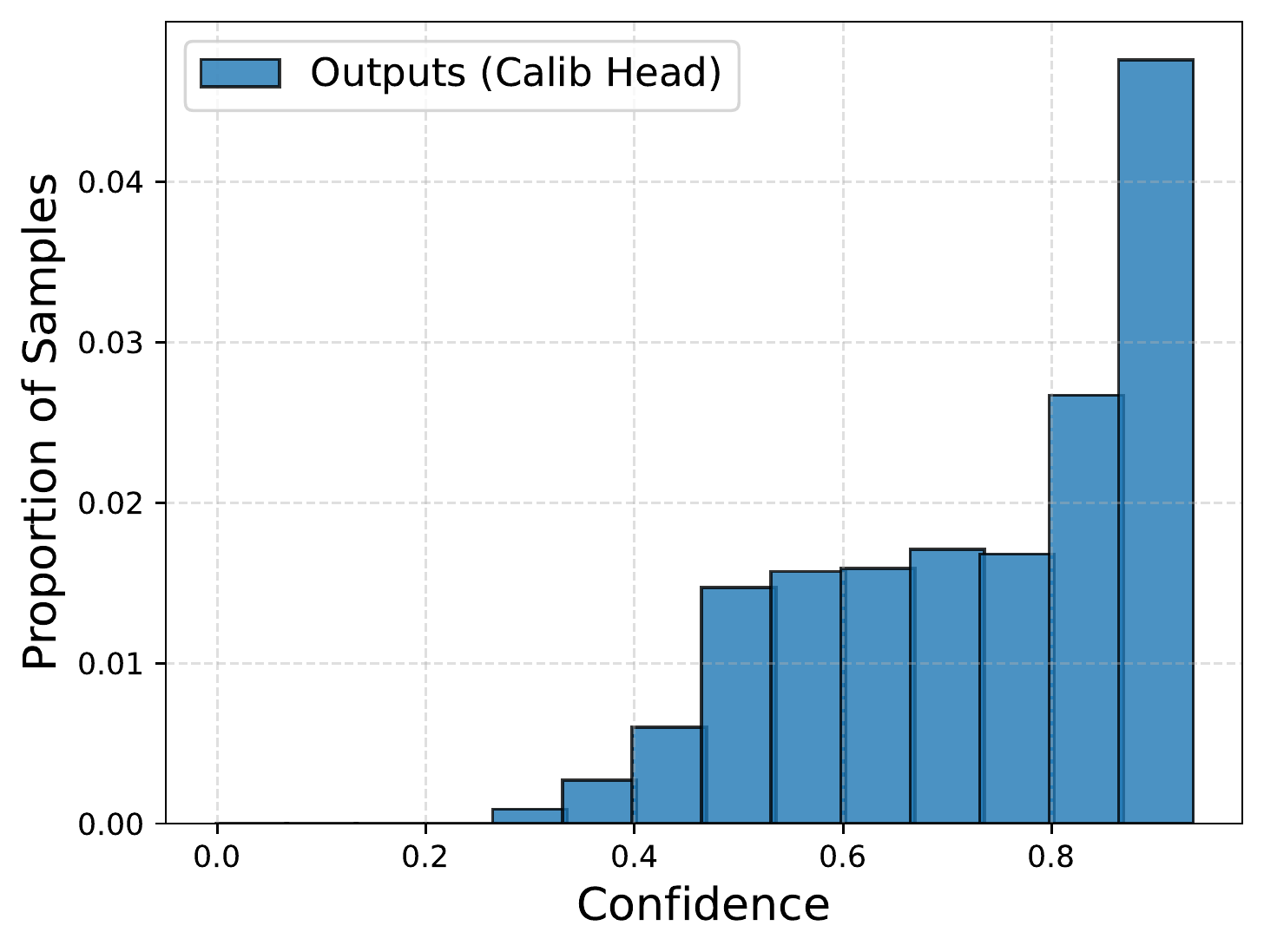} 
  \caption{$\beta = 1$}
  \label{fig: conf_beta_1.0_red}
\end{subfigure}
\begin{subfigure}{0.24\columnwidth}
  \centering
  \includegraphics[width=0.95\linewidth]{./results/figures/conf_hist_1.2_red.pdf}  
  \caption{$\beta = 1.2$}
  \label{fig: conf_beta_1.2_red}
\end{subfigure}
\begin{subfigure}{0.24\columnwidth}
  \centering
  \includegraphics[width=0.95\linewidth]{./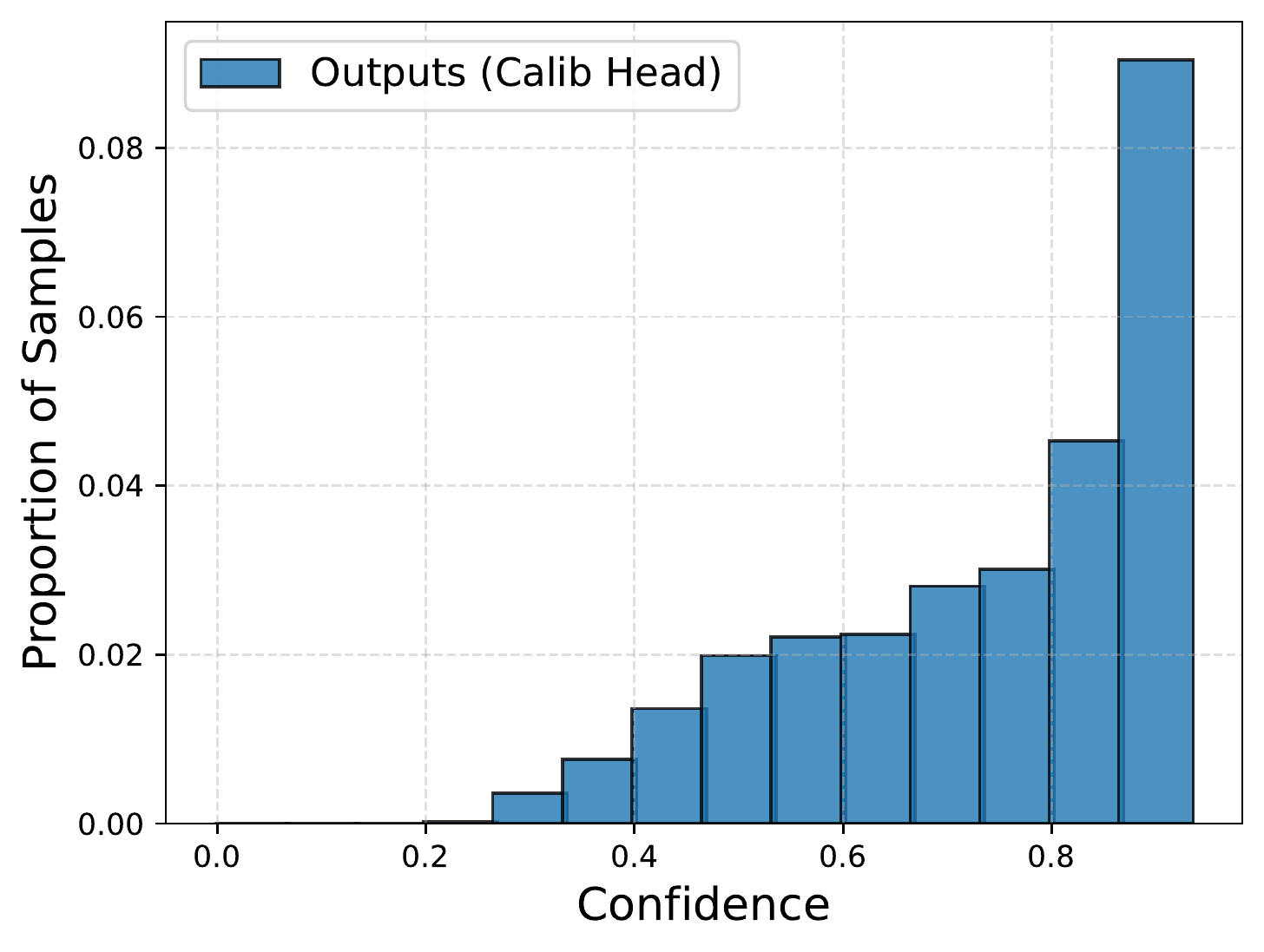} 
  \caption{$\beta=2.0$}
  \label{fig: conf_beta_2.0_red}
\end{subfigure}
\caption{With $\beta$ set to $0.1, $1, $1.2$, and $2.0$, the reliability diagrams for ADH are shown in (a), (b), (c), and (d), respectively. In (e), (f), (g), and (h), we show the confidence histograms for the same $\beta$ list. In (i), (j), (k), and (l), the identical histograms as in (e), (f), (g), and (h), are presented, but the last bin is removed.}
\label{fig: relib_annealing}
\vskip -0.2in
\end{figure}

\begin{figure}[!ht]
%\begin{figure}[H]
\vskip 0.2in
\centering
\begin{subfigure}{0.99\columnwidth}
\includegraphics[width=0.95\linewidth]{./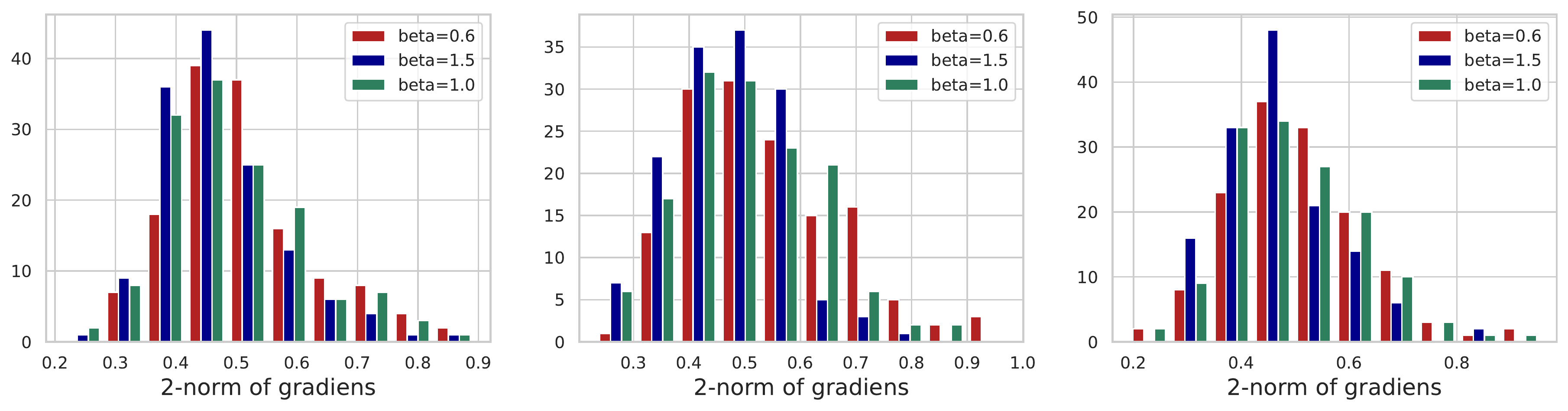}  
  \caption{Left, Middle, Right: Histograms of the $2$-norm of the gradients in the last $10^{\text{th}}$, the last $5^{\text{th}}$, and the final step.}
  \label{fig: grads_hist_end} 
\end{subfigure} \\
\centering
\begin{subfigure}{0.99\columnwidth}
\includegraphics[width=0.95\linewidth]{./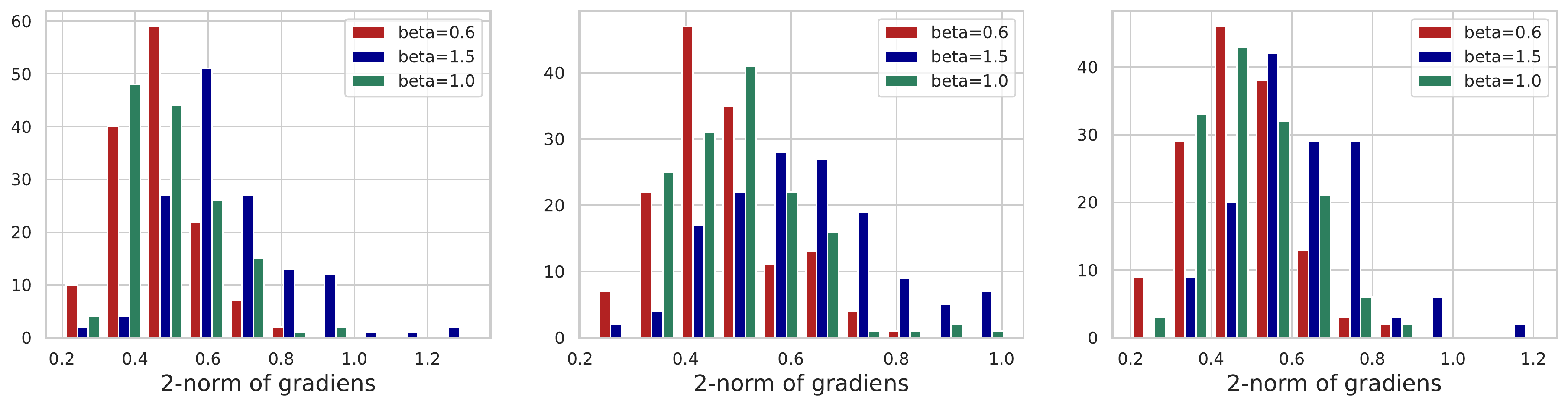}  
\caption{Left, Middle, Right: Histograms of the $2$-norm of the gradients in the first step, the $5^{\text{th}}$ step, and the $10^{\text{th}}$ step.}
\label{fig: grads_hist_start} 
\end{subfigure}
\caption{In this Figure, we present the histogram of the $2$-norm of the gradients of calibration head gathered in one specific step of training from Epoch $60$ to Epoch $200$ for ResNet-50 on CIFAR-10. For (a) and (b), the histograms of the gradients in the final few steps and initial few steps, respectively, are displayed.}
\label{fig: grads_hist}
\vskip -0.2in
\end{figure}

\section{Complexity and Implementation}
\label{subsec: complexity_app}
In Table \ref{tab: complexity}, we show the computation overhead and also ECE of our ADH method for various calibration period $k$ of ResNet-50 on CIFAR-10. It is shown that $k=70$ maintains good performance with only $1.28\%$ of overhead.

We provide a simple implementation of ADH in Pytorch, and about only thirty lines of codes are required as shown in Fig. \ref{fig: code_adh}.

\begin{table*}[ht]
\vskip 0.15in
\begin{center}
\begin{small}
\begin{sc}
\resizebox{0.55\textwidth}{!}
{%
\begin{tabular}{c c c c c c c}
\toprule 
Calibration Period & 10 & 20 & 30 & 40 & 50 & 70 \\
\midrule 
ECE & 0.97 & 0.98 & 1.01 & 0.97 & 1.28 & 1.29 \\
\midrule
Overhead & 8.86 & 3.80 & 2.53 & 1.28 & 1.28 & 1.28 \\
\bottomrule
\end{tabular}%
}
\end{sc}
\end{small}
\end{center}
\caption{This table displays the computation Overhead ($\%$) and ECE ($\%$) of ADH with different Calibration Period $k$ of ResNet-$50$ on CIFAR-10. }
\label{tab: complexity}
\vskip -0.1in
\end{table*}

\begin{figure}[!ht]
\vskip 0.2in
\centering
  \includegraphics[width=0.98\linewidth]{./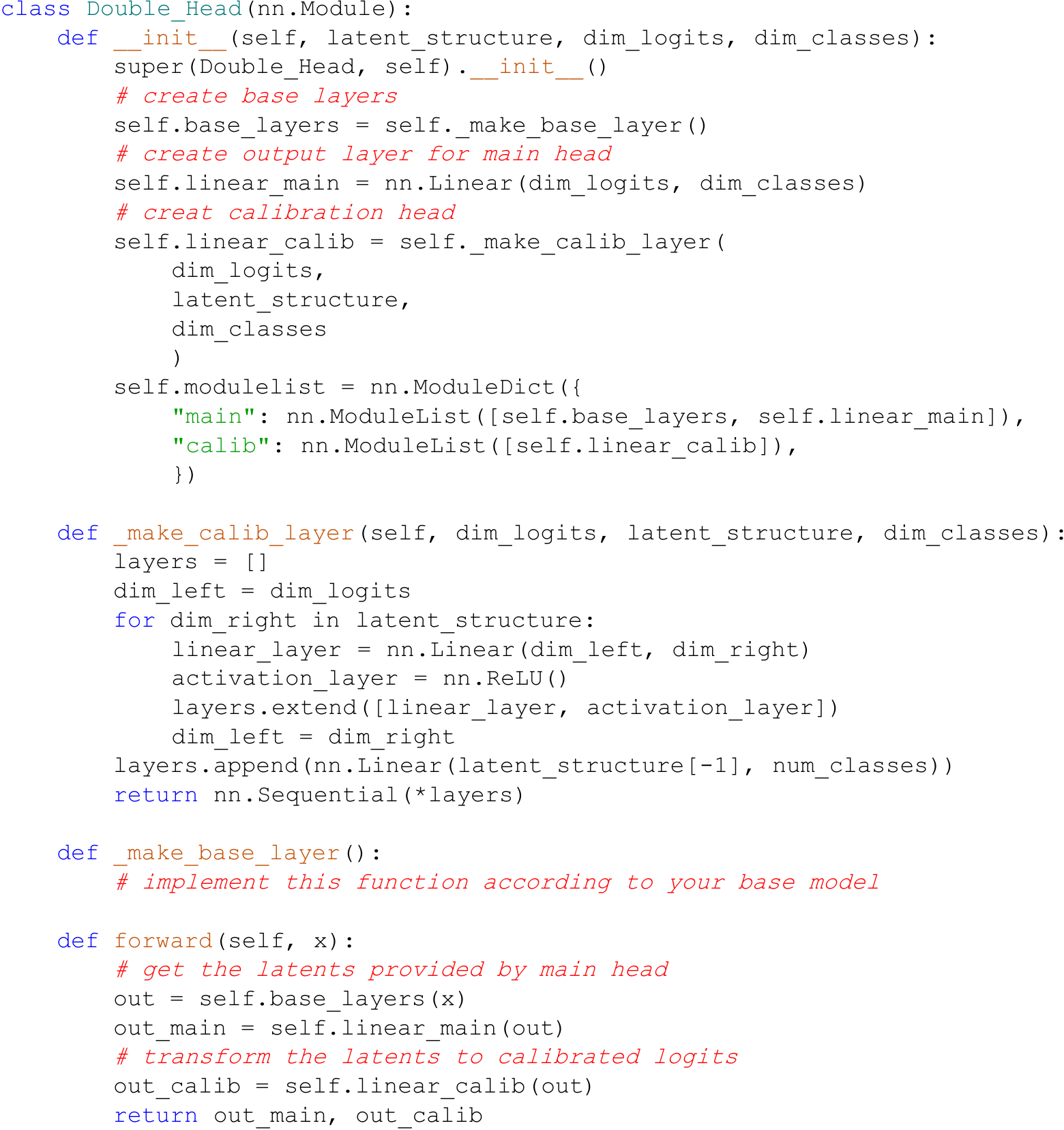}  
  \caption{A thirty-line Pytorch version of our Annealing Double-Head}
  \label{fig: code_adh}
\end{figure}

\section{Remark on ECE computation}
For the population construction of the confidence histogram, we have two options: aggregating all components, refered to as the all components method or just picking the largest component of each confidence vector, also known as the maximal component method. 
We report ECE generated from the two aforementioned implementations for completeness.
In Table \ref{tab: ece_max}, we display the ECE by maximal component method, meanwhile the ECE by all components method is presented in following Table \ref{tab: ece_all}.

\begin{table*}[!t]
\vskip 0.15in
\begin{centering}
\begin{small}
\begin{sc}
\resizebox{\textwidth}{!}{
\begin{tabular}{c c cc cc cc cc cc}
\toprule
\multirow{2}{*}{\textbf{Dataset}} & \multirow{2}{*}{\textbf{Model}} & \multicolumn{2}{c}{\textbf{Cross Entropy}} & \multicolumn{2}{c}{\textbf{MMCE}} & \multicolumn{2}{c}{\textbf{Brier Loss}} & \multicolumn{2}{c}{\textbf{Focal Loss}} & \multicolumn{2}{c}{\textbf{Double-Head}} \\
 &  & pre T & post T & pre T & post T & pre T & post T & pre T & post T & No Anneal & Anneal \\
\midrule
\multirow{4}{*}{CIFAR-10} & ResNet 50 & 8.43 & 1.64 (2.0) & 6.45 & 1.97 (1.6) & 4.95 & 3.10 (1.1) & 5.39 & 1.91 (1.3) & 2.70 & \bf{0.94} \\
& ResNet 101 & 8.04 & 1.38 (2.6) & 7.44 & 1.92 (2.2) & 5.36 & 1.97 (1.2) & 5.73 & 2.00 (1.3) & 2.16 & \bf{1.04} \\
& DenseNet 121 & 6.68 & 2.38 (1.5) & 6.13 & 2.26 (1.4) & 2.11 & 2.11 (1.0) & 5.28 & 2.28 (1.2) & 2.24 & \bf{1.88} \\
& Wide ResNet 28-10 & 6.58 & 2.08 (1.5) & 4.70 & 1.97 (1.4) & 2.02 & 2.02 (1.0) & 1.72 & 1.72(1.0) & 1.50 & \bf{1.35}\\
 \midrule
\multirow{4}{*}{CIFAR-100} & ResNet 50 & 16.15 & 7.29 (1.3) & 22.80 & 7.24 (1.5) & 10.39 & 6.31 (1.1) & 15.10 & 4.28 (1.2) & 2.52 & \bf{2.38}\\
& ResNet 101 & 20.39 & 6.58 (1.4) & 20.21 & 5.92 (1.4) & 8.50 & 8.50 (1.0) & 19.42 & 7.46 (1.2) & \bf{1.75} & 2.29\\%8.50(8.74)
& DenseNet 121 & 17.46 & 7.38 (1.2) & 17.45 & 7.66 (1.3) & 9.81 & 8.56 (1.1) & 12.15 & 6.14 (1.2) & 4.79 & \bf{2.62}\\
& Wide ResNet 28-10 & 8.41 & 8.41 (1.0) & 11.10 & 7.63 (1.1) & 7.92 & 7.92 (1.0) & 7.27 & \bf{2.62} (1.6) & 3.11 & 3.04\\ \midrule
\multirow{4}{*}{SVHN} & ResNet 50 & 5.26 & 1.61 (1.6) & 4.93 & \bf{1.18} (1.6) & 1.80 & 1.80 (1.0) & 3.86 & 2.24 (1.2) & 1.31 & 1.30\\
& ResNet 101 & 5.59 & 1.78 (1.8) & 5.66 & 1.72 (1.6) & 1.63 & 1.63 (1.0) & 4.43 & 1.47 (1.2) & \bf{0.91} & 1.04 \\
& DenseNet 121 & 3.94 & 1.76 (1.3) & 4.68 & 1.57 (1.4) & 2.75 & 1.88 (0.9) & 2.69 & 2.0 (1.1) & 2.61 & \bf{0.74}\\
& Wide ResNet 28-10 & 4.03 & 1.83 (1.4) & 3.75 & 1.76 (1.2) & 2.23 & 1.07 (0.9) & 1.57 & 1.57 (1.0) & 1.07 & \bf{0.87}\\ \midrule
%\multirow{2}{*}{ImageNet}
%& ResNet 50 &  & 2.74 (1.1) & 3.67 & 2.13 (1.1) & 2.82 & 2.57 (0.9) & 7.89 & 3.03 (0.7) & 4.13 & \bf{1.70}\\
%%& DenseNet 161 & & & & & & & - \\
%& ResNet 152 & 20.44 & 2.74 (1.1) & 10.53 & 2.15 (1.1) & 6.51 & 1.54 (1.1) & 11.71 & 2.34 (0.9) & 2.55 & 2.0\\
%\midrule
%\multirow{2}{*}{SST Binary} & TreeLSTM & 8.96 & 2.89 & 4.98 & 4.31 & 4.72 & 2.23 & ? & ? & 2.28 &  \\
%& GP CNN & 19.97 & 6.32 & & & 7.62 & 3.98 &  & & 1.42 & 0.92\\
SST Fine Grained & TreeLSTM & 5.53 & 2.80 (1.2) & 5.26 & 5.26 (1.0) & 10.15 & 6.44 (0.8) & 10.21 & 8.15 (0.8) & 2.28 & \bf{1.55}\\
SST Binary & GP CNN & 14.68 & 3.21 (1.7) & 8.74 & 3.19 (1.4) & 7.50 & 3.30 (1.1) & 4.02 & 3.34 (1.1) & \bf{1.42} & 2.0\\
20 Newsgroups & GP CNN & 6.90 & \bf{3.70} (1.1) & 6.75 & 5.18 (1.1) & 8.07 & 4.74 (0.8) & 11.03 & 6.62 (1.1) & 5.59 & 4.92\\ 
\bottomrule
\end{tabular}
}

%%%%%%%%%%%%%%%%%%%%%%%%%%%%%%%%%%%%%%
%% Calib Error by Calibration Head;
%% MMCE: 2.69, 3.37, 2.5, 1.29, 10.57, 7.43, 8.78, 3.29, 1.64, 1.17, 3.17, 1.06,   
%% Brier: vs: 2.78, 3.91, 1.98, 1.45, 5.73, 5.26, 6.01, 2.29, 1.54, 1.47, 1.48, 1.36, text: 6.53, ?, 5.88,
%% Focal: vs: 3.18, 3.16, 2.72, 1.94, 12.75, ?, 11.84, 1.60, 1.83, 2.14, ?, ?, text: 3.31, ?, 4.39  

%%%%%%%%%%%%%%%%%%%%%%%%%%%%%%%%%%%%%%
%% ce: 1. (), 2. (0.02, 0.001)
%% treelstm: best of ours is epoch 15;
%% textcnn_sst2 (epochs): best of ours is 50; mmce: 20; brier: 20; focal: 70;
%%%%%%%%%%%%%%%%%%%%%%%%%%%%%%
%% Tree: mmce: 4.52, brier: 4.97, focal: 3.31
%%%%%%%%%%%%%%%%%%%%%%%%%%%%%%
%% newsgroups (epochs): mmce:, ours: 49;
%%%%%%%%%%%%%%%%%%%%%%%%%%%%%%
%% sst2: mmce: , brier: ?, focal: , 
%%%%%%%%%%%%%%%%%%%%%%%%%%%%%
%%% imgnet: resnet50 and resnet152: ours: 50 epochs;
%%% 
\end{sc}
\end{small}
\end{centering}
\caption{ECE ($\%$) of multiple methods: CE, MMCE, Brier, FL, and ours with and without TS on different model and dataset combinations is reported. The used temperature is specified between brackets. In the last two columns, the ECE of our own approach without Annealing and with Annealing is displayed.}
\label{tab: ece_all}
\vskip -0.1in
\end{table*}

\end{document}